\newcommand{\xhdr}[1]{\vspace{1mm} \noindent{{\bf #1.}}}
\newcommand{\emin}{\Lambda_{\min}}
\newcommand{\emax}{\Lambda_{\max}}
\newcommand{\sdgeq}{\succcurlyeq}
\newcommand{\hes}{\nabla_{\theta} ^2\,}
\newcommand{\set}[1]{\{#1\}}
\newcommand{\hsample}{h}
\renewcommand{\mac}{m}
\newcommand{\given}{\,|\,}
\newcommand{\our}{\textsc{Differentiable Triage}{}}
\newcommand{\sdleq}{\preccurlyeq}
\newcommand{\bnm}[1]{\left\| #1 \right\|}
\begin{document}

\title{Differentiable Learning Under Triage}

\author[1]{Nastaran Okati}
\author[2]{Abir De}
\author[1]{Manuel Gomez-Rodriguez}
\affil[1]{%
{MPI for Software Systems, \{nastaran, manuelgr\}@mpi-sws.org}
}
\affil[2]{%
  {IIT Bombay, abir@cse.iitb.ac.in}
}
\date{}
\maketitle

\begin{abstract}
Multiple lines of evidence suggest that predictive models may benefit from algorithmic triage. Under algorithmic 
triage, a predictive model does not predict all instances but instead defers some of them to human experts. 
However, the interplay between the prediction accuracy of the model and the human experts under algorithmic 
triage is not well understood. In this work, we start by formally characterizing under which circumstances a predictive 
model may benefit from algorithmic triage. In doing so, we also demonstrate that models trained for full automation 
may be suboptimal under triage. Then, given any model and desired level of triage, we show that the optimal triage 
po\-li\-cy is a deterministic threshold rule in which triage decisions are derived deterministically by thresholding the 
difference between the model and human errors on a per-instance level. Building upon these results, we introduce 
a practical gradient-based algorithm that is guaranteed to find a sequence of predictive models and triage policies
of increasing performance. Experiments on a wide variety of supervised lear\-ning tasks using synthetic and real data 
from two im\-por\-tant applications---content moderation and scientific discovery---illustrate our theoretical results and 
show that the models and triage policies provided by our algorithm outperform those provided by several 
competitive baselines.
\end{abstract}

% \vspace{-2mm}
\section{Introduction} 
% \vspace{-2mm}
In recent years, there have been a raising interest on a new learning paradigm which seeks the de\-ve\-lop\-ment of predictive models 
that operate under different automation levels---models that take decisions for a given fraction of instances and leave the remaining 
ones to human experts.
This new paradigm has been so far referred to as learning under algorithmic triage~\citep{raghu2019algorithmic}, learning under human 
assistance~\citep{de2020aaai,de2021aaai}, learning to complement humans~\citep{wilder2020learning, bansal2020optimizing},
and learning to defer to an expert~\citep{sontag2020}.
Here, one does not only has to find a predictive model but also a triage policy which determines who predicts each instance.

% \nastaran{what is algorithmic triage} In recent years, supervised learning models which operate under algorithmic triage have 
% gained popularity~\citep{de2020aaai,de2021aaai, sontag2020, raghu2019algorithmic, wilder2020learning, bansal2020optimizing}. 
% Under algorithmic triage, the machine model only needs to predict a fraction of the instances and a human decision maker takes over the rest.

% \nastaran{motivation behind algorithmic triage}
%
The motivation that underpins learning under algorithmic triage is the observation that, while there are high-stake tasks where predictive models 
have matched, or even surpassed, the average performance of human experts~\citep{pradel2018deepbugs, topol2019high}, 
they are still less accurate than human experts on some instances, where they make far more errors than average~\citep{raghu2019algorithmic}.
The main promise is that, by working together, human experts and predictive models are likely to achieve a considerably better performance 
than each of them would achieve on their own.
While the above mentioned work has shown some success at fulfilling this promise, the interplay between the predictive accuracy of a predictive model
and its human counterpart under algorithmic triage is not well understood. 

% \manuel{I have polished the sentences below in the previous paragraph}
% The main promise of learning under algorithmic triage is that Designing machine learning models under algorithmic triage is motivated by the 
% observation that both machine models and human experts can beat each other in various tasks, and both have privileges over the 
% other~\citep{hinton2013icassp,hinton2012nips,silver2016mastering,sutskever2014nips,raghu2019algorithmic}. 
%
% These findings have initiated the discussion on designing models which are able to exploit the strengths and identify the weaknesses of 
% both human experts and machine models by deciding which samples should be assigned to machines and which ones should be delegated to humans.
%
% Moreover, in some cases, the performance of machine models operating under triage surpasses not only that of the machines operating under full 
% automation, but also that of human experts taking all of the decisions on their own~\citep{de2020aaai,de2021aaai,wilder2020learning}. This remark 
% further suggest that even in the cases where human and machine both achieve reasonably good performance, we may still benefit from algorithmic triage.

One of the main challenges in learning under algorithmic triage is that, for each potential triage policy, there is an optimal predictive model, however, the 
triage policy is also something one seeks to optimize, as first noted by~\cite{de2020aaai}.
In this context, previous work on learning under algorithmic triage can be naturally differentiated into two lines of work.
The first line of work has developed rather general heuristic algorithms that do not enjoy theoretical guarantees~\citep{raghu2019algorithmic, bansal2020optimizing, wilder2020learning}.
The second has developed algorithms with theoretical guarantees~\citep{de2020aaai, de2021aaai, sontag2020}, however, they have focused on more 
restrictive settings. 
More specifically,~\cite{de2020aaai, de2021aaai} focus on ridge regression and support vector machines and reduce the problem to the 
maximization of approximately submodular functions and~\cite{sontag2020} view the problem from the perspective of 
cost sensitive learning and introduce a convex and consistent surrogate loss of the objective function they consider.

\xhdr{Our contributions} 
Our starting point is a theoretical investigation of the interplay between the prediction accuracy of supervised learning models and human experts 
under algorithmic triage.
By doing so, we hope to better inform the design of general purpose techniques for training differentiable models under algorithmic triage.
%
% \manuel{You wrote these set of sentences below, but then you basically write the same thing as contributions in the itemize list}
% manage to find the optimal triage policy and need not to use heuristics or train additional models for the task of deferring. This finding makes our learning framework more robust and less 
% timely and complex. We also consider a more general and natural case where humans can be arbitrarily, rather than uniformly, accurate across the feature domain.
%
% Furthermore, our approach is applicable to any differentiable machine learning model for any supervised learning task. 
%
% Finally, our theoretical results as well as our framework are applicable for the case where there is a constraint on the number of samples which can be delegated to humans. In other words, 
% out algorithm is able to operate under different levels of triage.
%
% \manuel{I think each contribution by itself is too thin/specialized to appear as an item in an itemize list}
%
Our investigation yields the following insights: 
\begin{itemize}[leftmargin=0.8cm]
\item[I.] To find the optimal triage policy and predictive model, we need to take into account the amount of human expert disagreement,
or expert uncertainty, on a per-instance level.
% predictive model and triage We show that in order to have an unbiased estimate of the aggregate loss, we need to take into account the 
% amount of expert uncertainty per-instance, rather than average expert prediction which results in a biased estimate of the true loss.

% \manuel{The finding below helps us to derive the optimality of a threshold rule, however, by itself, it seems a rather obvious thing}
% We formally prove that as long as there is a subset of samples on which the performance of human surpasses that of the machine model 
% trained under full automation, the policy which assigns all of the samples to machine is suboptimal.

\item[II.] We identify under which circumstances a predictive model that is optimal under full automation may be suboptimal under a desired
level of triage. 

\item[III.] Given any predictive model and desired level of triage, the optimal triage policy is a deterministic threshold rule in which 
triage decisions are derived deterministically by thresholding the difference between the model and human errors on a per-instance
level.
% We consider different levels of triage and prove that the optimal triage policy is a deterministic threshold rule on the difference between the 
% machine error and the human error.
\end{itemize}
Building on the above insights, we introduce a practical gradient-based algorithm that finds a sequence of predictive models and triage policies
of increasing performance subject to a constraint on the maximum level of triage.
We apply our gradient-based algorithm in a wide variety of su\-per\-vised learning tasks using both synthetic and real-world data
from two important applications---content moderation and scientific discovery.
Our experiments illustrate our theoretical results and show that the models and triage policies provided by our algorithm outperform those provided
by several competitive baselines\footnote{\scriptsize Our code and data are available at \url{https://github.com/Networks-Learning/differentiable-learning-under-triage}}.

\xhdr{Further related work}
Our work is also related to the areas of learning to defer and active learning.
In learning to defer, the goal is to design machine learning models that are able to defer predictions~\citep{Madras2018PredictRI,bartlett2008classification,cortes2016learning,geifman2018bias,ramaswamy2018consistent,geifman2019selectivenet,liu2019deep,thulasidasan2019combating, ziyin2020learning}. 
Most previous work focuses on supervised learning and design classifiers that learn to defer either by considering the defer action as an 
additional label value or by training an independent classifier to decide about deferred decisions.
However, in this line of work, there are no human experts who make predictions whenever the classifiers defer them, in contrast with the 
literature on learning under algorithmic triage~\citep{raghu2019algorithmic, de2020aaai,de2021aaai, wilder2020learning, bansal2020optimizing, sontag2020}---they 
just pay a constant cost every time they defer predictions. Moreover, the classifiers are trained to predict the labels of all samples in the training set, as in full automation.
%
% \manuel{I find the sentence below very confusing. I would suggest we do not cite meresth2020learning here since they
% do not identify themselves as "learning to defer" and we just clarify that all models in the first paragraph
% of the intro, which includes sontag, consider a model of human.}
% \nastaran{The very recent notable exceptions are by ~\citep{meresht2020learning,sontag2020}, who tackle the 
% problem in a reinforcement learning setting, and trains a rejector for the task of deferral, respectively, both of which consider there is human decision maker taking decisions once they are deferred. }
%A very recent notable exception is by Meresht et al.~\citet{meresht2020learning}, who consider there is a human decision maker, however, they tackle the 
%problem in a reinforcement learning setting. 
% As a result, their formulation and assumptions are fundamentally different and their technical contributions are orthogonal to ours.
%
In active learning, the goal is to find which subset of samples one should label so that a model trained on these samples predicts accurately
\emph{any} sample at test time~\citep{cohn1995active,hoi2006batch, sugiyama2006active, willett2006faster, guo2008discriminative, sabato2014active, chen2017active,hashemi2019submodular}. 
In contrast, in our work, the trained model only needs to predict accurately \emph{a fraction} of samples picked by the triage policy at test time and rely 
on human experts to predict the remaining samples.
\section{Supervised Learning under Triage} 
\label{sec:formulation}
% \vspace{-2mm}
Let $\Xcal \subseteq \RR^{m}$ be the feature domain, $\Ycal$ be the label domain, and assume features and labels
are sampled from a ground truth distribution $P(\xb, y) = P(\xb) P(y \given \xb)$.
Moreover, let $\hat{y} = h(\xb)$ be the label predictions provided by a human expert and assume they are sampled from
a distribution $P(h \given \xb)$, which models the disagreements amongst experts~\citep{raghu2019direct}.
Then, in supervised learning under triage, one needs to find: 
\begin{itemize}[leftmargin=0.8cm]
\item[(i)] a triage policy $\pi(\xb) \,:\, \Xcal \rightarrow \{ 0, 1 \}$, which determines who predicts each feature vector---a supervised learning 
model ($\pi(\xb) = 0$) or a human expert ($\pi(\xb) = 1$);
\item[(ii)] a predictive model $m(\xb) \,:\, \Xcal \rightarrow \Ycal$, which needs to provide label predictions $\hat{y} = m(\xb)$ for those feature vectors $\xb$
for which $\pi(\xb) = 0$.
\end{itemize}
Here, similarly as in standard supervised learning, we look for the triage policy and the predictive model that result into the most accurate label predictions by 
minimizing a loss function $\ell(\hat{y}, y)$. More formally, let $\Pi$ be the set of all triage policies, then, given a hypothesis class of predictive models $\Mcal$, 
our goal is to solve the following minimization problem\footnote{\scriptsize One might think that minimizing over the set of all possible triage policies is not a well-posed 
problem, \ie, the optimal triage policy is an extremely complex function. However, our theoretical analysis will reveal that the optimal triage policy does
have a simple form and its complexity depends on the considered hypothesis class of predictive models (refer to Theorem~\ref{theorem:optimal-policy}).}:
\begin{equation} \label{eq:optimization-problem}
% \begin{split}
\underset{\pi \in \Pi, m \in \Mcal}{\text{minimize}} \quad  L(\pi, m)  \qquad
% \manuel{adding a constraint will just change the threshold in Proposition 2, but I think it is
% tractable}
\text{subject to} \quad \EE_{\xb} [\pi(\xb)] \leq b
% \end{split}
\end{equation}
where $b$ is a given parameter that limits the level of triage, \ie, the percentage of samples human experts 
need to provide predictions for, and
\begin{equation} \label{eq:joint-loss}
 L(\pi, m)  = \EE_{\xb, y, h}\left[ (1-\pi(\xb)) \, \ell(\mac(\xb), y) + \pi(\xb) \,   \ell(h, y) \right].
\end{equation}
Here, one might think of replacing $h$ with its point estimate $\mu_h(\xb) = \argmin_{\mu_h} \EE_{h | \xb}[\ell'(h, \mu_h)]$, where $\ell'(\cdot)$ is a general loss 
function. % , \eg, quadratic loss in a regression task.
However, the re\-sul\-ting objective would have a bias term, as formalized by the following proposition\footnote{\scriptsize All proofs can be found in Appendix~\ref{app:proofs}.} for the quadratic loss $\ell'(h, \mu _h) = (h - \mu _h)^2$:
\begin{proposition} \label{prop:unbiased-biased}
Let $\ell'(\hat{y}, y) =  (\hat{y}- y)^2$  and assume there exist $\xb \in \Xcal$ for which the distribution of human 
predictions $P(h \given \xb)$ is not a point mass. Then, the function 
\begin{equation*}
\overline{L}(\pi, \mac) = \EE_{\xb, y} \left[ (1-\pi(\xb)) \, \ell'(\mac(\xb), y) + \pi(\xb) \,   \ell'( \mu_h(\xb), y) \right]  
% \label{eq:biased-L}
\end{equation*}
is a biased estimate of the true average loss defined in Eq.~\ref{eq:joint-loss}.
\end{proposition}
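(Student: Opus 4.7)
The plan is to use the bias--variance decomposition of the squared loss to isolate the discrepancy between $L(\pi, m)$ and $\overline{L}(\pi, m)$, and then argue that this discrepancy is strictly positive whenever the triage policy places mass on the region where $P(h\given \xb)$ is non-degenerate.

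First I would unpack the inner expectation over $h$ in the second summand of $L(\pi, m)$. For squared loss, the minimizer $\mu_h(\xb) = \argmin_{\mu}\EE_{h\given\xb}[(h-\mu)^2]$ equals the conditional mean $\EE[h\given\xb]$, so a standard bias--variance rearrangement gives
\begin{equation*}
\EE_{h\given\xb}\!\left[(h-y)^2\right] = \left(\mu_h(\xb)-y\right)^2 + \mathrm{Var}(h\given\xb).
\end{equation*}
Substituting this identity back into Eq.~\ref{eq:joint-loss}, the expert term in $L(\pi,m)$ splits exactly into $\pi(\xb)\,\ell'(\mu_h(\xb),y)$ plus $\pi(\xb)\,\mathrm{Var}(h\given\xb)$, which already matches $\overline{L}(\pi,m)$ apart from a residual variance term.

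Next I would take expectations over $\xb, y$ on both sides and observe that the model term $(1-\pi(\xb))\,\ell'(m(\xb),y)$ is common to $L$ and $\overline{L}$, so it cancels. This yields the clean identity
\begin{equation*}
L(\pi, m) - \overline{L}(\pi, m) = \EE_{\xb}\!\left[\pi(\xb)\,\mathrm{Var}(h\given\xb)\right],
\end{equation*}
which is the structural heart of the proposition.

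Finally, to establish strict bias, I would invoke the hypothesis that there exists $\xb\in\Xcal$ for which $P(h\given\xb)$ is not a point mass, so that $\mathrm{Var}(h\given\xb)>0$ on a set of positive measure under $P(\xb)$. Choosing any triage policy $\pi$ that is positive on this set (e.g.\ $\pi \equiv 1$, which is a valid element of $\Pi$) makes the right--hand side strictly positive, and hence $\overline{L}(\pi, m) \neq L(\pi, m)$ for that $\pi$ and any $m\in\Mcal$. The only subtle point, and the one I would be most careful about, is formalizing what ``biased'' means here---namely exhibiting a concrete $(\pi, m)$ on which the two functionals differ, rather than merely having a nonzero difference ``on average.'' The bias--variance step itself is a one--line calculation and poses no real obstacle.
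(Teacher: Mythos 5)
Your proof is correct and follows essentially the same route as the paper: both reduce to the pointwise fact that $\EE_{h\given\xb}\left[(h-y)^2\right] > (\mu_h(\xb)-y)^2$ whenever $P(h\given\xb)$ is not a point mass---the paper obtains this from strict Jensen's inequality, while you obtain it from the exact bias--variance identity, which additionally quantifies the gap as $\EE_{\xb}\left[\pi(\xb)\,\mathrm{Var}(h\given\xb)\right]$. Your closing caveat---that one must exhibit a $\pi$ placing mass on a positive-measure set of non-degenerate $\xb$ for the strict inequality to hold---is well taken, and is a point the paper's one-line proof glosses over as well.
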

The above result implies that, to find the optimal triage policy and predictive model, we need to take into account the amount of expert 
disagreement, or expert uncertainty, on each feature vector $\xb$ rather than just an average expert prediction.

% function $\overline{L}(\pi, \mac, \hum)$, as defined in Eq.~\ref{eq:biased-L}, contains a bias term.
% Let loss functions $L(\bullet)$ and $\overline{L}(\bullet)$ are defined  in Eq.~\ref{eq:unbiased-L} and Eq.~\ref{eq:biased-L} respectively and $\ell(\bullet)$ is a convex loss. Then we 
% have,
% \begin{enumerate}
%  \item $L(\pi, \mac, \hum)$ is an unbiased estimator of $L(d,\mac(\xb),h(\xb),y)$.
%  \item  $\overline{L}(\pi, \mac, \hum) $ as defined in Eq.~\ref{eq:biased-L} contains a biased term.
% \end{enumerate}
% \end{proposition}

%
% \vspace{-2mm}
\section{On the Interplay Between Prediction Accuracy and Triage}
\label{sec:policy}
% \vspace{-2mm}
Let $\mac^{*}_{0}$ be the optimal predictive model under full automation, \ie, $\mac^{*}_{0} = \argmin_{\mac \in \Mcal} L(\pi_0, \mac)$,
%
% \begin{equation*}
%    \mac^{*}_{0} = \argmin_{\mac \in \Mcal} L(\pi_0, \mac).
% \end{equation*}
%
where $\pi_{0}(\xb) = 0$ for all $\xb \in \Xcal$.
Then, the following proposition tells us that, if the predictions made by $\mac^{*}_{0}$ are less accurate than those 
by human experts on some instances, the model will always benefit from algorithmic triage:
\begin{proposition} \label{prop:full-automation}
If there is a subset $\Vcal \subset \Xcal$ of positive measure under $P$ such that 
% $\int_{x \in \Vcal} \, dP \leq b$ and
%
\begin{equation*}
    \int_{\xb \in \Vcal} \EE_{y | \xb}\left[ \ell(\mac^{*}_{0} (\xb), y) \right] \, dP > \int_{\xb \in \Vcal} \EE_{y, h | \xb} \left[ \ell(\hsample, y) \right] \, dP,
\end{equation*}
then there exists a nontrivial triage policy $\pi \neq \pi_0$ such that $L(\pi, \mac^{*}_{0}) < L(\pi_0, \mac^{*}_{0})$.
\end{proposition}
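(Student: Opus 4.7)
The plan is to produce an explicit witness triage policy by using the indicator of the set $\Vcal$ itself, and then check that the difference $L(\pi_0,\mac^{*}_{0}) - L(\pi,\mac^{*}_{0})$ is exactly the quantity on which the hypothesis places a strict positivity assumption. So I would define
\[
\pi(\xb) = \mathbb{1}[\xb \in \Vcal],
\]
and note that $\pi \neq \pi_0$ because $P(\Vcal) > 0$.

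Next I would expand both $L(\pi,\mac^{*}_{0})$ and $L(\pi_0,\mac^{*}_{0})$ using the definition in Eq.~\ref{eq:joint-loss}, splitting the outer expectation over $\xb$ according to whether $\xb \in \Vcal$ or not. On the complement of $\Vcal$, both policies agree (the model predicts in both cases), so those contributions cancel. On $\Vcal$, the $\pi_0$ expression contributes $\int_{\xb \in \Vcal} \EE_{y,h|\xb}[\ell(\mac^{*}_{0}(\xb), y)]\, dP$ while the $\pi$ expression contributes $\int_{\xb \in \Vcal} \EE_{y,h|\xb}[\ell(h, y)]\, dP$.

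I would then use the elementary observation that $\ell(\mac^{*}_{0}(\xb), y)$ does not depend on $h$, so the inner expectation $\EE_{y,h|\xb}[\ell(\mac^{*}_{0}(\xb), y)]$ collapses to $\EE_{y|\xb}[\ell(\mac^{*}_{0}(\xb), y)]$. This brings the difference $L(\pi_0,\mac^{*}_{0}) - L(\pi,\mac^{*}_{0})$ into exactly the form of the hypothesis inequality, and hence it is strictly positive, yielding $L(\pi,\mac^{*}_{0}) < L(\pi_0,\mac^{*}_{0})$.

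There is no real obstacle: the argument is essentially a rearrangement. The only subtlety worth flagging is that optimality of $\mac^{*}_{0}$ is never used — the same construction shows that any model with a region where humans do strictly better in average loss strictly benefits from deferring on that region. (I would also note briefly that the budget constraint in Eq.~\ref{eq:optimization-problem} is not invoked in the proposition's conclusion, so no additional care is needed to enforce $\EE_{\xb}[\pi(\xb)] \leq b$; if desired, one can restrict to a measurable subset $\Vcal' \subseteq \Vcal$ on which the integrated inequality still holds and whose $P$-measure is at most $b$, e.g.\ by continuity of the measure along a nested exhaustion of $\Vcal$.)
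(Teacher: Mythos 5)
Your proof is correct and matches the paper's argument exactly: both take $\pi(\xb) = \II(\xb \in \Vcal)$, observe that the two losses differ only on $\Vcal$ (where $\pi_0$ contributes the model's conditional loss and $\pi$ the human's), and invoke the hypothesis for the strict inequality. (Your closing aside about shrinking $\Vcal$ to meet the budget $b$ would need you to first restrict to the subset of $\Vcal$ where the pointwise difference of conditional losses is positive, since an arbitrary nested exhaustion of $\Vcal$ need not preserve the integrated inequality; but neither the proposition nor the paper's proof invokes the budget constraint, so this does not affect correctness.)
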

Moreover, if we rewrite the average loss as 
 \begin{equation*}
     L(\pi, \mac) =\EE_{\xb} \big[ (1-\pi(\xb)) \, \EE_{y | \xb}[\ell(\mac(\xb), y) ] + \pi(\xb) \, \EE_{y, \hsample | \xb} \left[ \ell(h, y) \right] \big], % \label{eq:avg-loss-x},
 \end{equation*}
it becomes apparent that, for any model $\mac \in \Mcal$, the optimal triage policy $\pi_{\mac}^{*} = \argmin_{\pi \in \Pi} L(\pi, \mac)$ is a deterministic 
threshold rule in which triage decisions are derived by thresholding the difference between the model and human loss on a per-instance level.
Formally, we have the following Theorem:
\begin{theorem} \label{theorem:optimal-policy}
Let $\mac \in \Mcal$ be any fixed predictive model. Then, the optimal triage policy that minimize the loss $L(\pi, \mac)$ subject to a constraint
$\EE_{\xb} [\pi(\xb)] \leq b$ on the maximum level of triage is given by:
\begin{equation} \label{eq:optimal-triage-policy}
\pi_{\mac,b}^{*}(\xb) =
\begin{cases}
1 & \text{if} \,\, \EE_{y | \xb}\left[\ell(\mac(\xb), y)- \EE_{\hsample | \xb}\left[\ell(\hsample, y) \right] \right] > t_{P, b, m} \\
0 & \text{otherwise,}
\end{cases}
\end{equation}
%
%Under a constraint $\EE_{\xb \sim P(\xb)} [\pi_{\mac}^{*}(d=1 \given \xb)] \leq b$ on the percentage of samples humans need to predict, it is given by:
%%
%\begin{equation} \label{eq:optimal-decision-constrained}
%\pi_{\mac}^{*}(\xb) =
%\begin{cases}
%\humlabel & \text{if} \,\, \EE_{y}[\ell(\mac(\xb), y)] - \EE_{y, \hsample}\left[\ell(\hsample, y)\right] > t_{P, b, m} \\
%\maclabel & \text{otherwise}
%\end{cases}
%\end{equation}
%
where
%
%\label{eq:tpbm}
$t_{P, b, m} % &= \underset{\tau \ge 0}{\text{argmax}}\,\, \EE_{\xb} \left[ \min \left( \EE_{y} \left[ \EE_{\hsample}[ \ell(\hsample, y) ] - \ell(\mac(\xb), y) \right] + \tau, \, 0\right)  - \tau\,b \right] \\
    = \underset{\tau \ge 0}{\text{argmin}}\,\, \EE_{\xb} \big[ \tau \,b + \max \left( \EE_{y|\xb} \left[ \ell(\mac(\xb), y) 
    -  \EE_{\hsample | \xb}[\ell(\hsample, y)] \right] - \tau, \, 0\right) \big]$.
\end{theorem}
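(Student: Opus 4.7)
The plan is to reduce the constrained problem to a per-instance comparison via Lagrangian duality. First, by the tower property,
\begin{equation*}
L(\pi, m) = C - \EE_{\xb}\bigl[\pi(\xb)\, d(\xb)\bigr],
\end{equation*}
where $C := \EE_{\xb, y}[\ell(m(\xb), y)]$ does not depend on $\pi$ and $d(\xb) := \EE_{y | \xb}[\ell(m(\xb), y) - \EE_{h | \xb}[\ell(h, y)]]$ is exactly the per-instance loss gap that appears inside the indicator of Eq.~\ref{eq:optimal-triage-policy}. Minimizing $L(\pi, m)$ subject to $\EE_{\xb}[\pi(\xb)] \leq b$ is therefore equivalent to maximizing $\EE_{\xb}[\pi(\xb)\, d(\xb)]$ over $\pi \in \Pi$ under the same budget constraint.

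Next, I would attach a multiplier $\tau \geq 0$ to the budget. For every feasible $\pi$ and every $\tau \geq 0$,
\begin{equation*}
L(\pi, m) \geq C + \EE_{\xb}\bigl[\pi(\xb)(\tau - d(\xb))\bigr] - \tau b.
\end{equation*}
Since the integrand is linear in $\pi(\xb) \in \{0, 1\}$, the pointwise minimizer is $\pi(\xb) = \mathbf{1}[d(\xb) > \tau]$ (with ties resolved to $0$), and plugging this back produces the dual function
\begin{equation*}
g(\tau) = C - \tau b - \EE_{\xb}\bigl[\max(d(\xb) - \tau,\, 0)\bigr].
\end{equation*}
Maximizing $g$ over $\tau \geq 0$ is precisely the minimization defining $t_{P, b, m}$, so the stated threshold is the optimal dual variable.

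To close the loop, I would set $\tau^{*} = t_{P,b,m}$ and $\pi^{*}(\xb) = \mathbf{1}[d(\xb) > \tau^{*}]$, and verify three things: (i) primal feasibility $\EE_{\xb}[\pi^{*}(\xb)] \leq b$; (ii) $L(\pi^{*}, m) = g(\tau^{*}) + \tau^{*}(b - \EE_{\xb}[\pi^{*}(\xb)])$ by construction of $\pi^{*}$ as the pointwise Lagrangian minimizer; and (iii) complementary slackness $\tau^{*}(b - \EE_{\xb}[\pi^{*}(\xb)]) = 0$. Items (i) and (iii) both follow from the first-order optimality condition for $t_{P, b, m}$, which for the piecewise-linear convex $g$ reduces to $P(d(\xb) > t_{P,b,m}) \leq b \leq P(d(\xb) \geq t_{P,b,m})$; combined with weak duality this certifies optimality. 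The main obstacle is the tie set $T = \{\xb : d(\xb) = t_{P, b, m}\}$ when $P(T) > 0$ and $\tau^{*} > 0$: there the Lagrangian is indifferent in $\pi$, so to make complementary slackness exact one may need to saturate the budget by assigning $\pi^{*} = 1$ on a subset of $T$ of appropriate measure. Generically $P(T) = 0$ and the rule in Eq.~\ref{eq:optimal-triage-policy} is already optimal; otherwise the argument still goes through by modifying $\pi^{*}$ on $T$ without changing the objective value.
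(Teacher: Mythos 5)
Your proposal follows essentially the same route as the paper's proof: decompose $L(\pi,m)$ into a $\pi$-independent term plus $\EE_{\xb}[\pi(\xb)(\,\cdot\,)]$, dualize the budget constraint with a multiplier $\tau \ge 0$, solve the inner problem pointwise to get the threshold rule, and identify the optimal $\tau$ with $t_{P,b,m}$. In fact you are somewhat more careful than the paper, which stops at the dual formulation without verifying that the duality gap closes; your check of primal feasibility, complementary slackness, and the tie set $\{\xb : d(\xb) = t_{P,b,m}\}$ is a genuine strengthening rather than a deviation.
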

Then, if we plug in Eq.~\ref{eq:optimal-triage-policy} into Eq.~\ref{eq:optimization-problem}, we can rewrite our minimization problem as:
\begin{equation} \label{eq:optimization-problem-2}
\underset{\mac \in \Mcal}{\text{minimize}} \quad  L(\pi^{*}_{\mac,b}, \mac)
\end{equation}
where
\begin{equation} \label{eq:loss-2}
    L(\pi^{*}_{\mac,b}, \mac) = \EE_{\xb}\big[ \EE_{y | \xb}[\ell(\mac(\xb), y) ]  - \textsc{Thres}_{t_{P, b, m}}\left( \EE_{y | \xb} \left[ \ell(\mac(\xb), y) - \EE_{\hsample | \xb}[\ell(\hsample, y)] \right], 0 \right) \big] 
\end{equation}
with
%
% \begin{align*}
$\textsc{Thres}_{t}(x,\text{val}) = 
    \begin{cases}
x & \text{if} \,\, x > t \\
\text{val} & \,\, \text{otherwise.}
\end{cases}$
% \end{align*}

Here, note that, in the unconstrained case, $t_{P, b, m} = 0$ and $\textsc{Thres}_{0}(x, 0) = \max(x, 0)$.
%
%Next, building on the above expression, we can identify the circumstances under which the optimal predictive model 
%under full automation $m_{\theta^{*}_0}$ within a hypothesis class of parameterized predictive models $\Mcal(\Theta)$ 
%is suboptimal under algorithmic triage.

% \nastaran{Next, building on the above expression, we prove that as long as the optimal predictive model 
% under full automation, $m_{\theta^{*}_0}$, is not able to perfectly predict the samples which are assigned to human under the optimal triage policy under full automation, $\pi^{*}_{m_{\theta^{*}_0}, b}$, we have that $m_{\theta^{*}_0}$ is suboptimal under triage. Formally, our main result is the following Proposition:}
% (proven in Appendix~\ref{app:prop:suboptimality}):
%
Next, building on the above expression, we prove that the optimal predictive model under full automation $m_{\theta^{*}_0}$ is suboptimal under 
algorithmic triage as long as the average gradient across the subset of samples which are assigned to the human under the corresponding optimal 
triage policy $\pi^{*}_{m_{\theta^{*}_0}, b}$ is not zero.
%
% of the corresponding loss of the predic does not sum up to zero within a subset of instances.}
Formally, our main result is the following Proposition:
\begin{proposition} \label{prop:suboptimality}
Let $m_{\theta^{*}_0}$ be the optimal predictive model under full automation within a hypothesis class of 
parameterized models $\Mcal(\Theta)$, $\pi^{*}_{m_{\theta^{*}_0}, b}$ the optimal triage policy for $m_{\theta^{*}_0}$ 
defined in Eq.~\ref{eq:optimal-triage-policy} for a given maximum level of triage $b$, and 
$\Vcal = \{ \xb \given \pi^{*}_{m_{\theta^{*}_0,b}}(\xb) = 1 \}$. If
\begin{equation} \label{eq:gradient-condition}
    \int_{\xb \in \Vcal} \EE_{y|\xb}\left[ \left. \nabla_{\theta} \ell(\mac_{\theta}(\xb), y) \right|_{\theta = \theta^{*}_0} \right] \, dP \neq \bm{0}.
\end{equation}
then it holds that $L(\pi^{*}_{m_{\theta^{*}_0},b}, m_{\theta^{*}_0}) > \min_{\theta \in \Theta} L(\pi^{*}_{\mac_{\theta},b}, \mac_{\theta})$.
\end{proposition}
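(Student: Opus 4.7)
The plan is to produce a small perturbation of $\theta^*_0$ that strictly lowers the objective $L(\pi^{*}_{\mac_{\theta},b}, \mac_{\theta})$, using the policy $\pi^{*}_{m_{\theta^{*}_0},b}$ as a fixed ``witness'' policy along the perturbation. The first step is the envelope-style inequality
\begin{equation*}
L(\pi^{*}_{\mac_{\theta},b}, \mac_{\theta}) \;\leq\; L(\pi^{*}_{m_{\theta^{*}_0},b}, \mac_{\theta}) \qquad \text{for every } \theta \in \Theta,
\end{equation*}
which is immediate from Theorem~\ref{theorem:optimal-policy}, since the left-hand side uses the policy that is optimal for $\mac_{\theta}$. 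Consequently, to establish $\min_{\theta} L(\pi^{*}_{\mac_{\theta},b}, \mac_{\theta}) < L(\pi^{*}_{m_{\theta^{*}_0},b}, m_{\theta^{*}_0})$ it suffices to exhibit a direction in $\Theta$ along which the right-hand side, viewed as a function of $\theta$ with the policy frozen, strictly decreases at $\theta = \theta^{*}_0$.

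Next, I would differentiate the frozen-policy objective. Using Eq.~\ref{eq:joint-loss} with $\pi = \pi^{*}_{m_{\theta^{*}_0},b}$ and noting that the human-loss term $\pi(\xb)\,\EE_{y,\hsample|\xb}[\ell(\hsample,y)]$ does not depend on $\theta$, interchanging differentiation and expectation under standard regularity on $\ell$ and $\Mcal(\Theta)$ gives
\begin{equation*}
\left. \nabla_{\theta} L(\pi^{*}_{m_{\theta^{*}_0},b}, \mac_{\theta}) \right|_{\theta = \theta^{*}_0} \;=\; \int_{\xb \notin \Vcal} \EE_{y|\xb}\!\left[\left. \nabla_{\theta}\ell(\mac_{\theta}(\xb), y)\right|_{\theta = \theta^{*}_0}\right] dP,
\end{equation*}
because $1 - \pi^{*}_{m_{\theta^{*}_0},b}(\xb)$ is exactly the indicator of $\Xcal \setminus \Vcal$. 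Now, the first-order optimality of $\theta^{*}_0$ for full automation, $\theta^{*}_0 = \argmin_{\theta}\EE_{\xb,y}[\ell(\mac_\theta(\xb),y)]$, implies that the analogous integral over all of $\Xcal$ vanishes. Splitting $\Xcal = \Vcal \cup (\Xcal \setminus \Vcal)$ therefore lets me rewrite the displayed gradient as the \emph{negative} of the integral appearing in the hypothesis~(\ref{eq:gradient-condition}), which is nonzero by assumption.

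With a nonzero gradient in hand, a first-order Taylor expansion of $\theta \mapsto L(\pi^{*}_{m_{\theta^{*}_0},b}, \mac_{\theta})$ along the negative-gradient direction yields some $\theta' \in \Theta$ arbitrarily close to $\theta^{*}_0$ with $L(\pi^{*}_{m_{\theta^{*}_0},b}, \mac_{\theta'}) < L(\pi^{*}_{m_{\theta^{*}_0},b}, m_{\theta^{*}_0})$. Chaining this strict decrease with the envelope inequality from the first paragraph applied at $\theta'$ gives $L(\pi^{*}_{\mac_{\theta'},b}, \mac_{\theta'}) < L(\pi^{*}_{m_{\theta^{*}_0},b}, m_{\theta^{*}_0})$, which is the desired conclusion. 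The main obstacle I anticipate is the non-smoothness introduced by the hard-threshold optimal policy: $\theta \mapsto L(\pi^{*}_{\mac_{\theta},b}, \mac_{\theta})$ need not be differentiable in $\theta$, so a direct envelope-theorem argument on the joint objective is unavailable. The workaround described above---freezing the policy to $\pi^{*}_{m_{\theta^{*}_0},b}$ to obtain a smooth upper bound, and using the full-automation first-order condition to express the partial gradient over $\Xcal \setminus \Vcal$ as minus the partial gradient over $\Vcal$---is the technical crux; the remaining interchange-of-limits and Taylor expansion steps are routine under mild regularity.
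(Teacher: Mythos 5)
Your proof is correct, and it arrives at the same key vector as the paper's proof---the quantity $-\int_{\xb \in \Vcal} \EE_{y|\xb}\left[ \left. \nabla_{\theta} \ell(\mac_{\theta}(\xb), y) \right|_{\theta = \theta^{*}_0} \right] dP$---but by a genuinely different and more careful route. The paper differentiates the composite objective $\theta \mapsto L(\pi^{*}_{\mac_{\theta},b}, \mac_{\theta})$ directly through the $\textsc{Thres}$ representation of Eq.~\ref{eq:loss-2}, implicitly treating the threshold $t_{P,b,m}$ and the set $\Vcal$ as locally constant in $\theta$, and then concludes from the nonzero gradient that $\theta^{*}_0$ is not a minimizer. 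You instead freeze the policy at $\pi^{*}_{m_{\theta^{*}_0},b}$, note that the frozen-policy loss is a smooth function of $\theta$ that upper-bounds $L(\pi^{*}_{\mac_{\theta},b}, \mac_{\theta})$ everywhere (since $\pi^{*}_{m_{\theta^{*}_0},b}$ is feasible for the budget constraint) and coincides with it at $\theta = \theta^{*}_0$, compute its gradient there via the full-automation first-order condition, and chain a descent step with the envelope inequality at the perturbed point $\theta'$. What your version buys is that it never has to differentiate the non-smooth map $\theta \mapsto L(\pi^{*}_{\mac_{\theta},b}, \mac_{\theta})$; this is precisely the point the paper's proof glosses over, since it silently assumes a Danskin-type interchange and ignores the $\theta$-dependence of both $t_{P,b,m}$ and the indicator set. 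Both arguments still rest on the same residual regularity assumptions: $\theta^{*}_0$ must be an interior stationary point of the full-automation loss, differentiation must commute with the expectations, and a point $\theta'$ along the descent direction must remain in $\Theta$.
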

Finally, we can also identify the circumstances under which any predictive model $m_{\theta'}$ within a hypothesis class of parameterized 
predictive models $\Mcal(\Theta)$ is suboptimal under algorithmic triage:
\begin{proposition} \label{prop:suboptimality-2}
Let $m_{\theta'}$ be a predictive model within a hypothesis class of parameterized models $\Mcal(\Theta)$, $\pi^{*}_{m_{\theta'},b}$ the optimal 
triage policy for $m_{\theta'}$ defined in Eq.~\ref{eq:optimal-triage-policy} for a given maximum level of triage $b$, and 
$\Vcal = \{ \xb \given \pi^{*}_{m_{\theta'},b}(\xb) = 0 \}$. If
\begin{equation} \label{eq:gradient-condition}
    \int_{\xb \in \Vcal} \EE_{y|\xb}\left[ \left. \nabla_{\theta} \ell(\mac_{\theta}(\xb), y) \right|_{\theta = \theta'} \right] \, dP \neq \bm{0}.
\end{equation}
then it holds that $L(\pi^{*}_{m_{\theta'},b}, m_{\theta'}) > \min_{\theta \in \Theta} L(\pi^{*}_{\mac_{\theta},b}, \mac_{\theta})$.
\end{proposition}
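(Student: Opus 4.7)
The plan is to follow the same fixed-policy template as Proposition~\ref{prop:suboptimality}: I would hold the triage policy fixed at $\pi^{*}_{m_{\theta'},b}$ and view $L(\pi^{*}_{m_{\theta'},b}, \mac_\theta)$ as a smooth function of $\theta$ alone; the hypothesized nonzero gradient then furnishes a direction in $\Theta$ along which the loss strictly decreases, while the optimality of $\pi^{*}_{\mac_\theta,b}$ at each $\theta$ converts this into a strict improvement of the joint objective.

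Concretely, since $\pi^{*}_{m_{\theta'},b}(\xb) = 0$ exactly on $\Vcal$, I would decompose
\[
L(\pi^{*}_{m_{\theta'},b}, \mac_\theta) = \int_{\xb \in \Vcal} \EE_{y|\xb}[\ell(\mac_\theta(\xb), y)] \, dP + \int_{\xb \notin \Vcal} \EE_{\hsample, y | \xb}[\ell(\hsample, y)] \, dP,
\]
where only the first term depends on $\theta$. A standard interchange of differentiation and integration identifies the gradient of this term at $\theta = \theta'$ with the integral in Eq.~\ref{eq:gradient-condition}, which is nonzero by assumption. Hence $\theta'$ is not a stationary point of $\theta \mapsto L(\pi^{*}_{m_{\theta'},b}, \mac_\theta)$, so a sufficiently small step $\theta''$ in $\Theta$ against this gradient yields
\[
L(\pi^{*}_{m_{\theta'},b}, \mac_{\theta''}) < L(\pi^{*}_{m_{\theta'},b}, \mac_{\theta'}).
\]

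To close the argument, I would invoke Theorem~\ref{theorem:optimal-policy}: $\pi^{*}_{\mac_{\theta''},b}$ minimizes $\pi \mapsto L(\pi, \mac_{\theta''})$ subject to the triage constraint, and $\pi^{*}_{m_{\theta'},b}$ is by construction feasible for that same constraint. Chaining the two inequalities gives
\[
\min_{\theta \in \Theta} L(\pi^{*}_{\mac_\theta,b}, \mac_\theta) \le L(\pi^{*}_{\mac_{\theta''},b}, \mac_{\theta''}) \le L(\pi^{*}_{m_{\theta'},b}, \mac_{\theta''}) < L(\pi^{*}_{m_{\theta'},b}, \mac_{\theta'}),
\]
which is the claim.

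The main obstacle is essentially bookkeeping: ensuring the descent direction stays inside $\Theta$ (routine if $\Theta$ is open at $\theta'$, consistent with the paper's gradient-based perspective) and justifying the Leibniz-style exchange of gradient and expectation. There is no deeper conceptual difficulty beyond Proposition~\ref{prop:suboptimality}, since the gradient condition in Eq.~\ref{eq:gradient-condition} has been engineered to rule out stationarity of exactly the fixed-policy sub-problem that arises here, and the construction of the feasible comparison pair $(\pi^{*}_{m_{\theta'},b}, \mac_{\theta''})$ is immediate.
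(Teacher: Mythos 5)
Your proposal is correct and follows essentially the same route as the paper: the paper's proof likewise differentiates the fixed-policy loss $L(\pi^{*}_{m_{\theta'},b}, \mac_{\theta})$ at $\theta=\theta'$, observes that only the integral over $\Vcal$ survives and is nonzero by hypothesis, and concludes non-optimality from non-stationarity. The only difference is that the paper leaves the final step as "immediately conclude," whereas you explicitly construct the descent point $\theta''$ and chain through the feasibility/optimality of the triage policies — a welcome elaboration, not a departure.
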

The above results will lay the foundations for our practical gradient-based algorithm for differentiable learning under triage in the next
section.

% \vspace{-2mm}
\section{How To Learn Under Triage} 
\label{sec:gradient}
% \vspace{-2mm}
In this section, our goal is to find the policy $m_{\theta^{*}}$ within a hypothesis class of parameterized predictive models $\Mcal(\Theta)$ 
that minimizes the loss $L(\pi^{*}_{m_{\theta},b}, m_{\theta})$ defined in Eq.~\ref{eq:optimization-problem-2}.

To this end, we now introduce a general purpose gradient-based algorithm that first approximates $m_{\theta^{*}}$ given a desirable 
maximum level of triage $b$ and then approximates the corresponding optimal triage policy $\pi^{*}_{m_{\theta^{*}},b}$\footnote{\scriptsize At
test time, given a predictive model $m_{\theta}$ and an unseen sample $\xb$, we cannot directly evaluate (or, more precisely, estimate using Monte-Carlo) 
the value of the optimal triage policy $\pi^{*}_{m_{\theta^{*}},b}(\xb)$, given by Eq.~\ref{eq:optimal-triage-policy}, since it depends on $\EE_{y \given \xb}[\cdot]$ 
and $\EE_{h \given \xb}[\cdot]$.}.
To approximate $m_{\theta^{*}}$, the main obstacle we face is that the threshold value $t_{P, b, m_{\theta}}$ in the average loss $L(\pi^{*}_{m_{\theta},b}, m_{\theta})$ 
given by Eq.~\ref{eq:loss-2} depends on the predictive model $m_{\theta}$ which we are trying to learn.  
To overcome this challenge, we proceed sequentially, starting from the triage policy $\pi_0$, with $\pi_0(\xb) = 0$ for all $\xb \in \Xcal$, and build a sequence 
% of triage policies and predictive models $\{ (\pi^{*}_{m_{\theta_{t}}}, m_{\theta_{t}}) \}_{t=0}^{T}$ with lower loss value, \ie, $L(\pi^{*}_{m_{\theta_{t-1}}}, m_{\theta_{t-1}}) \geq L(\pi^{*}_{m_{\theta_{t}}}, m_{\theta_{t}})$. 
%
of triage policies and predictive models $\{ (\pi^{*}_{m_{\theta_{t}},b}, m_{\theta_{t}}) \}_{t=0}^{T}$.
More specifically, in each step $t$, we find the parameters of the predictive model $m_{\theta_{t}}$ 
via stochastic gradient descent (SGD)~\citep{robbins1951stochastic}, \ie,
\begin{align}
\theta_{t}^{(j)} &= \theta_{t}^{(j-1)} - \alpha^{(j-1)} \nabla_{\theta} \left. L(\pi^{*}_{m_{\theta_{t-1}},b}, m_{\theta}) \right|_{\theta=\theta_{t}^{(j-1)}} \nn \\
&= \theta_{t}^{(j-1)} - \alpha^{(j-1)} \nabla_{\theta} \EE_{\xb} \big[ \pi^{*}_{m_{\theta_{t-1}},b}(\xb) \, \EE_{y, \hsample | \xb} \left[ \ell(h, y) \right]  \left. + (1-\pi^{*}_{m_{\theta_{t-1}},b}(\xb)) \, \EE_{y | \xb}[\ell(\mac_{\theta}(\xb), y) ] \big] \right|_{\theta = \theta_{t}^{(j-1)}}\nn \\
&= \theta_{t}^{(j-1)} - \alpha^{(j-1)} \EE_{\xb} \big[ (1-\pi^{*}_{m_{\theta_{t-1}},b}(\xb)) \times \EE_{y | \xb}[ \nabla_{\theta} \left. \ell(\mac_{\theta}(\xb), y) \right|_{\theta = \theta_{t}^{(j-1)}} ] \big], \label{eq:sgd-step-1}
% &=\theta_{t-1} - \alpha_i \int_{\xb \in \Vcal_{t-1}} \EE_{y | \xb}\left[ \left. \nabla_{\theta} \ell(\mac_{\theta}(\xb), y) \right|_{\theta = \theta_{t-1}} \right] \, dP
\end{align}
%
% of triage policies and predictive models $\{ (\pi^{*}_{m_{\theta_{t}}}, m_{\theta_{t}}) \}_{t=0}^{T}$ with lower loss value, \ie, 
where $\alpha^{(j)}$ is the learning rate at iteration $j$. 
%
% technical
Moreover, the following proposition shows that, under mild conditions, the performance of the triage 
policies and predictive models improves in each step: %, \ie, $L(\pi^{*}_{m_{\theta_{t-1}},b}, m_{\theta_{t-1}}) \geq L(\pi^{*}_{m_{\theta_{t}},b}, m_{\theta_{t}})$. 

\begin{proposition}\label{prop:GD-error-change}
Assume that $\nabla_{\theta} ^2 \ell(m_{\theta}(\xb),y)$$\preccurlyeq$ $\Lambda \II$ for all $\xb \in \Xcal$ and $y \in \Ycal$ and $\alpha^{(j)} < 1/\Lambda$ for 
all $j > 0$ for some constant $\Lambda > 0$.
If, in each step $t$, we find the parameters of the predictive model $m_{\theta_t}$ using Eq.~\ref{eq:sgd-step-1}, with $\theta_{t}^{(0)} = \theta_{t-1}$,
then, it holds that $L(\pi^{*}_{m_{\theta_{t}},b}, m_{\theta_{t}} ) < L(\pi^{*}_{m_{\theta_{t-1}},b}, m_{\theta_{t-1}})  $.
\end{proposition}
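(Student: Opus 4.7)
The plan is to decouple the two effects at play in one outer iteration: updating the predictive model $\mac_\theta$ (via SGD) and re-optimizing the triage policy (via Theorem~\ref{theorem:optimal-policy}). Define $f_{t-1}(\theta) := L(\pi^{*}_{\mac_{\theta_{t-1}},b}, \mac_\theta)$, the joint loss with the policy frozen at the previous optimum. I would then establish the chain
\begin{equation*}
L(\pi^{*}_{\mac_{\theta_{t}},b}, \mac_{\theta_{t}}) \;\leq\; f_{t-1}(\theta_t) \;<\; f_{t-1}(\theta_{t-1}) \;=\; L(\pi^{*}_{\mac_{\theta_{t-1}},b}, \mac_{\theta_{t-1}}).
\end{equation*}
The leftmost (weak) inequality is immediate from Theorem~\ref{theorem:optimal-policy}: by construction $\pi^{*}_{\mac_{\theta_{t}},b}$ minimizes $L(\,\cdot\,, \mac_{\theta_t})$ subject to the triage budget $b$, so substituting the suboptimal policy $\pi^{*}_{\mac_{\theta_{t-1}},b}$ at the updated model can only (weakly) increase the loss. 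The task therefore reduces to the strict middle inequality, which is a one-model descent statement about SGD on $f_{t-1}$.

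For that step, I would first verify that $f_{t-1}$ is $\Lambda$-smooth. The human-loss term $\pi(\xb)\,\EE_{y,\hsample|\xb}[\ell(\hsample,y)]$ in Eq.~\ref{eq:joint-loss} does not depend on $\theta$, so differentiating twice under the expectation gives
\begin{equation*}
\hes f_{t-1}(\theta) \;=\; \EE_{\xb}\!\left[(1-\pi^{*}_{\mac_{\theta_{t-1}},b}(\xb))\,\EE_{y|\xb}\bigl[\hes \ell(\mac_\theta(\xb),y)\bigr]\right] \sdleq \Lambda\,\II,
\end{equation*}
where the bound uses the hypothesis $\hes \ell(\mac_\theta(\xb),y) \sdleq \Lambda\II$ pointwise together with the fact that the weights $1-\pi^{*}_{\mac_{\theta_{t-1}},b}(\xb)$ lie in $[0,1]$ and integrate to a positive semidefinite convex combination. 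I would then invoke the classical descent lemma for $\Lambda$-smooth functions: for $\alpha^{(j-1)} < 1/\Lambda$, one gradient step satisfies
\begin{equation*}
f_{t-1}\bigl(\theta_t^{(j)}\bigr) \;\leq\; f_{t-1}\bigl(\theta_t^{(j-1)}\bigr) - \alpha^{(j-1)}\!\left(1 - \tfrac{\alpha^{(j-1)}\Lambda}{2}\right)\bigl\|\nabla f_{t-1}(\theta_t^{(j-1)})\bigr\|^{2},
\end{equation*}
with a strictly positive prefactor. Note that the gradient in Eq.~\ref{eq:sgd-step-1} is exactly $\nabla f_{t-1}$ by the same ``the human term is $\theta$-independent'' observation, so the SGD iterates are genuine gradient iterates for $f_{t-1}$. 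Telescoping from $\theta_t^{(0)} = \theta_{t-1}$ to $\theta_t$ yields $f_{t-1}(\theta_t) \leq f_{t-1}(\theta_{t-1})$.

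The main obstacle — and the place where the phrase ``under mild conditions'' in the statement does real work — is upgrading this to a \emph{strict} decrease. That requires at least one iterate along the SGD path to have a nonzero gradient, i.e., $\theta_{t-1}$ cannot already be a stationary point of $f_{t-1}$. Fortunately, this is exactly the content of Proposition~\ref{prop:suboptimality-2}: its hypothesis (Eq.~\ref{eq:gradient-condition} on the automated subset $\Vcal = \{\xb : \pi^{*}_{\mac_{\theta_{t-1}},b}(\xb) = 0\}$) is equivalent to $\nabla f_{t-1}(\theta_{t-1}) \neq \bm{0}$. Thus invoking Proposition~\ref{prop:suboptimality-2} (or, equivalently, stipulating that $\theta_{t-1}$ is suboptimal for the fixed-policy problem) rules out stationarity, upgrades the descent lemma to a strict inequality, and closes the argument.
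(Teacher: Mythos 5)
Your proof takes essentially the same route as the paper's: the weak inequality obtained by re-optimizing the triage policy at the updated model, followed by the descent lemma for $\Lambda$-smooth functions applied to the fixed-policy objective $f_{t-1}(\theta)=L(\pi^{*}_{\mac_{\theta_{t-1}},b},\mac_{\theta})$, whose gradient coincides with the update in Eq.~\ref{eq:sgd-step-1}. The one place you go beyond the paper is in flagging the strictness issue: the paper's chain of inequalities silently assumes a nonvanishing gradient along the SGD path (its step $(b)$ becomes an equality when $\nabla f_{t-1}=\bm{0}$), whereas you make this explicit and close it by invoking the non-stationarity condition of Proposition~\ref{prop:suboptimality-2} at $\theta_{t-1}$ --- a legitimate patch to a gap the paper's own proof glosses over.
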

% \nastaran{
%
In addition, the following theorem shows that, whenever the loss function $\ell(\cdot)$ is convex with respect to $\theta$, our algorithm enjoys global 
convergence guarantee: 
\begin{theorem}\label{thm:global}
Let $\ell(\cdot)$ be convex with respect to $\theta$ and the output of the SGD algorithm $\theta_t = \argmin_{\theta} L(\pi^* _{m_{\theta_{t-1}},b}, m_{\theta})$. Moreover, assume that $\emin\II \sdleq \hes \ell(m_{\theta}(\xb),y) \sdleq \emax\II$, with $\emin > 0$, and $\ell(\cdot)$ be $H$-Lipschitz, \ie, $\ell(m_{\theta}(\xb),y)-\ell(m_{\theta'}(\xb),y) \le H\cdot\bnm{\theta-\theta'}$. Then, we have that
 \begin{align}
   \lim_{t \to \infty} L( \pi^* _{m_{\theta_{t}},b},m_{\theta_{t}})-L( \pi^* _{m_{\theta^*},b},m_{\theta^*}) \le  \frac{4H^2 \emax}{\emin^{2} (1-b)^{2}}.
 \end{align}
\end{theorem}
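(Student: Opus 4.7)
The plan is to prove the stronger statement that $L(\pi^*_{m_{\theta_t},b}, m_{\theta_t}) - L(\pi^*_{m_{\theta^*},b}, m_{\theta^*}) \le 4H^2 \emax / [\emin^2 (1-b)^2]$ holds at \emph{every} iteration $t$, not only in the limit. I will abbreviate $F(\theta) := L(\pi^*_{m_\theta, b}, m_\theta)$, $\pi_t := \pi^*_{m_{\theta_t},b}$, and $\pi^* := \pi^*_{m_{\theta^*},b}$. The strategy is first to control the parameter gap $\bnm{\theta_t - \theta^*}$ using strong convexity of the fixed-policy objective $L(\pi_{t-1}, \cdot)$ together with $H$-Lipschitzness of $\ell$, and then to convert this into a function-value gap using the $\emax$-smoothness of $L(\pi^*, \cdot)$ at its minimizer.

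The first task is to record two first-order conditions. Since $\theta_t = \argmin_\theta L(\pi_{t-1}, m_\theta)$ and $L(\pi_{t-1}, \cdot)$ is convex (its Hessian $\hes L(\pi_{t-1}, m_\theta) = \EE_\xb[(1-\pi_{t-1}(\xb))\,\EE_{y|\xb}[\hes \ell(m_\theta(\xb),y)]]$ is positive semidefinite by $\hes \ell \sdgeq \emin\II$), we immediately have $\nabla_\theta L(\pi_{t-1}, m_{\theta_t}) = \bm{0}$. The analogous identity $\nabla_\theta L(\pi^*, m_{\theta^*}) = \bm{0}$ at the optimum is less automatic because $F$ need not be differentiable---$\pi^*_{m_\theta,b}$ depends on $\theta$ through the discontinuous threshold rule of Theorem~\ref{theorem:optimal-policy}---but a short contradiction suffices: if some $\tilde\theta$ satisfied $L(\pi^*, m_{\tilde\theta}) < L(\pi^*, m_{\theta^*}) = F(\theta^*)$, then $F(\tilde\theta) \le L(\pi^*_{m_{\tilde\theta},b}, m_{\tilde\theta}) \le L(\pi^*, m_{\tilde\theta}) < F(\theta^*)$ (using optimality of $\pi^*_{m_{\tilde\theta},b}$ for $m_{\tilde\theta}$), contradicting global optimality of $\theta^*$.

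Next I would bound the parameter gap. Because $\EE_\xb[\pi_{t-1}] \le b$ and $\hes \ell \sdgeq \emin\II$, the Hessian identity gives $(1-b)\emin$-strong convexity of $L(\pi_{t-1}, \cdot)$, hence strong monotonicity of its gradient. Combined with the two first-order conditions this yields
\[ (1-b)\emin\,\bnm{\theta_t - \theta^*} \le \bnm{\nabla_\theta L(\pi_{t-1}, m_{\theta^*}) - \nabla_\theta L(\pi^*, m_{\theta^*})} = \bnm{\EE_\xb[(\pi^*(\xb) - \pi_{t-1}(\xb))\,\EE_{y|\xb}[\nabla_\theta\ell(m_{\theta^*}(\xb),y)]]}. \]
Since $\ell$ is $H$-Lipschitz, $\bnm{\nabla_\theta \ell} \le H$, and since both policies are $\{0,1\}$-valued with triage-constrained expectations, $\EE_\xb[|\pi^* - \pi_{t-1}|] \le \EE_\xb[\pi^*] + \EE_\xb[\pi_{t-1}] \le 2b \le 2$. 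Hence $\bnm{\theta_t - \theta^*} \le 2H/[(1-b)\emin]$.

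Finally, $\emax$-smoothness of $L(\pi^*, \cdot)$ (from $\hes \ell \sdleq \emax\II$) together with $\nabla_\theta L(\pi^*, m_{\theta^*}) = \bm{0}$ gives $L(\pi^*, m_{\theta_t}) - L(\pi^*, m_{\theta^*}) \le \frac{\emax}{2}\bnm{\theta_t - \theta^*}^2$. Since $\pi_t$ is optimal for $m_{\theta_t}$, $F(\theta_t) = L(\pi_t, m_{\theta_t}) \le L(\pi^*, m_{\theta_t})$, so substituting the parameter bound yields
\[ F(\theta_t) - F(\theta^*) \le \frac{\emax}{2} \cdot \frac{4H^2}{(1-b)^2 \emin^2} = \frac{2 H^2 \emax}{\emin^2 (1-b)^2} \le \frac{4 H^2 \emax}{\emin^2 (1-b)^2}, \]
and taking $t \to \infty$ concludes. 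The main obstacle is the non-differentiability of $F$ at $\theta^*$, which blocks a direct appeal to $\nabla F(\theta^*) = \bm{0}$; the sandwich argument in the second paragraph is what lets one treat $\theta^*$ as an unconstrained minimizer of a \emph{fixed}-policy problem, to which the clean strong-convexity and smoothness machinery applies.
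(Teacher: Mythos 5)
Your proof is correct, and its skeleton coincides with the paper's: both arguments reduce the theorem to (a) the smoothness bound $L(\pi^*_{m_{\theta^*},b}, m_{\theta_t}) - L(\pi^*_{m_{\theta^*},b}, m_{\theta^*}) \le \tfrac{\emax}{2}\bnm{\theta_t-\theta^*}^2$ combined with $L(\pi^*_{m_{\theta_t},b}, m_{\theta_t}) \le L(\pi^*_{m_{\theta^*},b}, m_{\theta_t})$, and (b) a uniform bound $\bnm{\theta_t-\theta^*} \le 2H/[\emin(1-b)]$, which is exactly the paper's Proposition~\ref{prop:theta-bound}. Where you differ is in how (b) is obtained and in what you do afterwards. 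The paper proves (b) by sandwiching the function gap $L(\pi^*_{m_{\theta^*},b}, m_{\theta_t}) - L(\pi^*_{m_{\theta^*},b}, m_{\theta^*})$ between a strong-convexity lower bound and an $H$-Lipschitz upper bound; you instead use strong monotonicity of $\nabla_\theta L(\pi_{t-1}, m_\theta)$ together with the two stationarity conditions, which forces you to control the gradient mismatch $\EE_{\xb}[(\pi^*-\pi_{t-1})\EE_{y|\xb}[\nabla_\theta\ell]]$ via $\bnm{\nabla_\theta\ell}\le H$ and the budget constraints. Both routes give the same constant; the paper's is marginally more economical since it never needs the policy-difference term, while yours makes explicit \emph{why} the two iterates cannot drift apart (the policies can disagree on at most a $2b$ mass of instances). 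You also supply a justification for $\nabla_\theta L(\pi^*_{m_{\theta^*},b}, m_{\theta})\big|_{\theta=\theta^*}=\bm{0}$ that the paper asserts without comment, and that sandwich argument is worth having on record. Finally, you skip the paper's descent recursion $\Phi_t - \Phi_{t+1} \ge \tfrac{(1-b)\emin}{2}\bnm{\theta_{t+1}-\theta_t}^2$ entirely and plug (b) directly into (a); this is legitimate (the paper's own Proposition~\ref{prop:theta-bound} already holds at every $t$) and yields the strictly stronger non-asymptotic bound $2H^2\emax/[\emin^2(1-b)^2]$ at every iteration, exposing the fact that the paper's detour through the recursion is what costs it the extra factor of $2$ and restricts the statement to the limit.
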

% }

% where $\Vcal_{t-1} = \{ \xb \given \pi^{*}_{m_{\theta_{t-1}}}(\xb) = 0 \}$. 
%
In practice, given a set of samples $\Dcal = \{ (\xb_i, y_i, h_i) \}$, we can use the follo\-wing finite sample Monte-Carlo estimator for the gradient 
$\nabla_{\theta} L(\pi^{*}_{m_{\theta_{t-1}},b}, m_{\theta})$:
\begin{align*}
\nabla_{\theta} L(\pi^{*}_{m_{\theta_{t-1}},b}, m_{\theta}) &= \nabla_{\theta} \left[ \frac{1}{|\Dcal|} \sum_{i=1}^{|\Dcal|} \ell(m_\theta(\xb_i), y_i) - \textsc{Thres}_{t_{P, b, m_{\theta_{t-1}}}}\left( \ell(\mac_{\theta}(\xb_i), y_i) - \ell(\hsample_i, y_i), 0 \right) \right] \\
&=\frac{1}{|\Dcal|} \sum_{i=1}^{\max(\lceil (1-b) \, |\Dcal| \rceil, p)} \nabla_{\theta} \ell(m_{\theta}(\xb_{[i]}), y_{[i]})
\end{align*}
where $\cdot_{[i]}$ denotes the $i$-th sample in increasing value of the difference between the model and the human 
loss\footnote{\scriptsize Note that, if the set of samples contains several predictions $h_{[i]}$ by different human experts 
for each sample $\xb_{[i]}$, we would use all of them to estimate the (average) human loss.} $\ell(m_{\theta_{t-1}}(\xb_{[i]}), y_{[i]}) - \ell(h_{[i]}, y_{[i]})$
and $p$ is the number of samples with $\ell(m_{\theta_{t-1}}(\xb_{[i]}), y_{[i]}) - \ell(h_{[i]}, y_{[i]}) < 0$.

In the above, we do not have to explicitly compute the threshold $t_{P, b, m_{\theta_{t-1}}}$ nor the triage policy $\pi^{*}_{m_{\theta_{t-1}},b}(\xb_i)$ 
for every sample $\xb_i$ in the set $\Dcal$, we just need to pick the $\max(\lceil (1-b) \, |\Dcal| \rceil, p)$ samples with the lowest value of the model loss minus 
the human loss $ \ell(m_{\theta_{t-1}}(\xb_{[i]}), y_{[i]}) - \ell(h_{[i]}, y_{[i]})$ using the predictive model $m_{\theta_{t-1}}$ fitted in step $t-1$.
To understand why, note that, as long as $t_{P, b, m_{\theta_{t-1}}} > 0$, by definition, $t_{P,b,m_{\theta_{t-1}}}$ needs to satisfy that
\begin{equation*}
\frac{d}{d\tau} \bigg[ \sum_{i \in \Dcal} [ \tau b + \max( \ell(m_{\theta_{t-1}}(\xb_i), y_i) - \left. \ell(h_i, y_i) - \tau, 0) ] \bigg] \right|_{\tau = t_{P,b,m_{\theta_{t-1}}}} = 0
\end{equation*}
and this can only happen if 
%\begin{equation*}
$\ell(m_{\theta_{t-1}}(\xb_i), y_i) - \ell(h_i, y_i) - t_{P,b,m_{\theta_{t-1}}} > 0$
% \end{equation*}
%
for $\lfloor b\, |\Dcal| \rfloor$ out of $|\Dcal|$ samples.
Here, we are implicitly estimating the optimal triage policies using the observed training labels and human predictions---we are not approximating them 
using a pa\-ra\-me\-te\-rized model---and, due to Proposition~\ref{prop:GD-error-change}, the implementation of the above procedure with Monte-Carlo estimates is 
guaranteed to converge to a local minimum of the empirical loss.
Moreover, note that we can think of the procedure as a particular instance of disciplined parameterized programming~\citep{amos2017optnet, agrawal2019differentiable}, 
where the differentiable convex optimization layer is given by the minimization with respect to the triage policy.

While training each of the predictive models $m_{\theta_t}$, we can implicitly compute the optimal triage policy $\pi^{*}_{\theta_{t}, b}$ as described above, however, 
at test time, we cannot do the same since we would need to observe the label and human prediction of each unseen sample $\xb$.
To overcome this, after training the last machine model $m_{\theta_{T}}$, we also fit a model $\hat{\pi}_{\gamma}(\xb)$ to approximate % the optimal triage policy 
$\pi^{*}_{m_{\theta_{T}},b}(\xb)$ using SGD, \ie,
\begin{equation} \label{eq:sgd-step-gamma}
\gamma^{(j)} = \gamma^{(j-1)} - \alpha^{(j-1)} \left. \nabla_{\gamma} \left[ \sum_{i=1}^{|\Dcal|} \ell'{}(\hat{\pi}_{\gamma}(\xb_i), \pi^{*}_{m_{\theta_{T}},b}(\xb_i)) \right] \right|_{\gamma = \gamma^{(j-1)}}, 
\end{equation}
where $\alpha^{(j)}$ is the learning rate at iteration $j$ and the choice of loss $\ell'{}(\cdot)$ depends on the model class chosen for $\hat{\pi}_{\gamma}$.
%
% Then, at test time, approximate the optimal triage policy using the fitted function $\hat{\pi}_{\gamma}(\xb)$ and $t_{P, B, m_{\theta_{T}}}$, \ie,
%
% \begin{equation*}
%    \pi_{m}(d = 1 \,|\, \xb) = \begin{cases}
% 1 & \text{if} \,\, \hat{\pi}_{\gamma}(\xb) > t_{P, B, m_{\theta_{T}}} \\
% 0 & \text{otherwise}
% \end{cases}
% \end{equation*}
%
% \manuel{I splitted this comment into three: above now we write "While training each of the above predictive models, we do use the optimal triage policies for each of the models, however, 
% at test time, we cannot do the same since we would need to observe the label and human prediction of each an unseen sample $\xb$." and "after the training of the last machine 
% model $m_{\theta_{T}}$ has been completed" and below now we write $note that function $\textsc{Approxi\-ma\-teTria\-ge\-Po\-li\-cy}$ is called only once after the training of the last machine 
% model has been completed"
% \nastaran{Note that we use the optimal triage policy for training the machine model (and not its approximation given by $g_\gamma$).}
%
In our experiments, we have found that, using the above procedure, we can approximate well the optimal triage policy $\pi_{m_{\theta_{T}}, b}$.
However, we would like to note that this problem can also be viewed as finding an estimator for the $\alpha$-superlevel set $C^{\alpha}(f) = \{ \xb \in \Xcal \,:\, f(\xb) \geq \alpha \}$ 
of the function $f(\xb) = \EE_{y \given \xb}[\ell(m_{\theta_{T}}(\xb), y) - \EE_{h \given \xb}[\ell(h(\xb), y)]]$ with $\alpha = t_{P,b,m_{\theta_{T}}}$ from a set of noisy observations.
Under this view, it might be possible to derive estimators with error performance bounds building upon recent work on level set estimation~\citep{willett2007minimax,singh2008nonparametric}.
which is left as future work. Refer to Algorithm~\ref{alg:sgd} for a pseudocode implementation of the overall gradient-based algorithm, which returns $\theta_{T}$ and $\gamma$.
Appendix~\ref{app:sgd} provides a detailed scalability analysis, which suggests that our algorithm does not significantly increase the computational complexity of vanilla SGD.
%
%
% This is left as future work.
%
%\section{Gradient-based Algorithm to Learn Under Triage} \label{app:sgd}
%
%\xhdr{Pseudocode implementation of our gradient-based algorithm}
%
\begin{algorithm}[!!!t]
\caption{\textsc{\our}: it returns the weights of a predictive model $m_{\theta}$ and the weights of a triage policy $\hat{\pi}_{\gamma}$.}
\label{alg:sgd}
 \small
  \begin{algorithmic}[1]
    \Require Set of training samples $\Dcal$, maximum level of triage $b$, number of time steps $T$, number of epochs $N$, 
    mini batches $M$, batch size $B$, learning rate $\alpha$.
    \Statex
    \Function{TrainMachineUnderTriage}{$T$,$\Dcal$, $M$, $B$, $b$, $\alpha$}
    	\State $\theta^{(0)} \gets  \Call{InitializeTheta}{{}}$
       \For{$t = 1, \ldots, T$}
            \State $\theta_t \gets \Call{TrainModel}{\theta_{t-1}, \Dcal, M, B, b, \alpha}$
        \EndFor
        \State $\gamma \gets \Call{ApproximateTriagePolicy}{\theta_{T}, \Dcal, N, M, B, b, \alpha}$
        \State \Return $\theta_{T}, \gamma$
   \EndFunction
	
    \Statex
     \Function{Triage}{$\Dcal, b, \theta$}
     	\State $p \gets$  number of instances in $\Dcal$ with $\ell(m_{\theta}(\xb), y) - \ell(h, y) < 0$
      	\State $\Dcal' \gets \emptyset$
     	\For{$i = 1, \ldots, \max((1-b)|\Dcal|, p)$}
                \State $\Dcal' \gets \Dcal' \cup \{ \text{$i$-th sample from $\Dcal$ in increasing value of $\ell(m_{\theta}(\xb), y) - \ell(h, y)$} \}$
         \EndFor
%        \State $SortedS \gets Sort(S)$
%      
 %      
%        \For{$j=0,\ldots, M-1$}
%        		\State $\Dcal^{(i)} \gets \Call{Minibatch}{\Dcal, B}$ 
%		\State $\Dcal^{(i)} \gets \Call{Triage}{\Dcal^{(i)}, b}$
%        		\State $\nabla \gets 0$
%        		\For{$(\xb, y, h) \in \Dcal^{(i)}$}
%%                		\If{$\ell(m_{\theta'}(\xb), y) - \ell(h, y)$ < 0}
%                 	 \State $\nabla \gets \nabla + \nabla_{\theta} \left. \ell(m_{\theta}(\xb), y) \right|_{\theta=\theta^{(i)}}$
%%                		\EndIf
%         	\EndFor
%         	\State $\theta^{(i+1)} \gets \theta^{(i)} - \alpha \, \frac{\nabla}{B}$
%        \EndFor
        \State \Return $\Dcal'$
    \EndFunction
   \Statex
    \Function{TrainModel}{$\theta'$, $\Dcal$, $M$, $B$, $b$, $\alpha$}
        \State $\theta^{(0)} \gets \theta'$
        \For{$i=0,\ldots, M-1$}
%        		\State $\Dcal^{(i)} \gets \Call{Minibatch}{\Dcal, B, i}$ 
		\State $\Dcal^{(i)} \gets$ the i'th mini batch of $\Dcal$
%		\State $\Dcal^{(i)} \gets \Call{Triage}{\Dcal^{(i)}, b, \theta'}$
		\State $\Dcal^{(i)}  \gets \Call{Triage}{\Dcal^{(i)}, b, \theta^{(i)}}$
        		\State $\nabla \gets 0$
        		\For{$(\xb, y, h) \in \Dcal^{(i)}$}
%                		\If{$\ell(m_{\theta'}(\xb), y) - \ell(h, y)$ < 0}
                 	 \State $\nabla \gets \nabla + \nabla_{\theta} \left. \ell(m_{\theta}(\xb), y) \right|_{\theta=\theta^{(i)}}$
%                		\EndIf
         	\EndFor
         	\State $\theta^{(i+1)} \gets \theta^{(i)} - \alpha \, \frac{\nabla}{B}$
        \EndFor
        \State \Return $\theta^{(M)}$
    \EndFunction
    \Statex
    \Function{ApproximateTriagePolicy}{$\theta$, $\Dcal$, $N$, $M$, $B$, $b$, $\alpha$}
        \State $\gamma^{(M)} \gets \Call{InitializeGamma }{{}}$
        \For{$j=1,\ldots, N$}
        \State $\gamma^{(0)} \gets \gamma^{(M)}$
        \For{$i=0,\ldots, M-1$}
               \State $\Dcal^{(i)} \gets$ the i'th mini batch of $\Dcal$
        		\State $\nabla \gets 0$
        		\For{$(\xb, y, h) \in \Dcal^{(i)}$}
        			\State $\nabla \gets \nabla + \left. \nabla_{\gamma} \ell'(\hat{\pi}_{\gamma}(\xb), \pi^{*}_{m_{\theta},b}(\xb)) \right|_{\gamma=\gamma^{(i)}}$
		\EndFor
		\State $\gamma^{(i+1)} \gets \gamma^{(i)} - \alpha \, \frac{\nabla}{B}$
	\EndFor
	\EndFor
	\State \Return $\gamma^{(M)}$
    \EndFunction
  \end{algorithmic}
\end{algorithm}

\begin{figure*}[!t]
% \captionsetup[subfigure]{labelformat=empty}
\addtolength\tabcolsep{-5pt}
\centering
\hspace{0.1cm}{\includegraphics[width=0.9\textwidth]{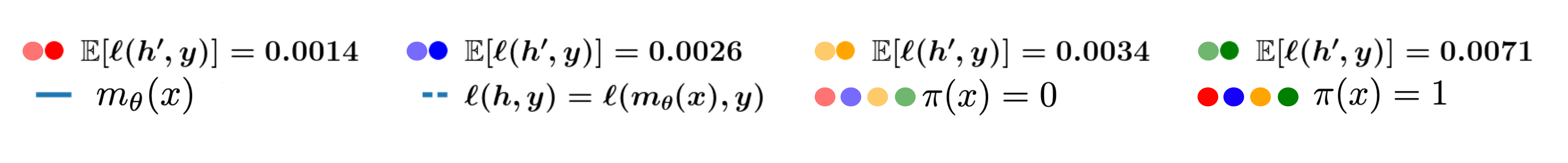}}\\[-3ex]
\subfloat[Predictive model $m_{\theta_0}$ trained under full automation]{
\begin{tabular}{cc}
    \includegraphics[width=0.24\textwidth]{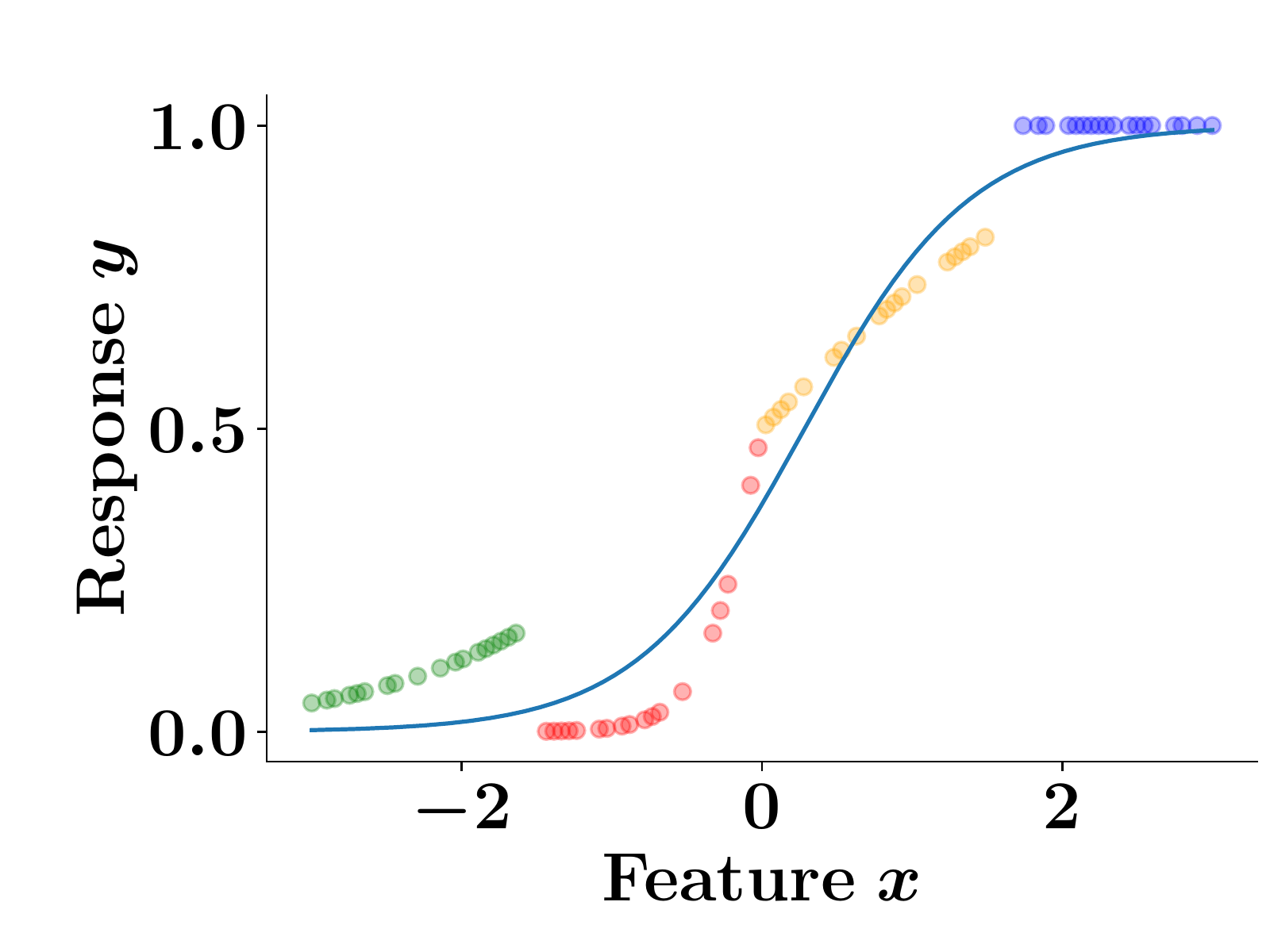} &\includegraphics[width=0.24\textwidth]{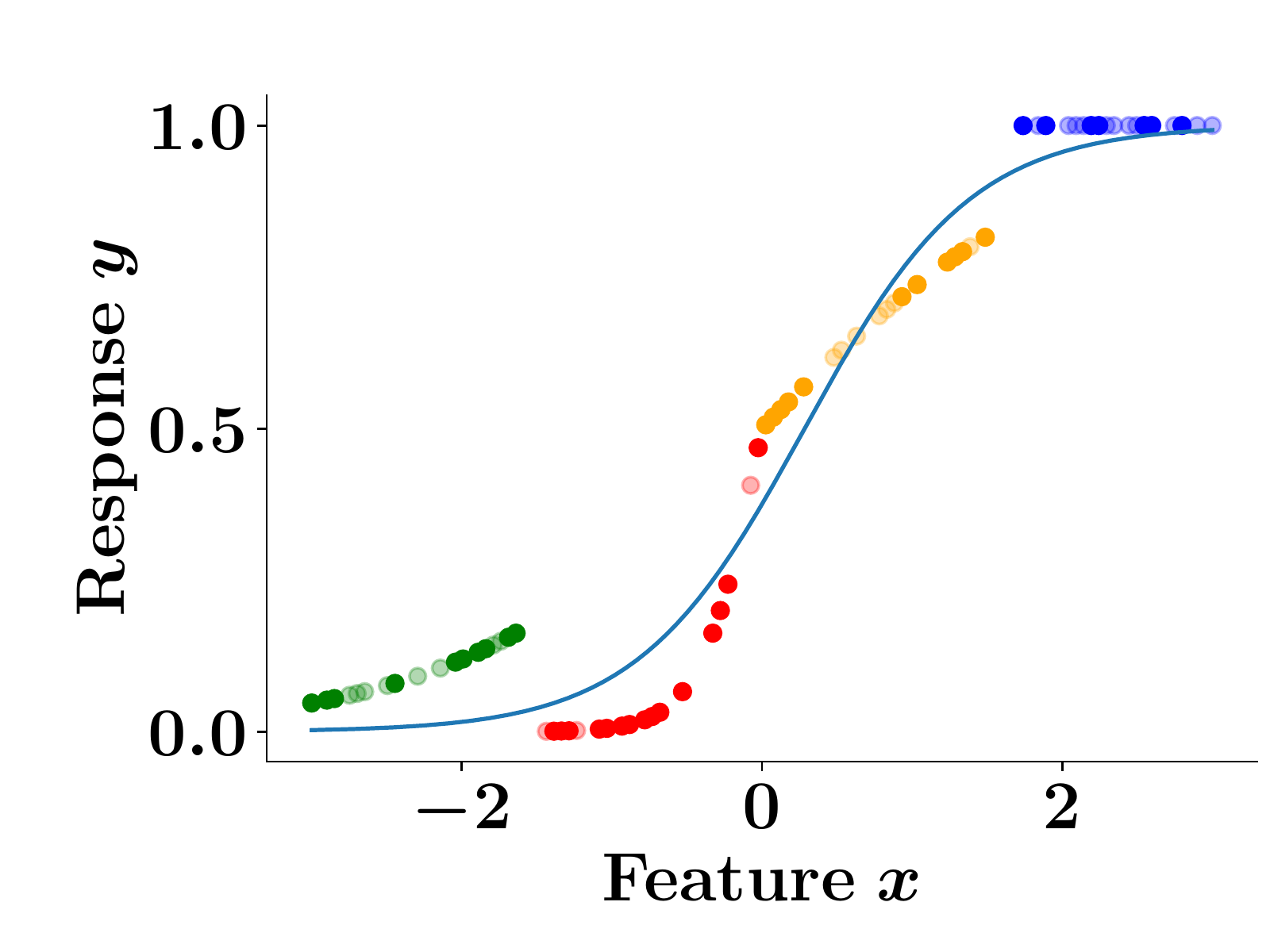} \vspace{-1mm}  \\
    \stackunder[3pt]{\includegraphics[width=0.24\textwidth]{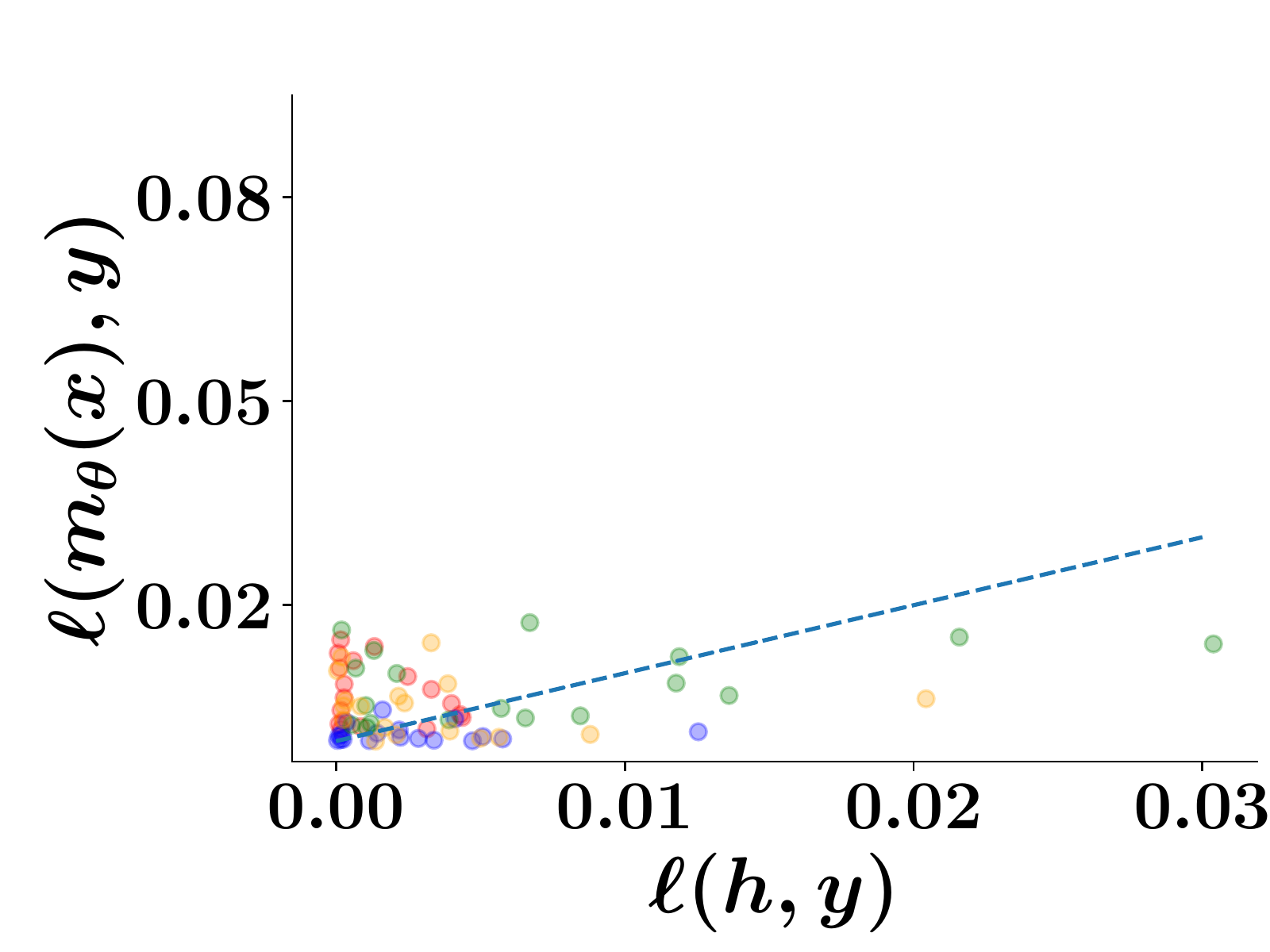}}{\scriptsize No triage} &  \stackunder[5pt]{\includegraphics[width=0.24\textwidth]{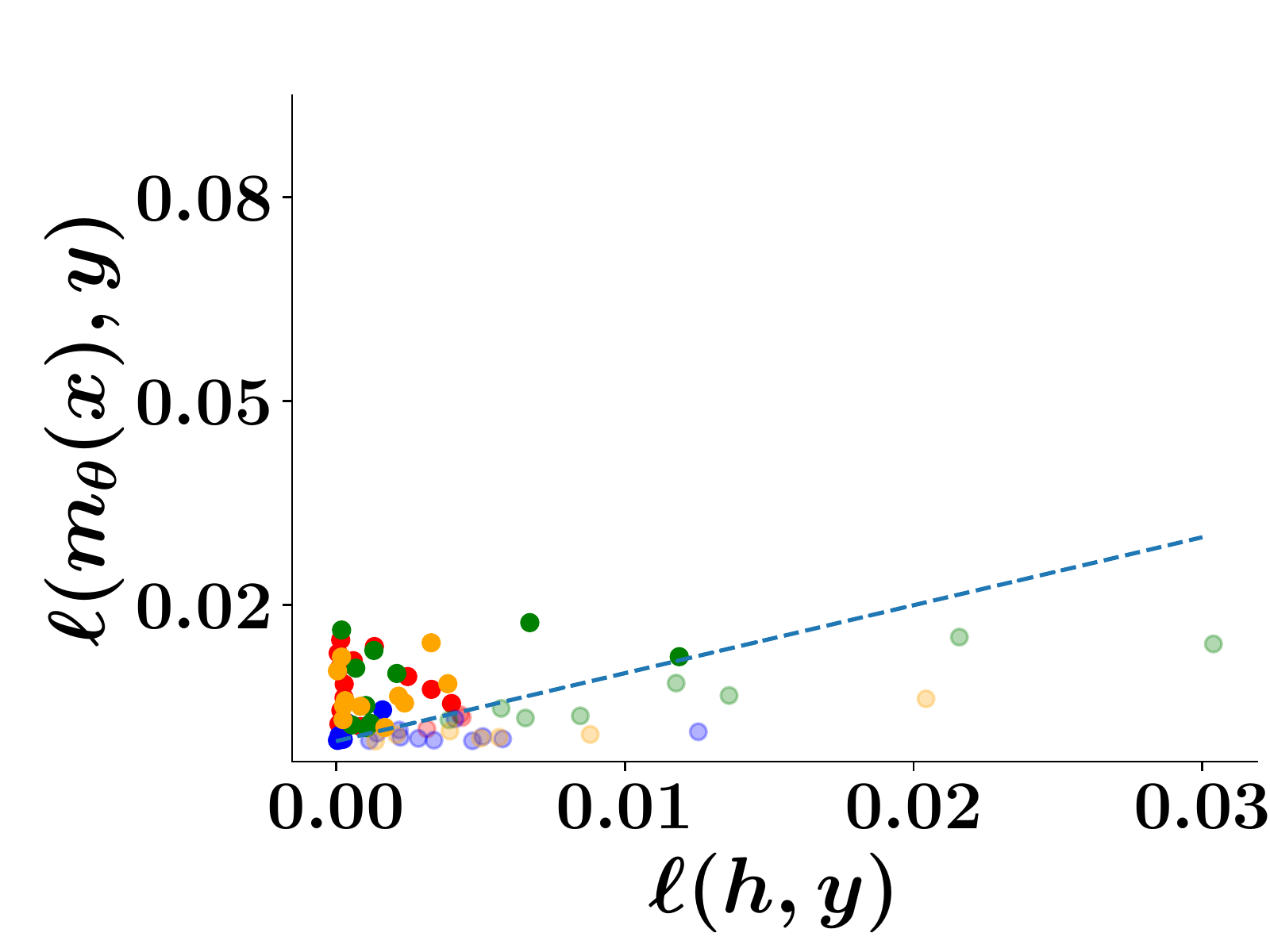}}{\scriptsize  Triage policy $\pi^{*}_{m_{\theta_0, b}}$}
 \end{tabular}
}
\subfloat[Predictive model $m_{\theta}$ trained under triage]{
\begin{tabular}{cc}
    \includegraphics[width=0.24\textwidth]{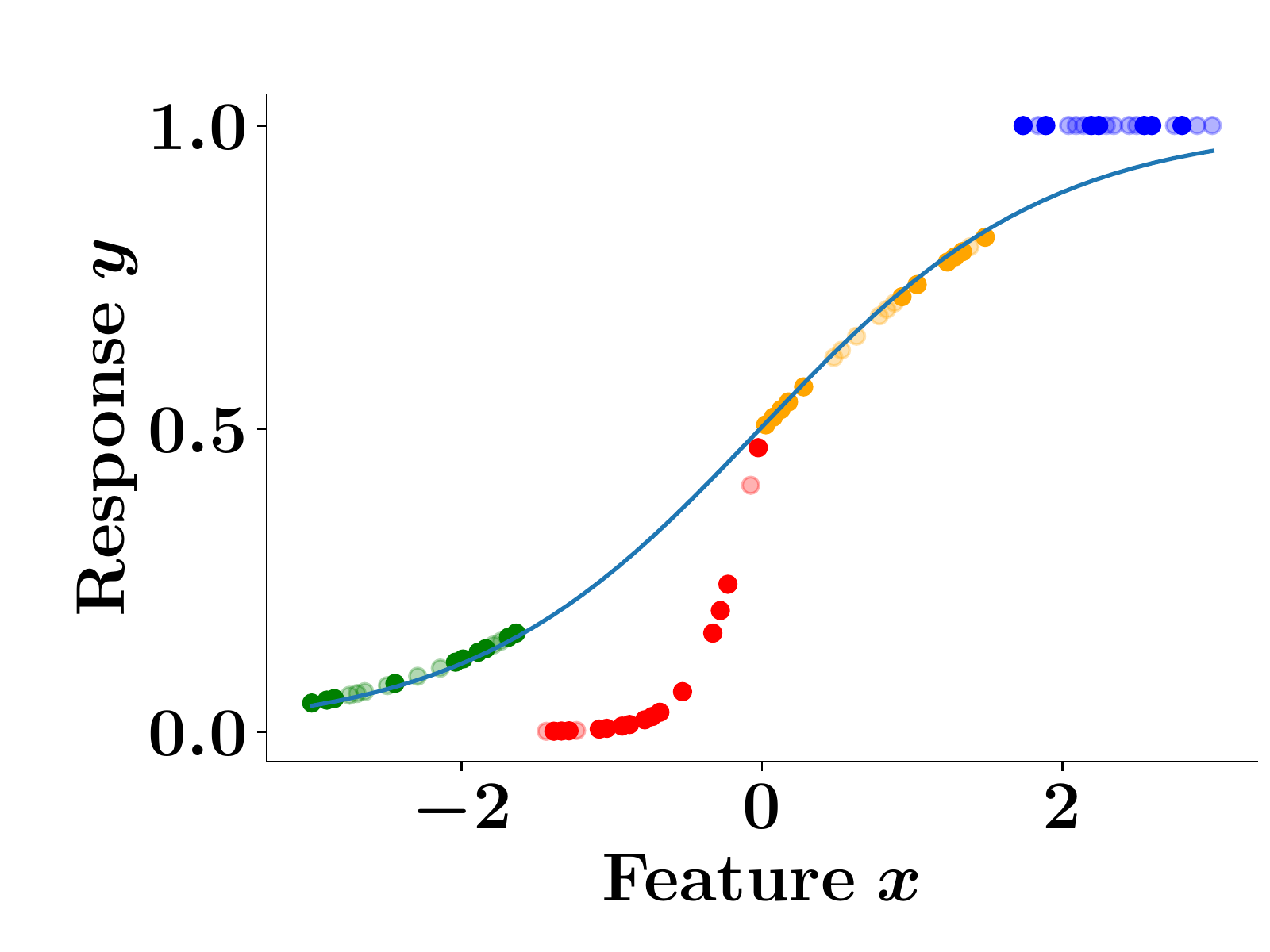} & \includegraphics[width=0.24\textwidth]{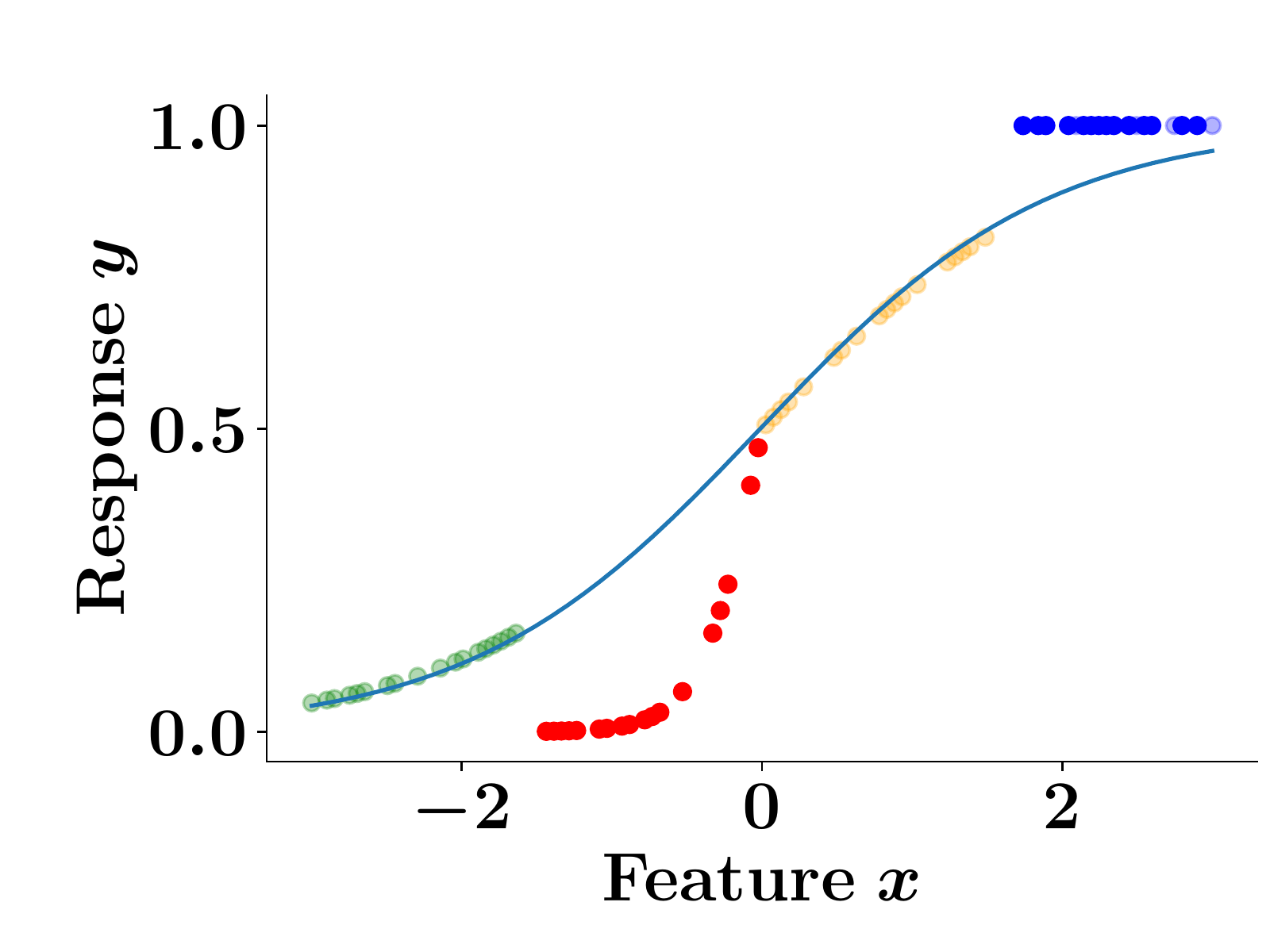} \vspace{-1mm} \\
    \stackunder[3pt]{\includegraphics[width=0.24\textwidth]{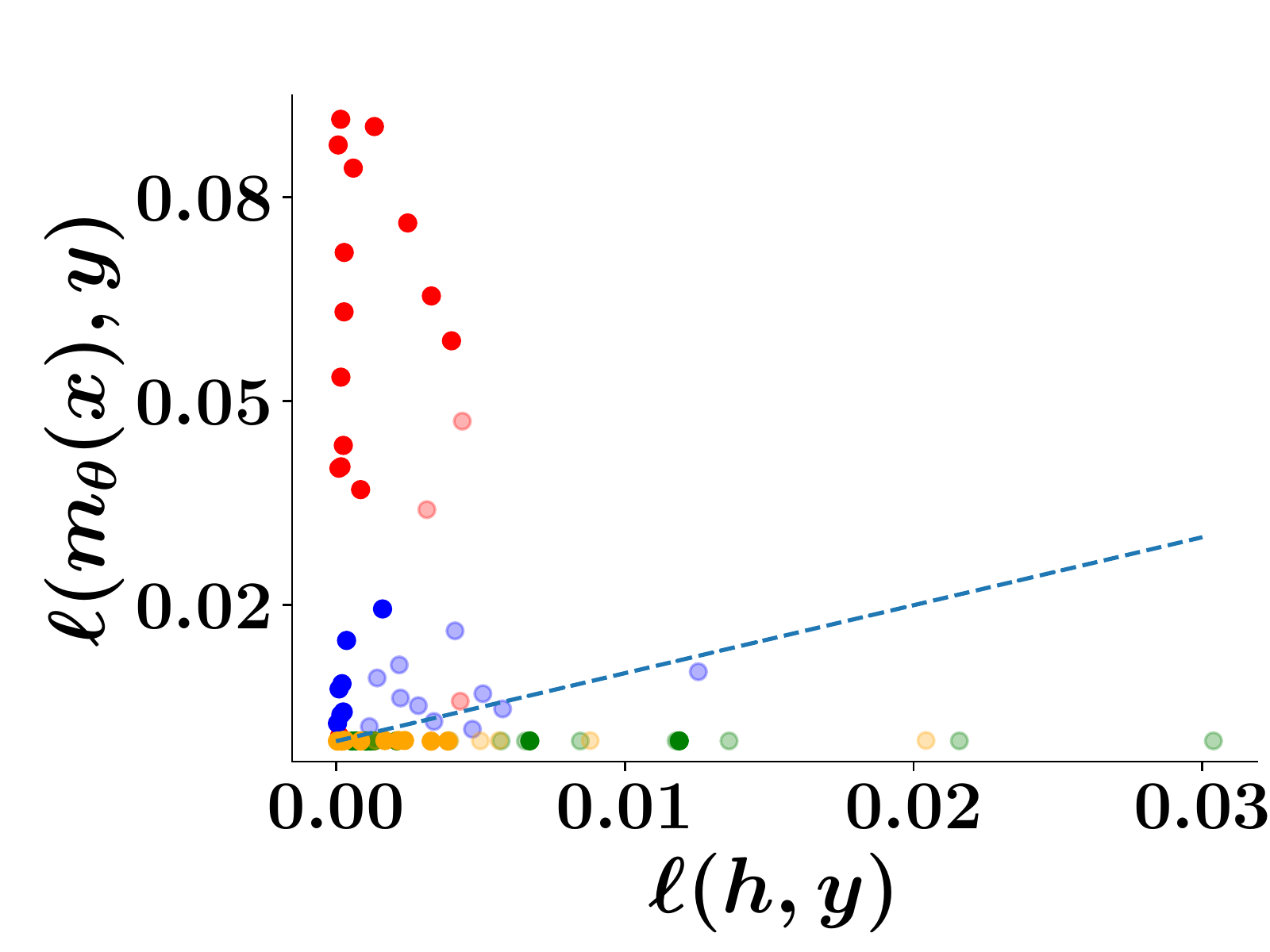}}{\scriptsize Triage policy $\pi^{*}_{m_{\theta_0}, b}$} &  \stackunder[5pt]{\includegraphics[width=0.24\textwidth]{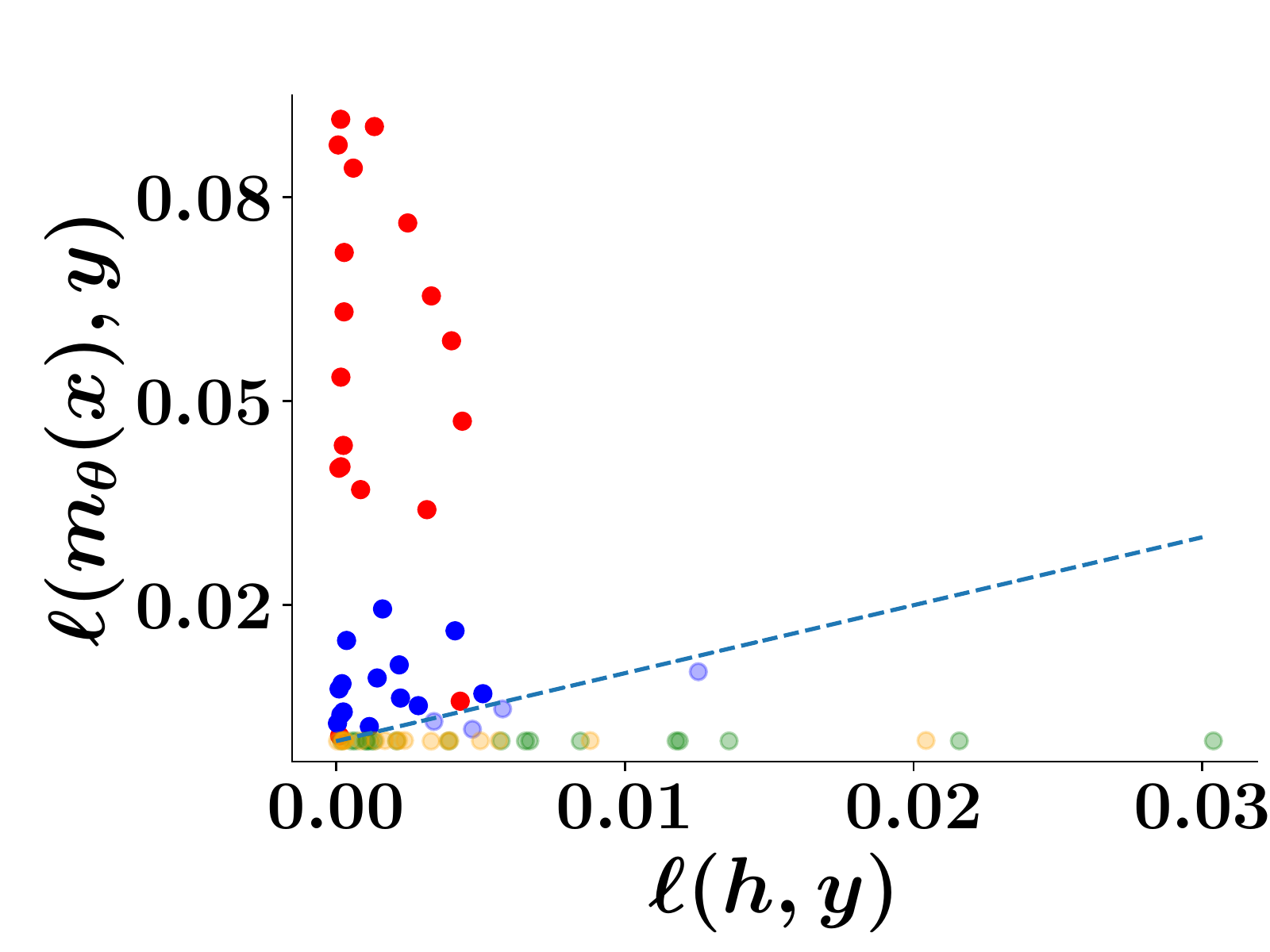}}{\scriptsize Triage policy $\pi^*_{m_{\theta}, b}$}
\end{tabular}
}
\caption{Interplay between the per-instance accuracy of predictive models and experts under different triage policies.
%Experiments on Synthetic Data
In both panels, the first row shows the training samples $(x, y)$ along with the predictions made by the models $m(x)$ and the triage policy 
values $\pi(x)$ and the second row shows the predictive model loss $\ell(m(x),y)$ against the human expert loss $\ell(m(x),y)$ on a per-instance level. Columns 
correspond to the settings (1--4) from left to right.
%
% The predictive models $m_{\theta_0}$ and $m_{\theta}$ are trained under full automation and under algorithmic triage with $b = 1$, respectively. 
%
The triage policy $\pi^{*}_{m_{\theta_0},b}$ is optimal for the predictive model $m_{\theta_0}$ and the triage policy $\pi^{*}_{m_{\theta},b}$ is optimal for the predictive 
model $m_{\theta}$.
%
%
% In panels (c) and (d), the machine $m_{\theta^*}$ is trained using our method (Algorithm~\ref{alg:sgd}) under the
% triage policy  $\pi^* _{m_{{\theta}^* _0}}$ and
% $\pi^* _{m_{{\theta}^* }}$, respectively. 
%
Each point corresponds to one instance and, for each instance, the color indicates the amount of noise in the predictions by experts, as given by Eq.~\ref{eq:noise}, 
and the tone indicates the triage policy value.
In all panels, we used $\ell(\hat{y},y)=(\hat{y}-y)^2$ and the class of predictive models parameterized by sigmoid functions, \ie, $m_{\theta}(x)=S_{\theta}(x)$.
%
% We observe that our method (panel (d)) outperforms other combinations of machine models and triage policies (panel (a)--(c)); and, it sets  $\pi(x)=1$ for those 
% samples $(x,y)$, where the the machine error would have been the highest if it had to predict their response variables $y$. 
%
}\vspace{-2mm}
\label{fig:first_synthetic}
\end{figure*}

% \vspace{-2mm}
\section{Experiments on Synthetic Data} 
\label{sec:synthetic}
% \vspace{-2mm}
%
In this section, our goal is to shed light on the theoretical results from Section~\ref{sec:policy}. To this end, we use our gradient-based algorithm in 
a simple regression task in which the optimal predictive model under full automation is suboptimal under algorithmic triage.\footnote{\label{infrastructure}\scriptsize All 
algorithms were implemented in Python 3.7 and ran on a \href{https://images.nvidia.com/content/technologies/volta/pdf/tesla-volta-v100-datasheet-letter-fnl-web.pdf}{V100 Nvidia Tesla GPU with 32GB of memory}.}
%
%  Additional experimental details on synthetic data are left to Appendix~\ref{app:synthetic-details}. We will release an open-source  implementation of the experiments to reproduce our results.

\xhdr{Experimental setup}
We generate $|\Dcal|=72$ samples, where
% We generate $|\Dcal| = 72$ samples $\{(\xb,y)\}$, where 
we first draw the features $x \in\RR$ uniformly at random, \ie,
% \manuel{The features are one dimensional, therefore i think we should use x not \xb to avoid confusion}
$x \sim \text{U}[-3,3]$, and then obtain the response variables $y$ using two different sigmoid functions $S_{\theta}(x) = \frac{1}{1+\exp(- \theta x)}$.
% using two different sigmoid functions.
More specifically, we set $y = S_{1}(x)$ if $x \in [-3,-1.5)\cup[0,1.5)$ and $y = S_{5}(x)$ if $x \in [-1.5,0)\cup [1.5,3]$.
Moreover, we assume human experts provide noisy predictions of the response variables, \ie, $h(x) = y + \epsilon(x)$, where 
$\epsilon(x) \sim \Ncal(0,\sigma_{\epsilon} ^2(x))$ with
\begin{align} \label{eq:noise}
 \sigma_{\epsilon} ^2 (\xb) =\left\{
\begin{array}{cc}
8\times 10^{-3}& \quad \text{if}\quad x \in [-3,-1.5)\\%green
1\times 10^{-3}& \quad \text{if}\quad x \in [-1.5,0)\\%red
4\times 10^{-3}& \quad \text{if}\quad x \in [0,1.5)\\%yellow
2\times 10^{-3}& \quad \text{if}\quad x \in [1.5,3]\\%blue
\end{array}\right.
\end{align}
In the above, we are using heteroscedastic noise motivated by multiple lines of evidence that suggest that human experts performance on a
per instance level spans a wide range~\citep{raghu2019algorithmic, raghu2019direct, de2020aaai}.
%
%%%% the following noise description is wrong%%%%
% We simulate the synthetic expert of varying competence across different groups in the following way:  the human prediction is the true response variable together with an additive noise variable $\varepsilon \sim \Ncal(0,0.01 k_i^2)$.
% %
% Furthermore, we define the machine model
%
Then, we consider the hypothesis class of predictive models $\Mcal(\Theta)$ parameterized by sigmoid functions, \ie, $m_{\theta}(x) = S_{\theta}(x)$, and utilize the sum 
of squared errors on the predictions as loss function, \ie, $\ell(\hat{y},y)=(\hat{y}-y)^2$, to train the models and triage policies in the following four settings:
%
% \manuel{Our formulation does not allow for different loss for human and machine, so no need to confuse the reader}
% the sum of squared errors on the predictions as the loss function for both human and machine predictions, 
%
% We use the following  machine models and triage policies in our experiments.
%
\begin{itemize}[noitemsep, nolistsep, leftmargin=0.8cm]
\item[1.] Predictive model trained under full automation $m_{\theta_0}$ without algorithmic triage, \ie, $\pi_{m_{\theta_0},b}(\xb) = \pi_0(\xb) = 0$ for all $\xb \in \Xcal$.

\item[2.] Predictive model trained under full automation $m_{\theta_0}$ with optimal algorithmic triage $\pi^*_{m_{\theta_0},b}$.

\item[3.] Predictive model trained under algorithmic triage $m_{\theta}$, with $b = 1$, with suboptimal algorithmic triage $\pi^*_{m_{\theta_0},b}$. 
Here, we use the triage policy that is optimal for the predictive model trained under full automation.
%
% Here, the machine model is trained using our method (Algorithm~\ref{alg:sgd}). However, triage policy is set  as $\pi^* _{m_{\theta_0} ^*}$, which is the optimal triage policy under the machine 
% model $m_{\theta_0} ^*$ trained under full automation.

\item[4.] Predictive model trained under algorithmic triage ${m_{\theta}}$, with $b = 1$, with optimal algorithmic triage $\pi^*_{m_{\theta},b}$. 
%
% Here, the machine model is trained using our method (Algorithm~\ref{alg:sgd}) and triage policy is  optimal under this machine model $m_{\theta}$.
\end{itemize}
In all the cases, we train the predictive models $m_{\theta_0}$ and $m_{\theta}$ using 
% Algorithm~\ref{alg:sgd}
our method with $b = 0$ and $b = 1$, respectively.
Finally, we investigate the interplay between the accuracy of the above predictive models and the human experts and the structure of
the triage policies at a per-instance level. 

\xhdr{Results}
Figure~\ref{fig:first_synthetic} shows the training samples $(x, y)$ along with the predictions made by the predictive models $m_{\theta_0}$ and $m_{\theta}$ and 
the values of the triage policies $\pi_{0}(x)$, $\pi^{*}_{m_{\theta_0},b}$ and $\pi^{*}_{m_{\theta},b}$, as well as the losses achieved by the models and triage policies 
(1-4) on a per-instance level.
The results provide several interesting insights.

Since the predictive model trained under full automation $m_{\theta_0}$ seeks to generalized well 
across the entire feature space, the loss it achieves on a per-instance level is never too high, but 
neither too low, as shown in the left column of Panel (a). 
As a consequence, this model under no triage achieves the highest average loss among all alternatives, 
$L(\pi_0, m_{\theta_0}) = 0.0053$ (setting 1).
This may not come as a surprise since the mapping between feature and response variables does not lie 
within the hypothesis class of predictive mo\-dels used during training.
However, since the predictions by human experts are more accurate than those provided by the above 
model in some regions of the feature space, we can deploy the model with the optimal triage policy 
$\pi^{*}_{\theta_0,b}$ given by Theorem~\ref{theorem:optimal-policy} and lower the average loss to 
$L(\pi^{*}_{\theta_0,b}, m_{\theta_0}) = 0.0020$ (setting 2), as shown in the right column of Panel (a) and 
suggested by Proposition~\ref{prop:full-automation}.
\begin{figure*}[t]
     \centering
     \subfloat[Training losses by the predictive models $m_{\theta_t}$]{
	\scriptsize
% 	\stackunder[5pt]{\includegraphics[scale=0.35]{unknown/ep_5_2_1.png}}{$k=5$} \hspace{4mm}
	\stackunder[5pt]{\includegraphics[width=0.22\textwidth]{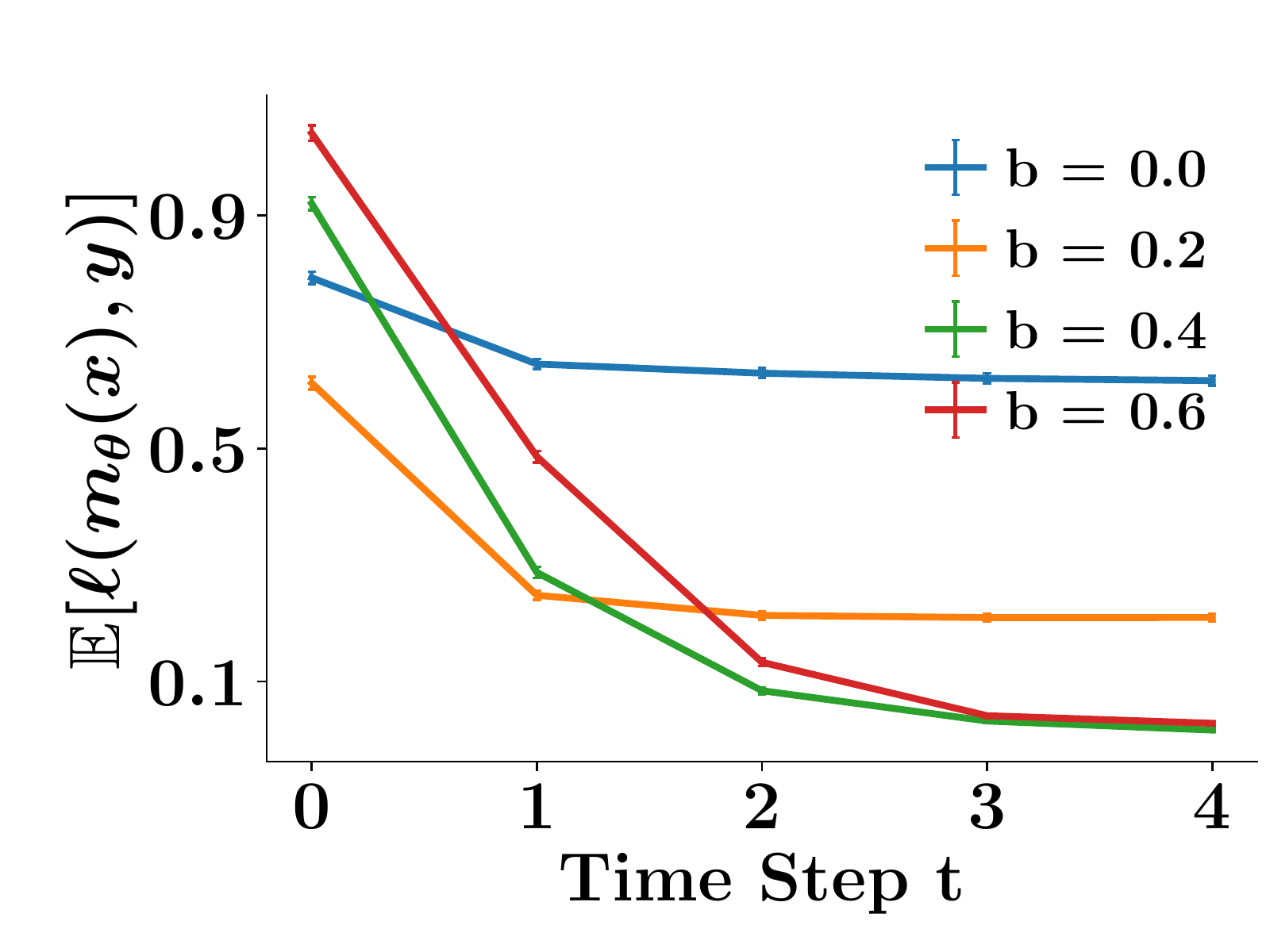}}{Hatespeech} % \hspace{4mm}
    \stackunder[5pt]{\includegraphics[width=0.22\textwidth]{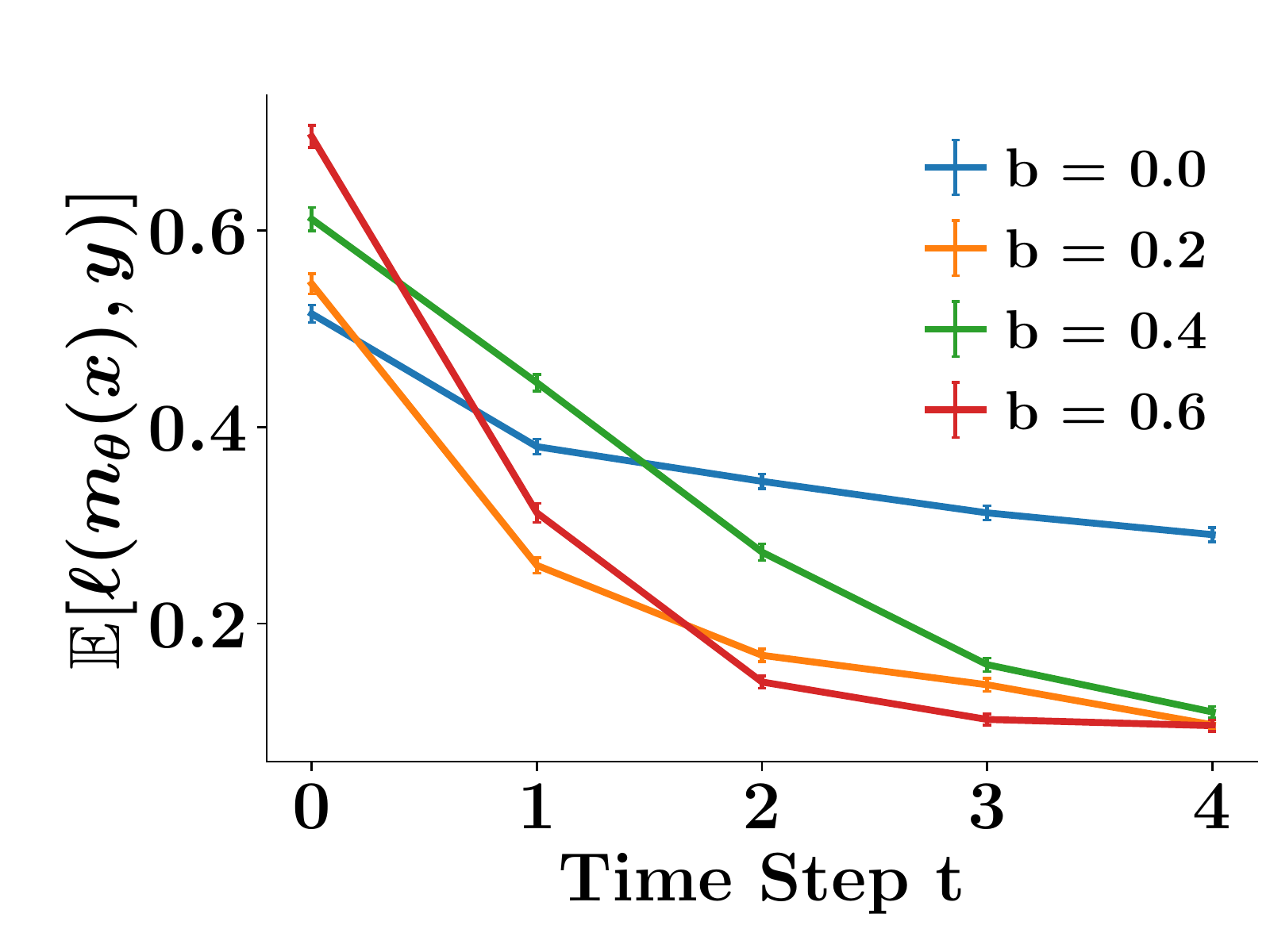}}{Galaxy zoo} % \hspace{4mm}
% 
% 	\stackunder[5pt]{\includegraphics[width=0.22\textwidth]{}}{TBD} 
     }  \hspace{5mm}
     \subfloat[Training losses by the triage policies $\hat{\pi}_{\gamma^{(i)}}$]{
        \scriptsize
	\stackunder[5pt]{\includegraphics[width=0.22\textwidth]{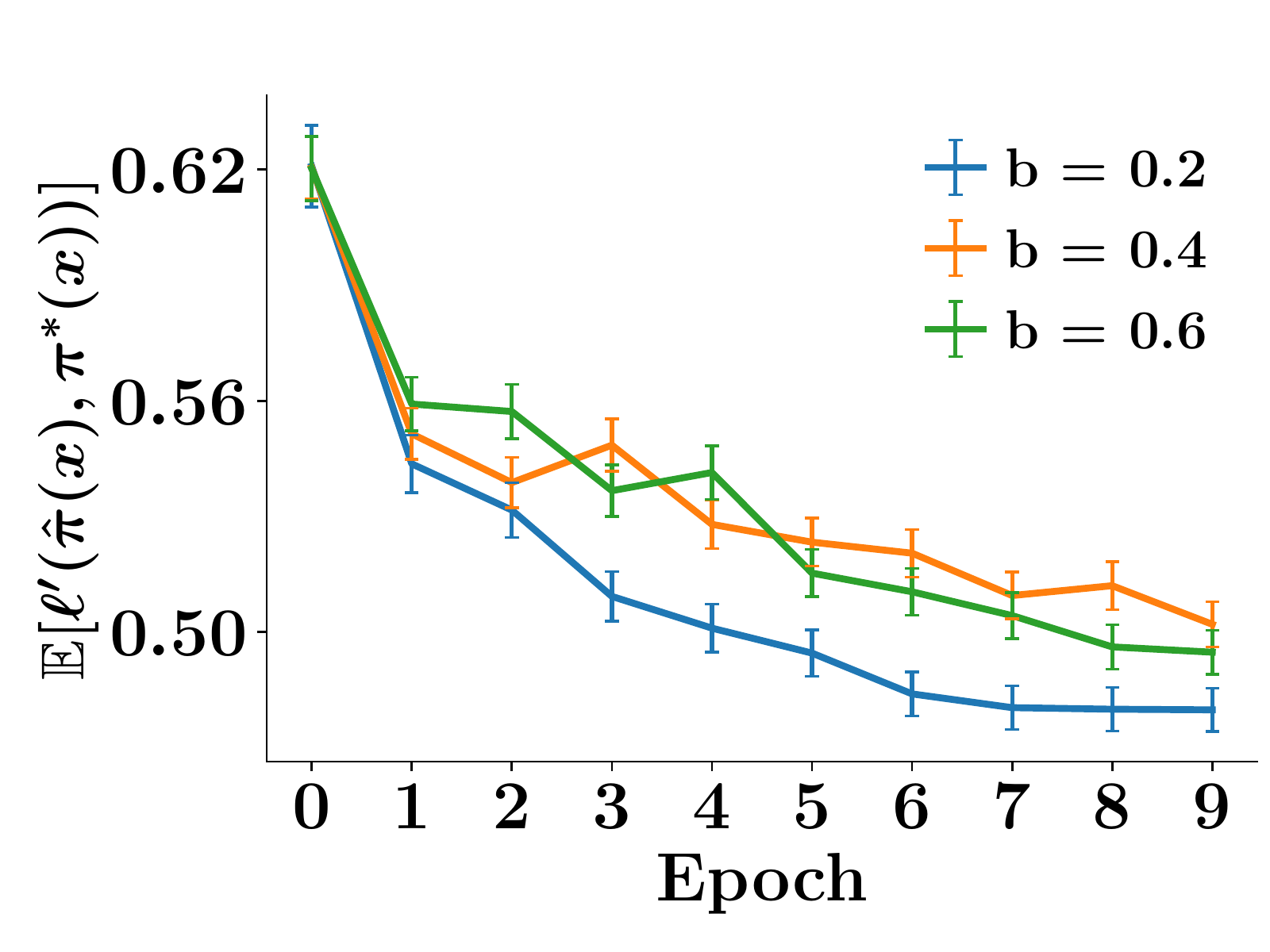}}{Hatespeech} 
	\stackunder[5pt]{\includegraphics[width=0.22\textwidth]{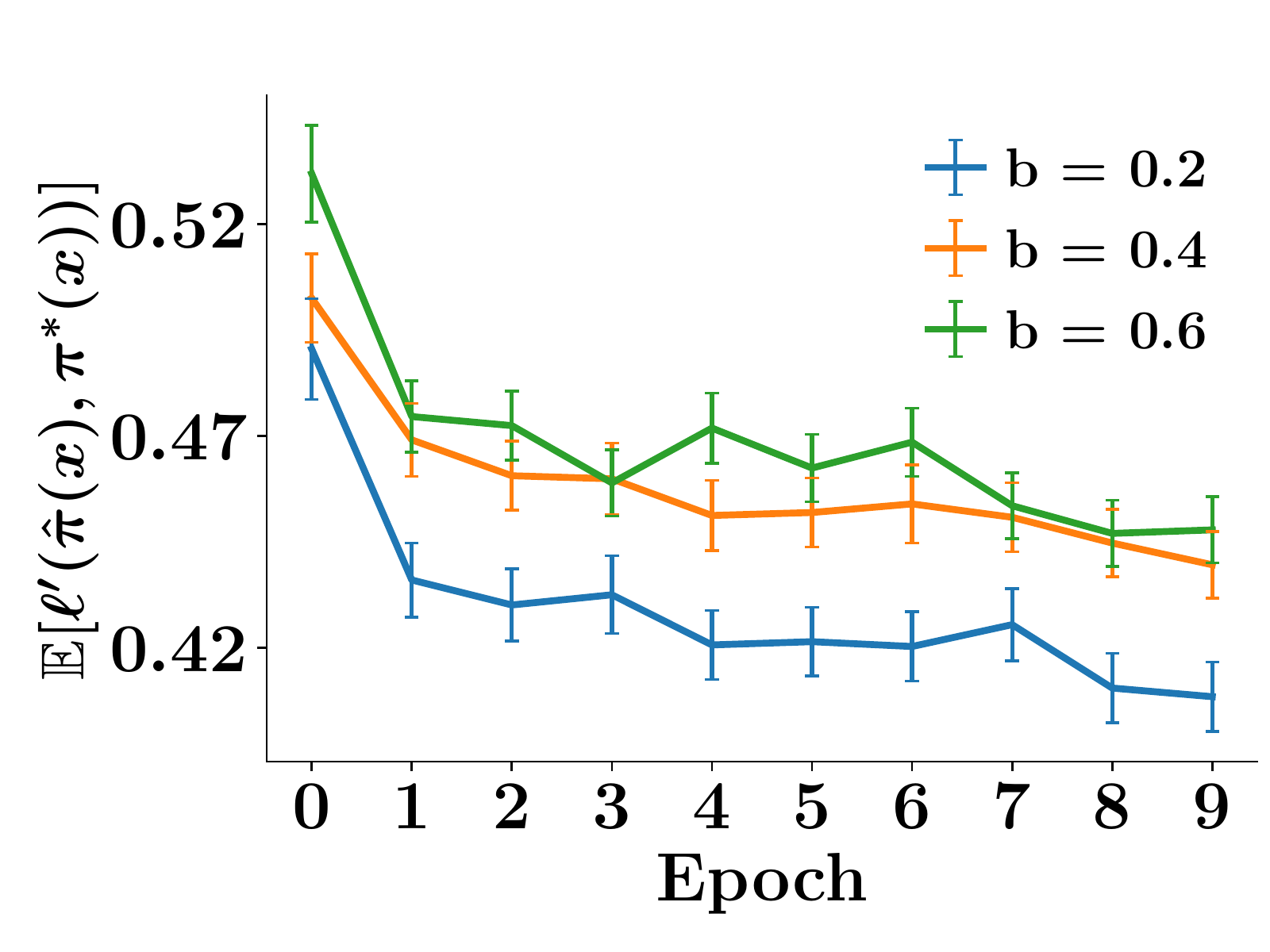}}{Galaxy zoo} 
%
% 	\stackunder[5pt]{\includegraphics[scale=0.35]{fig/hatespeech/us_training_curve_levels}}{$k=500$}  \hspace{4mm}
% 	\stackunder[5pt]{\includegraphics[scale=0.35]{unknown/ep_3000_0_0.png}}{$k=3000$}
     }
%
%vspace{-1mm}
\caption{Average training losses achieved by the predictive models $m_{\theta_t}$ and the triage policies $\hat{\pi}_{\gamma^{(i)}}$ on the 
Hatespeech and Galaxy zoo datasets during training.
% Algorithm~\ref{alg:sgd}.
%
In Panel (a), each predictive model $m_{\theta_t}$ is the output of \textsc{TrainModel}$(\cdot)$ at step $t$ and
in Panel (b), each triage policy $\hat{\pi}_{\gamma^{(i)}}$ is the output of \textsc{TrainTriage}$(\cdot)$ at epoch $i$.
Both functions are defined in Algorithm~\ref{alg:sgd}. Error bars correspond to plus and minus one standard error.
% , in the top row, $b = 0.4$, and in the bottom row, $b = 0.8$.
%
% These trajectories reveal that,   as $b$ increases, the predictive model performs better; and, $g_{\gamma}(\xb)$ provides a good approximation of $\pi^* _{m_{\theta}}$, as the loss $
% \EE[\ell'(g_{\gamma}(\xb), \pi^* _{m_{\theta_T}})]$ decreases with $t$ on the unseen validation samples. 
} 
\label{fig:training-real}
\vspace{-4mm}
\end{figure*}

In contrast with the predictive model trained under full automation $m_{\theta_0}$, the predictive model trained 
under triage $m_{\theta}$ learns to predict very accurately the instances that lie in the regions of the feature 
space colored in green and yellow but it gives up on the regions colored in red and blue, where its predictions 
incur a very high loss, as shown in Panel (b).
However, these latter instances where the loss would have been the highest if the predictive model had to predict 
their response variables $y$ are those that the optimal triage policy $\pi^{*}_{m_{\theta},b}$ hand in to human experts 
to make predictions.
As a result, this predictive model under the optimal triage policy does achieve the lowest average loss among
all alternatives, $L(\pi^{*}_{\theta,b}, m_{\theta}) = 0.0009$ (setting 4), as suggested by Propositions~\ref{prop:suboptimality}
and~\ref{prop:suboptimality-2}.

Finally, our results also show that deploying the predictive model $m_{\theta}$ under a suboptimal triage policy 
may actually lead to a higher loss $L(\pi^{*}_{\theta_0,b}, m_{\theta}) = 0.0031$ (setting 3) than the loss achieved by the predictive 
model trained under full automation $m_{\theta_0}$ with its optimal triage policy $\pi^{*}_{\theta_0,b}$.
This happens because the predictive model $m_{\theta}$ is trained to work well \emph{only} on the instances $x$ 
such that $\pi^{*}_{m_\theta,b}(x) = 0$ and not necessarily on those with $\pi^{*}_{m_0,b}(x) = 0$.
%

% \vspace{-5mm}
\section{Experiments on Real Data} 
% \vspace{-2mm}
\label{sec:real}
% In this section, we use Algorithm~\ref{alg:sgd} in two classification tasks in content moderation and scientific discovery, one binary and 
% one multiclass. 
In this section, we use our gradient-based algorithm in two binary and multi-class classification tasks in content moderation and scientific 
discovery,.
We first investigate the interplay between the accuracy of the predictive models and human experts and the structure of the optimal triage 
policies at different steps of the training process. 
Then, we compare the performance of our algorithm with several competitive baselines. % at test time.
% outperform those provided by several competitive baselines.
% Then, we show that 

\xhdr{Experimental setup} We use two publicly available datasets~\citep{hateoffensive,bamford2009galaxy}, one from an application in content moderation 
and the other for scientific discovery\footnote{\label{footnote:licenses}\scriptsize We chose these two particular applications because the corresponding datasets are among the only 
publicly available datasets that we found containing multiple human predictions per instance, necessary to estimate the human loss at an instance level, and 
a relatively large number of instances. 
The Hatespeech dataset is publicly available under MIT license and the Galaxy zoo dataset is publicly available under Creative Commons Attribution-Noncommercial-No 
Derivative Works 2.0 license.
Since our algorithm is general, it may be useful in other applications in which predictive models trained under full automation performs worse than humans in 
some instances (see Proposition~\ref{prop:full-automation}).}:

% \begin{itemize}
\noindent \emph{--- Hatespeech:} It consists of  $|\Dcal| = 24{,}783$ tweets containing lexicons used in 
hate speech. Each tweet is labeled by three to five human experts from Crowdflower as ``hate-speech'', ``offensive'', or ``neither''.
 %% 
%  Following the previous work~\citep{de2020aaai}, we first produce a  100 dimensional feature vector using fasttext~\citep{ft} for each sample and then use PCA to extract the set of top $m=50$ features, which is used as $\xb$.  
%  

\noindent \emph{--- Galaxy zoo:} It consists of $|\Dcal| = 10{,}000$ galaxy images\footnote{\scriptsize The original Galaxy zoo dataset consists of $61{,}577$ images, 
however, we report results on a randomly chosen subset of $10{,}000$ images due to scalability reasons. We found similar results in other random subsets.}. 
Each image is labeled by $30$$+$ human experts as ``early type'' or ``spiral''. 
%In our experiments, we report results in a randomly chosen subset 
%of $5{,}000$ images for scalability reasons.
%
%  The dataset\footnote{\scriptsize\url{https://data.galaxyzoo.org/}} contains a list of $m=$\ad{TBD} features per galaxy sample, which is used as $\xb$.
% 
% \end{itemize}

% \nastaran{Note that in order to estimate the human loss per instance level, we need a dataset with multiple human experts per sample. The Hatespeech and Galaxy-zoo datasets contain expert and crowd-sourced annotations respectively and therefore meet this criteria.}
For each tweet in the Hatespeech dataset, we first generate a $100$ dimensional feature vector using fasttext~\citep{ft} as $\xb$, similarly as in~\cite{de2020aaai}. 
For each image in the Galaxy zoo dataset, we use its corresponding pixel map\footnote{\scriptsize The pixel maps for each image are available at \url{https://www.kaggle.com/c/galaxy-zoo-the-galaxy-challenge}} as $\xb$.
 %
 % + \epsilon(h)
Given an instance with feature value $\xb$, we estimate $P(h \given \xb)=\frac{n_\xb(h)}{\sum_{h'\in \Ycal} n_\xb(h')}$, where $n_{\xb}(h)$ denotes the 
number of human experts who predicted label $h$ 
% and $\epsilon(h)$ is a given parameter\footnote{\scriptsize In the Hatespeech dataset, $\epsilon(h) = 0$ for all $h \in \Ycal$ 
% and, in the Galaxy zoo dataset, $\epsilon(h) = 0.1$ whenever $h = y$ and $\epsilon(h) = -0.1$ otherwise. 
%
% Such parameter avoids that the cross entropy loss becomes unbounded whenever all humans predict the same label.}, 
and we set its true label to $y = \argmax_{h\in \Ycal} P(h\given \xb)$. 
Moreover, at test time, for each instance that the triage policy assigns to humans, we sample $h \sim P(h \given \xb)$.
\begin{figure*}[t]
% \captionsetup[subfigure]{labelformat=empty}
\centering
\hspace{0.1cm}{\includegraphics[width=0.7\textwidth]{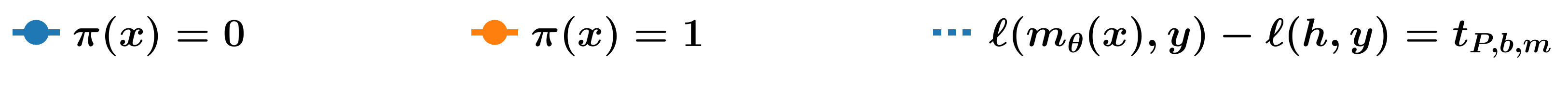}}\\[-3ex]
\subfloat{\includegraphics[width=0.28\textwidth]{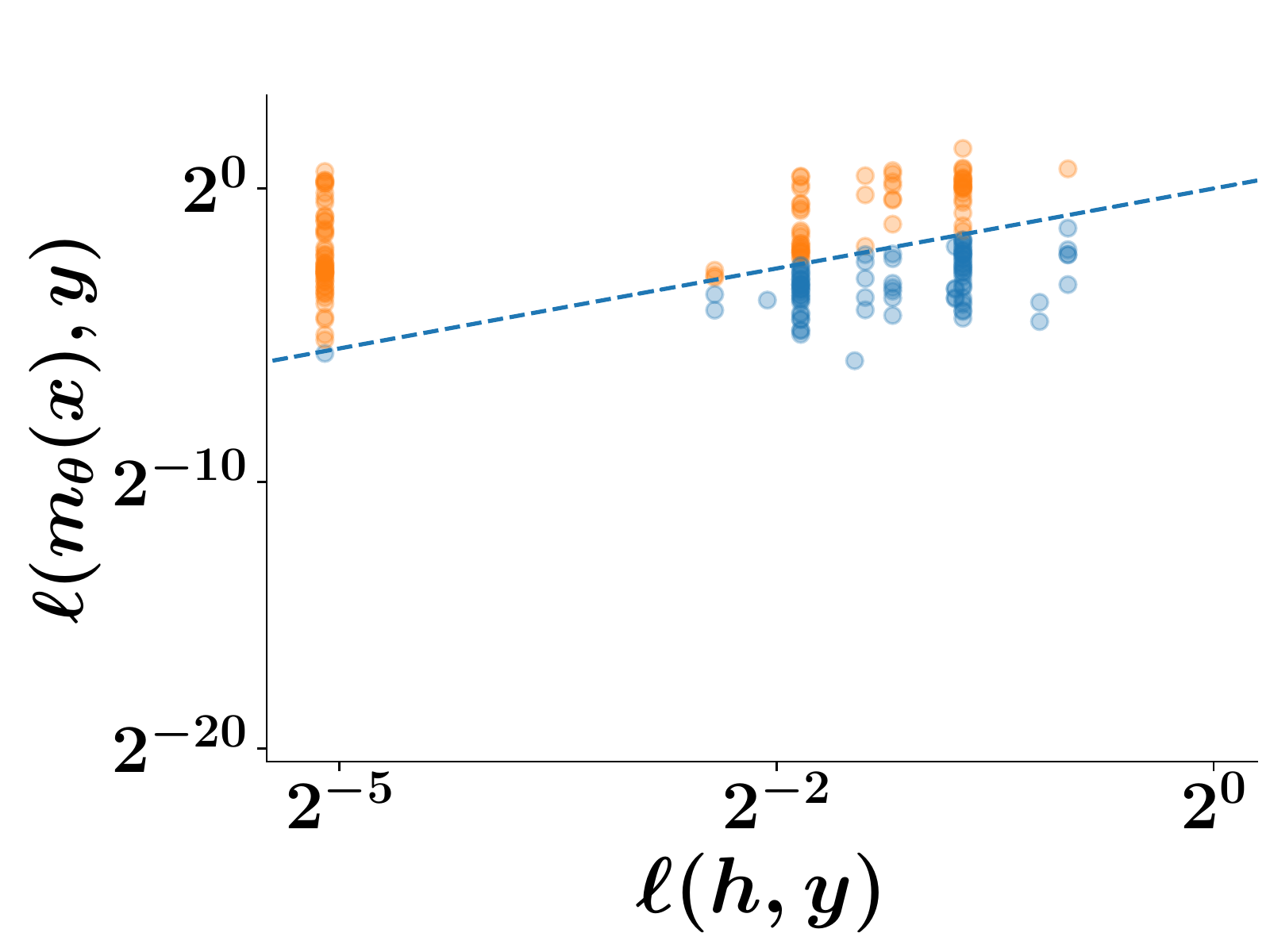}} \hspace{2mm}
\subfloat{\includegraphics[width=0.28\textwidth]{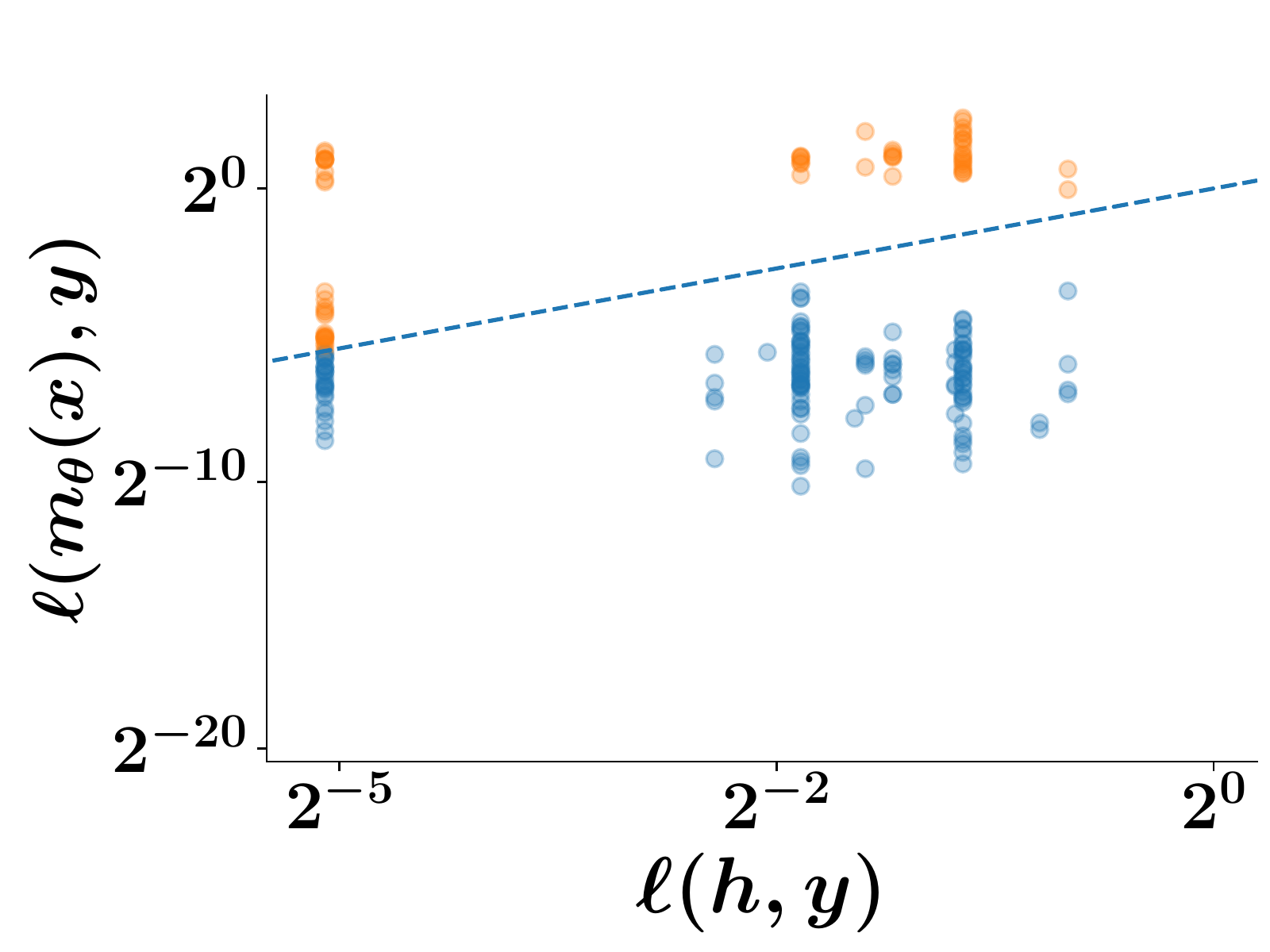}} \hspace{2mm}
% \subfloat{\includegraphics[width=0.24\textwidth]{}} \hspace{1mm}
 \subfloat{\includegraphics[width=0.28\textwidth]{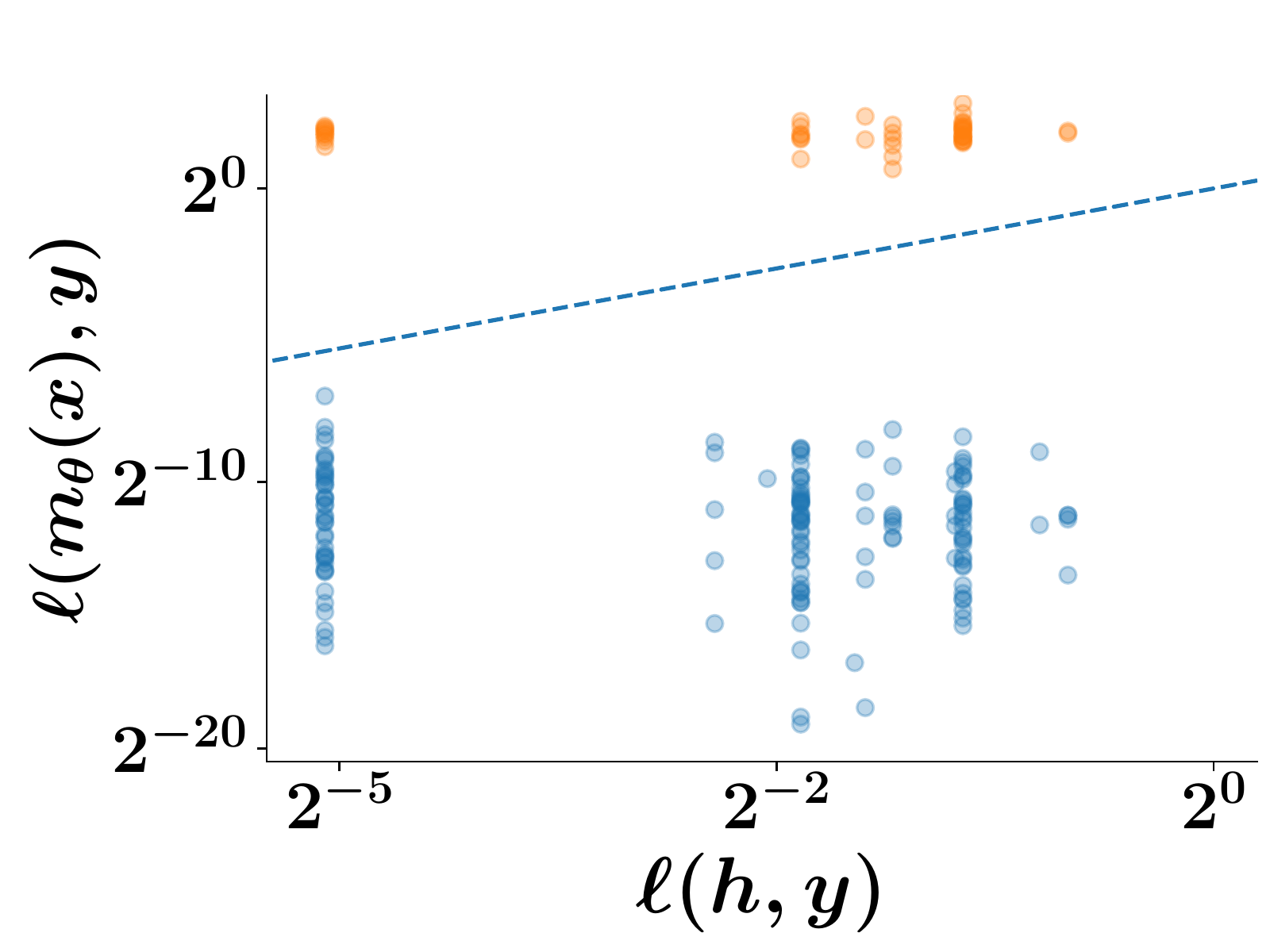}}\\
 \vspace{-4mm}
 \subfloat[Step $t=1$]{\setcounter{subfigure}{1}\includegraphics[width=0.28\textwidth]{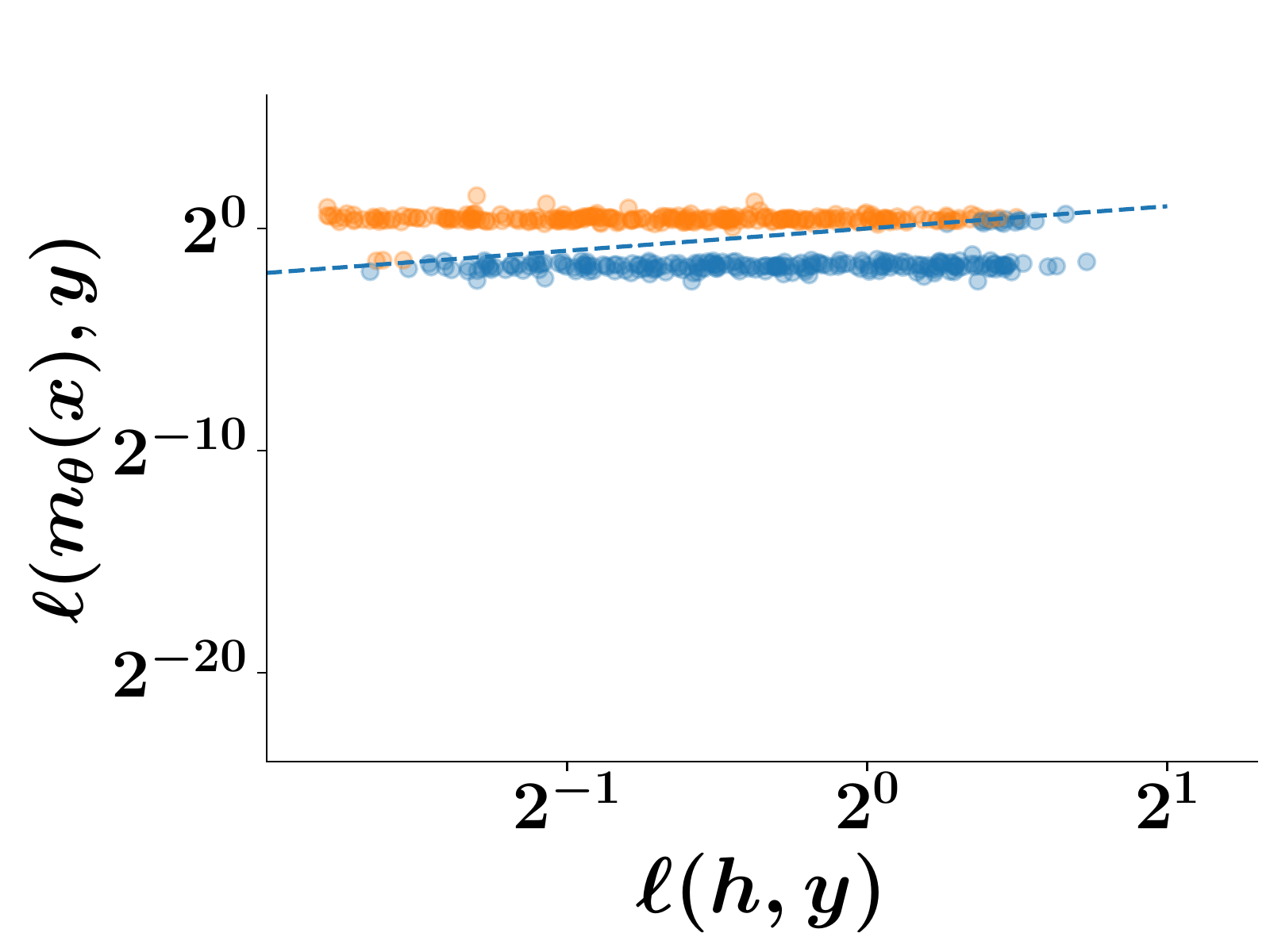}} \hspace{2mm}
\subfloat[Step $t=2$]{\includegraphics[width=0.28\textwidth]{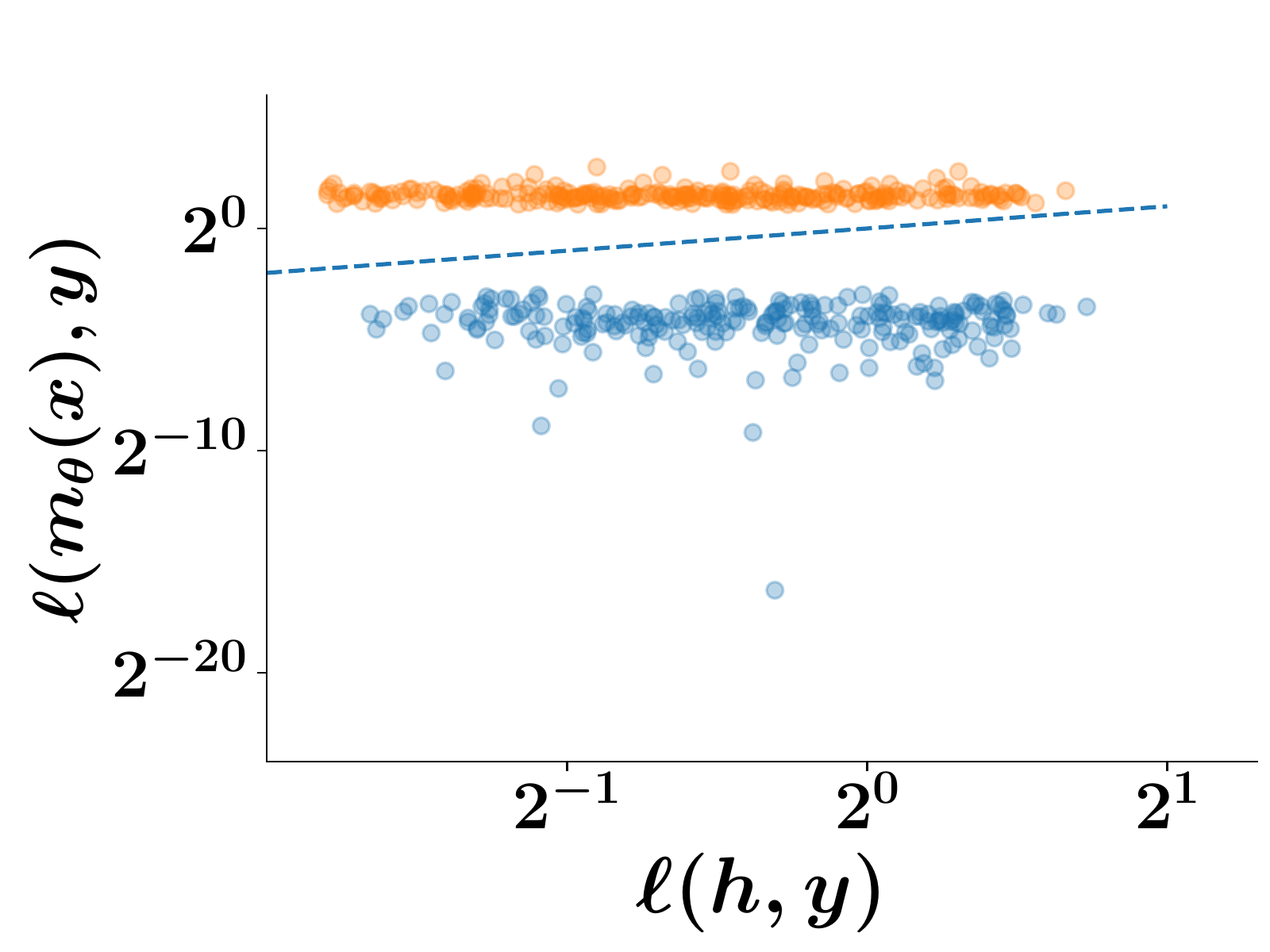}} \hspace{2mm}
% \subfloat{\includegraphics[width=0.24\textwidth]{fig/hatespeech/mlossVshloss2}} \hspace{1mm}
 \subfloat[Step $t=3$]{\includegraphics[width=0.28\textwidth]{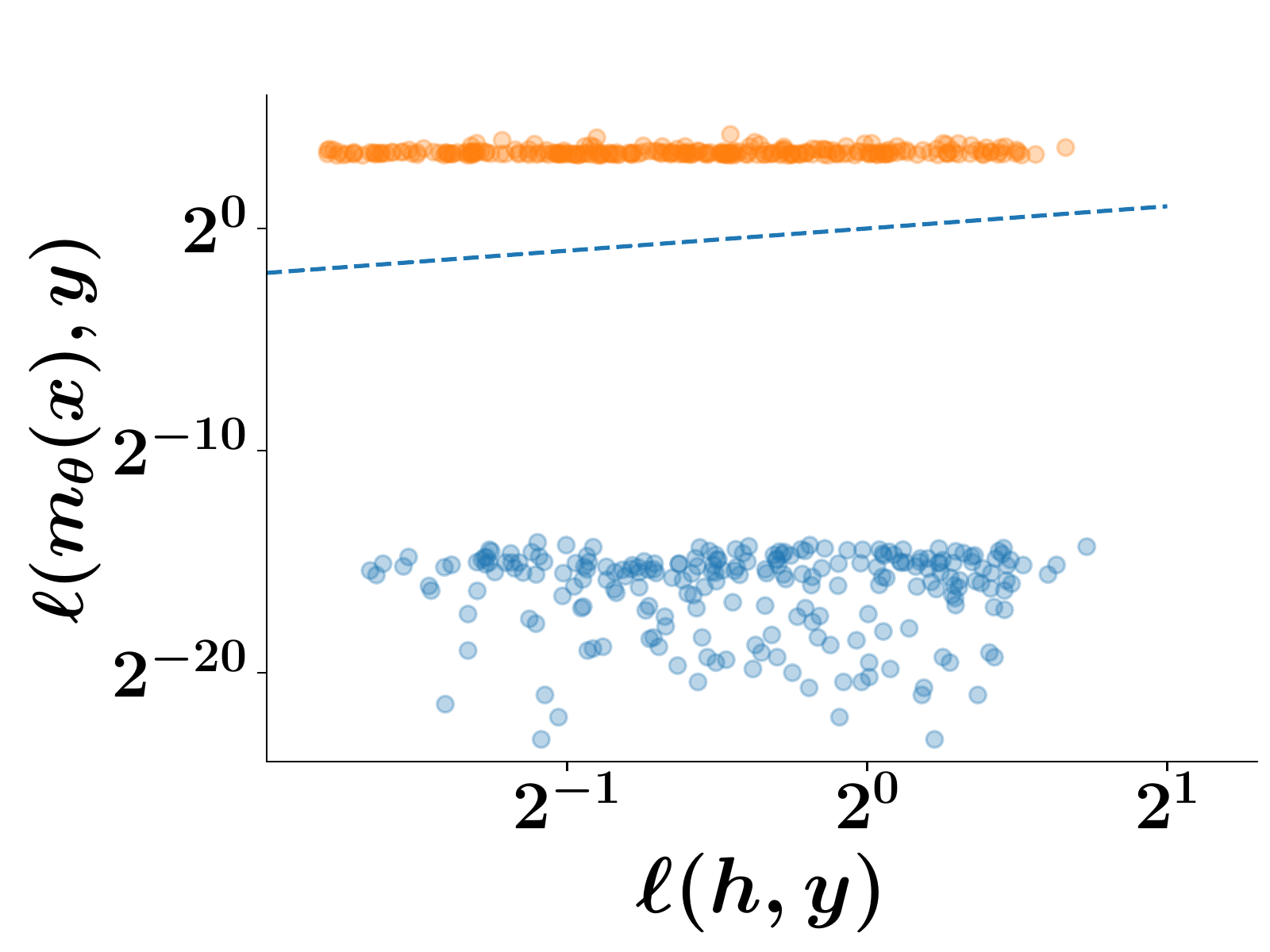}}
 \vspace{-2mm}
 \caption{Predictive model and expert losses at a per-instance level on a randomly selected subset of $500$ samples of the Hatespeech (top row) and Galaxy zoo (bottom row) datasets throughout the execution of 
%  Algorithm~\ref{alg:sgd}
our method
 during training. The maximum level of triage is set to $b=0.4$ for Hatespeech and $b=1.0$ for Galaxy zoo dataset. Each point corresponds to an individual
 instance and, for each instance, the color pattern indicates the triage policy value.}
 %\ad{Nastaran: please add legends for  orange, blue and the line on the top}} 
 % \manuel{Saving space}
 % We observe that our triage policy determines that the human would predict the labels for those samples (colored as orange) in which the model loss would have been the
% highest if it had to predict their labels, and; as the training progresses, the model incurs smaller error on the samples assigned to it (colored as blue). }
\label{fig:scatter}
\vspace{-2mm}
\end{figure*}

In all our experiments, 
% \manuel{We already said at the beginning of the section that we will consider a multiclass setting}
% In contrast to the experiments with synthetic data, where we have considered a regression problem, 
% here, we consider a multiclass classification problem that aims at predicting the label $y$ 
% from the set of labels $\Ycal$. To this aim,  
we consider the hypothesis class of probabilistic predictive models $\Mcal(\Theta)$ parameterized by softmax distributions, \ie,
\vspace{-2mm}
\begin{align}
 m_{\theta}(\xb) \sim  p_{\theta; \xb} =\text{Multinomial}\left(\left[\exp\left(\phi_{y, \theta}(\xb)\right)\right]_{y\in\Ycal}\right),\nn\\[-4ex]\nn
\end{align}
where, for the Hatespeech dataset, $\phi_{\bullet, \theta}$ is the convolutional neural network (CNN) by~\cite{kim2014convolutional}
and, for the Galaxy zoo dataset, it is the deep residual network by~\cite{he2015deep}.
During training, we use a cross entropy loss on the observed labels, \ie, $\ell(\hat{y},y)=- \log P(\hat{y}=y\given \xb)$. Here, if
an instance is assigned to the predictive model, we have that
\begin{equation*}
P(\hat{y}=y\given \xb) = \frac{\exp\left(\phi_{y, \theta}(\xb)\right)}{\sum_{y'} \exp\left(\phi_{y', \theta}(\xb)\right)},
\end{equation*} 
and, if an instance is assigned to a human expert, we have that $P(\hat{y}=y\given \xb) =   P(h = y \given \xb)$.
%
% More specifically, for the predictive models, we have $\ell(m_{\theta}(\xb),y) = -\II[y=y']\log p_{\theta;\xb}(y)$ and, for the humans, we have $\ell(h,y) = -\II[h=y]\log P(h\given \xb)$.
%
For the function $\hat{\pi}_{\gamma}(\xb)$, we use the class of logistic functions, \ie, $\hat{\pi}_{\gamma}(\xb)=\frac{1}{1+\exp(-\phi_{\gamma}(\xb))}$, where
%, for the Hatespeech dataset, 
%
$\phi_{\gamma}$ is the same CNN and deep residual network as in the predictive model, respectively. 
%
% developed by Kim~\cite{kim2014convolutional} and, for the Galaxy zoo dataset,  it is the 
% the deep residual network developed by He et al.~\cite{he2015deep}.
% Refer to Appendix~\ref{app:exp-details} for more details.  
% where
% $\phi_{\gamma}$ has the same structure as $\phi_{\bullet, \theta}$. 
Here, we also use the cross entropy loss, 
\ie, $\ell'(\hat{\pi}_{\gamma}(\xb), \pi^*_{m_{\theta_T},b}(\xb))= -\pi^*_{m_{\theta_T},b}(\xb) \log \hat{\pi}_{\gamma}(\xb) -(1-\pi^*_{m_{\theta_T},b}(\xb)) \log (1-\hat{\pi}_{\gamma}(\xb)) $.
%
% Finally, 
In each experiment, we used 60\% samples for training, 20\% for validation and 20\% for testing. 
Refer to Appendix~\ref{app:exp-details} for additional details on the experimental setup.
%
% The choice of the triage level $b$ varies across experiments and is mentioned therein.  
% We compare our method against four competitive baselines: 
% (i) Confidence-based triage~\citep{bansal2020optimizing}, (ii) Score-based triage~\citep{raghu2019algorithmic}, (iii) Surrogate-based triage~\citep{sontag2020} and (iv) Full automation without algorithmic triage,
% which are described in Appendix~\ref{app:exp-details}

\xhdr{Results} 
First, we look at the average loss achieved by the predictive models $m_{\theta_t}$ and triage policies $\hat{\pi}_{\gamma^{(i)}}$ throughout the execution of 
% Algorithm~\ref{alg:sgd} 
our method
during training.
%
% into the trajectories of loss functions of the predictive model $m_{\theta}$ and the model $g_{\gamma}$ during the progression of our training method described in 
%
Figure~\ref{fig:training-real} summarizes the results, which reveal several insights. 
For small values of the triage level, the models $m_{\theta_t}$ aim to generalize well across a large portion of the feature space. As a result, 
they incur a large training loss, as shown in Panel (a). In contrast, for $b\ge 0.4$, the models $m_{\theta_t}$ are trained to generalize across a smaller region 
of the feature space, which leads to a considerably smaller training loss. 
However, for such a high triage level, the overall performance of our method is also contingent on how well $\hat{\pi}_{\gamma}$ approximates the optimal triage policy. 
% $\pi^* _{m_{\theta_T},b}$.
% 
% Fortunately, Panel (b) shows that, as epochs increase, $g_{\gamma^{(j)}}(\xb)$ decreases both the average training and validation losses and it successfully 
% approximates $\pi^* _{m_{\theta_T}}$.
Fortunately, Panel (b) shows that, as epochs increase, the average training loss of $\hat{\pi}_{\gamma^{(j)}}$ decreases. 
%
% and it successfully approximates $\pi^* _{m_{\theta_T}}$.
%
Appendix~\ref{app:triage} validates further the trained approximate triage policies $\hat{\pi}_{\gamma}$.

% Next, we compare the machine error 
Next, we compare the predictive model and the human expert losses per training instance throughout the execution of the 
% Algorithm~\ref{alg:sgd}
our method during training.
Figure~\ref{fig:scatter} summarizes the results. At each step $t$, we find that the optimal triage policies $\pi^*_{m_{\theta_t},b}$ hands in to human experts those instances 
(in orange) where the loss would have been the highest if the predictive model had to predict their response variables $y$.
%
% determines that the human would predict the labels for those samples (colored as orange) in which the model 
% loss would have been the highest if it had to predict their labels (colored as blue). 
%
Moreover, at the beginning of the training process (\ie, low step values $t$), since the predictive model $m_{\theta_t}$ seeks to generalize across a large portion of the feature 
space, the model loss remains similar across the feature space. 
However, later into the training process (\ie, high step values $t$), the predictive models $m_{\theta_t}$ focuses on predicting more accurately the samples that the triage policy hands in
to the model, achieving a lower loss on those samples, and gives up on the remaining samples, where it achieves a high loss.

Finally, we compare the performance of our method against four baselines in terms of test misclassification error $P(\hat{y}\neq y)$. Refer to Appendix~\ref{app:exp-details} for 
more details on the baselines, which we refer to as confidence-based triage~\citep{bansal2020optimizing}, score-based triage~\citep{raghu2019algorithmic}, surrogate-based triage~\citep{sontag2020} and full automation triage\footnote{\scriptsize Among all the baselines we are aware of~\citep{raghu2019algorithmic, de2020aaai,de2021aaai, wilder2020learning, bansal2020optimizing, sontag2020}, we did not compare with~\cite{de2020aaai,de2021aaai} because they only allow linear models and SVMs and we did not compare with~\cite{wilder2020learning} because they did not provide enough 
details to implement their method.}. 
%
% Figure~\ref{fig:baseline} summarizes the results, which shows that (i) our method outperforms the baselines across the entire range of maximum triage levels $b$ for the Hatespeech dataset;  (ii) for Galaxy zoo dataset, a non-trivial triage level $0<b<1$ achieves better performance than both zero ($b=0$) or maximum triage ($b=1$); and, (iii) our method doesn’t give a competitive advantage in the Galaxy zoo
Figure~\ref{fig:baseline} summarizes the results for different values of the maximum triage level $b$. 
We find that, in each dataset, our algorithm is able to find the predictive model and the triage policy with the lowest misclassification
across the entire span of $b$ values.
In the Hatespeech dataset, they are those corresponding to $b = 0.8$ and, in the Galaxy zoo dateset, they are those corresponding to $b = 0.6$.
One could view these values of maximum triage levels as the optimal automation levels (within the level values we experimented with).
%
% the predictive models and triage policies found by our algorithm outperform the baselines 
% across the majority of maximum triage levels $b$. 
% in both the Hatespeech and Galaxy zoo datasets.
%
\begin{figure*}[t]
\centering
% \captionsetup[subfigure]{labelformat=empty}
\hspace{0.1cm}{\includegraphics[width=0.6\textwidth]{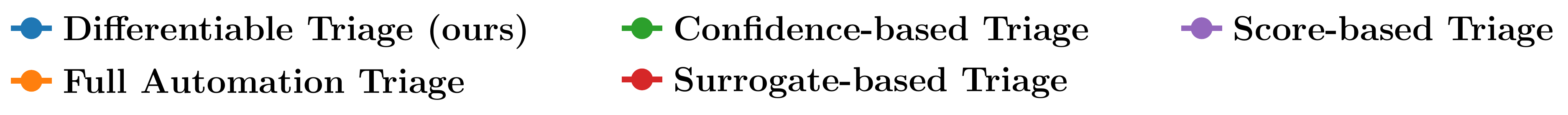}}\\[-3ex]
\subfloat[Hatespeech]{\includegraphics[width=0.28\textwidth]{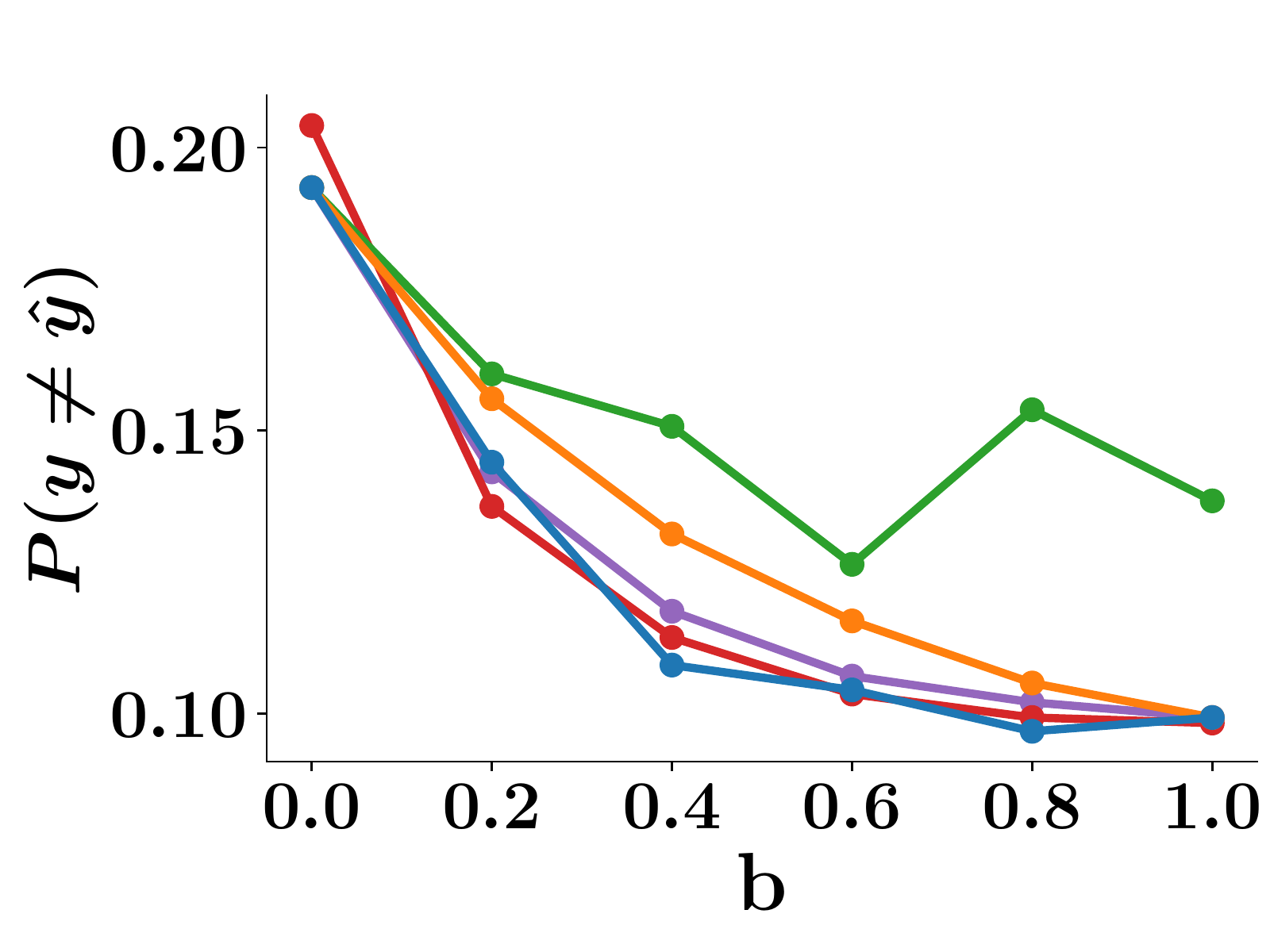}} \hspace{6mm}
\subfloat[Galaxy zoo]{\includegraphics[width=0.28\textwidth]{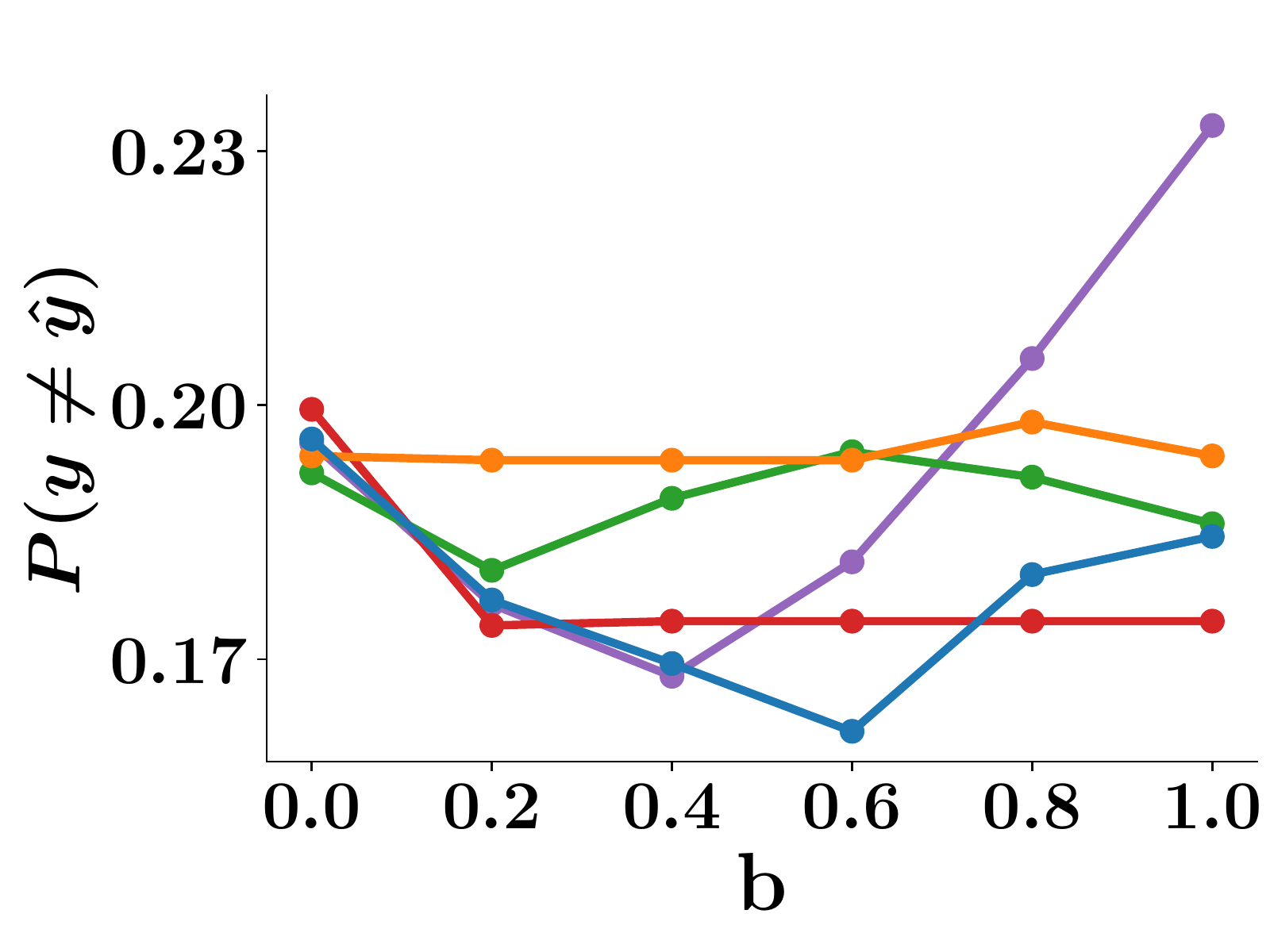}} 
% \subfloat{\includegraphics[width=0.23\textwidth]{}}\hspace{3mm}
\caption{Misclassification test error $P(\hat{y}\neq y)$ against the triage level $b$ on the Hatespeech and Galaxy zoo datasets for our algorithm, 
confidence-based triage~\citep{bansal2020optimizing}, score-based triage~\citep{raghu2019algorithmic}, surrogate-based triage~\citep{sontag2020} 
and full automation triage. Appendix~\ref{app:exp-details} contains more details on the baselines.
% \footnote{\scriptsize \noindent The loss for the surrogate-based method did not converge during training.}
}
\label{fig:baseline}
\vspace{-5mm}
\end{figure*}

% In the Galaxy zoo dataset, a non-trivial maximum triage level ($0<b<1$) achieves better performance than both zero ($b=0$) and highest possible ($b=1$) triage  levels; and, our method performs 
% comparably with confidence-based, full automation and score-based triage, while outperforming surrogate-based triage by a significant margin. 
% our method doesn’t give a competitive advantage because of the the poor learning of $g(\xb)$ (see Figure~\ref{fig:training-real}).
% (ii) for the Galaxy zoo dataset, our method performs comparably with confidence-based, full automation and score-based triage\footnote{\scriptsize \noindent The loss for the surrogate-based method did not converge during training.}
% , while outperforming surrogate-based triage by a significant margin. 

%\vspace{-4mm}
\section{Conclusions}
% \vspace{-3mm}
\label{sec:conclusions}
In this paper, we have contributed towards a better understanding of supervised learning under algo\-rith\-mic triage. 
We have first identified under which circumstances predictive models may benefit from algorithmic triage, including
those trained for full automation.
Then, given a predictive model and desired level of triage, we have shown that the optimal triage policy is a deterministic 
threshold rule in which triage decisions are derived deterministically from the model and human per-instance errors.
Finally, we have introduced a practical algorithm to train supervised learning models under triage and have shown that it 
outperforms several competitive baselines.

Our work also opens many interesting venues for future work. For example, we have assumed that each instance
is predicted by either a predictive model or a human expert. 
However, there may be many situations in which human experts predict all instances but their predictions are informed by 
a predictive model~\citep{lubars2020ask}. 
We have shown that our algorithm is guaranteed to converge to a local minimum of the empirical risk, however, it would be 
interesting to analyze the convergence rate and the generalization error.
%
% Moreover, we have studied the problem of learning under algorithmic triage in a batch setting. It would be very interesting
% to study the problem in an online setting. 
%
Finally, it would be valuable to assess the performance of supervised learning models under algorithmic triage using 
interventional experiments on a real-world application.
%

% \clearpage 
% \newpage

\xhdr{Acknowledgment}
Okati and Gomez-Rodriguez acknowledge support from the European Research Council (ERC) under the European Union'{}s Horizon 2020 research and innovation 
programme (grant agreement No. 945719). De has been partially supported by a DST Inspire Faculty Award.
% \clearpage 

\bibliography{refs}

\clearpage
\newpage
\onecolumn

\appendix

\section{Proofs} \label{app:proofs}

\xhdr{Proof of Proposition~\ref{prop:unbiased-biased}}
Due to Jensen'{}s inequality and the fact that, by assumption, the distribution of human predictions $P(h \given \xb)$ is not a point-mass, it holds 
that $\EE_{h}[\ell(h(\xb),y) \given \xb\,] > \ell(\mu_h(\xb),y)$. Hence, 
\begin{align}
  \EE_{\xb, y, h}\left[ (1-\pi(\xb)) \, \ell(\mac(\xb), y) + \pi(\xb) \,   \ell(h(\xb), y) \right] > \EE_{\xb, y}\left[ (1-\pi(\xb)) \, \ell(\mac(\xb), y) + \pi(\xb) \,   \ell(\mu_{h}(\xb), y) \right].
\end{align}
%
% The above inequality is strict whenever the distribution of human predictions $P(h \given \xb)$ is not a point-mass.
% \end{enumerate}

\xhdr{Proof of Proposition~\ref{prop:full-automation}}
Let $\pi(\xb) = \II(\xb \in \Vcal)$. Then, we have:
\begin{align*}
    L(\pi, \mac_0^{*}) 
    % &= \EE_{\xb (1-\pi(d=1\,|\, \xb)), y} \left[ 
    %\ell(\mac_0^{*}(\xb), y) \right] +  \EE_{\xb \pi(d=1\,|\, \xb), y \sim 
    %P(y \,|\, \xb)} \left[ \ad{\EE_{\hsample} \ell(\hsample, y)} \right]\\
    % & = \EE_{\xb (1-\II(\xb \in \Vcal)), y} \left[ 
    % \ell(\mac_0^{*}(\xb), y) \right] +  \EE_{\xb \II(\xb \in \Vcal), y 
    % \sim P(y \,|\, \xb)} \left[ \ad{\EE_{\hsample} \ell(\hsample, y)} \right]\\
    & = \int_{\xb \in \Xcal\setminus \Vcal} \EE_{y | \xb} \left[ \ell(\mac_0^{*}(\xb), y) \right] \, dP + \int_{\xb \in \Vcal} \EE_{y, \hsample  | \xb} \left[ \ell(\hsample, y) \right] \, dP \\
    & \overset{(i)} < \int_{\xb \in \Xcal\setminus \Vcal} \EE_{y | \xb} \left[ \ell(\mac_0^{*}(\xb), y) \right] \, dP + \int_{\xb \in \Vcal} \EE_{y | \xb} \left[ \ell(\mac_0^{*}(\xb), y) \right] \, dP \\
    & = \int_{\xb \in \Xcal} \EE_{y | \xb} \left[ \ell(\mac_0^{*}(\xb), y) \right] \, dP \\
    % & = \EE_{\xb (1-\II(d = 0)), y} \left[ 
    % \ell(\mac_0^{*}(\xb), y) \right] + \EE_{\xb \II(d = 0), y \sim P(y 
    % \,|\, \xb)} \left[ \ad{\EE_{\hsample} \ell(\hsample, y)} \right]\\
    & \overset{(ii)}= L(\pi_0, \mac_0^{*}),
\end{align*} 
where inequality $(i)$ holds by assumption and equality $(ii)$ holds by the definition of $\pi_0(\xb)$.

\xhdr{Proof of Theorem~\ref{theorem:optimal-policy}}
We first provide the proof of the unconstrained case. First, % rom Eq.~\ref{eq:avg-loss-x}, 
we note that, 
 \begin{align*}
     L(\pi, \mac) &=\EE_{\xb, \hsample } \left[ (1-\pi(\xb)) \, \EE_{y | \xb}[\ell(\mac(\xb), y) ] + \pi(\xb) \, \EE_{y | \xb}[ \ell(\hsample, y)] \right] \\
     & = \EE_{\xb} \left[ (1- \pi(\xb)) \, \EE_{y | \xb}[\ell(\mac(\xb), y)] + \pi(\xb) \, \EE_{y, \hsample | \xb}[ \ell(\hsample, y)] \right] \\
    & = \EE_{\xb} \left[\pi(\xb) \bigg[    \EE_{y, \hsample | \xb}[  \ell(\hsample, y) ] 
    - \EE_{y | \xb}[\ell(\mac(\xb), y)] \bigg] \right] + \EE_{\xb, y}[\ell(\mac(\xb), y) ]
    % \label{eq:loss-expanded}
 \end{align*}
Since the second term in the above equation does not depend on $\pi$, we can find the optimal policy $\pi$ by solving the following optimization problem:
\begin{align*}
    \underset{\pi}{\text{minimize}} &  \quad  \EE_{\xb} \left[\pi(\xb) \bigg[    \EE_{y, \hsample  | \xb}[ \ell(\hsample, y) ] 
    - \EE_{y | \xb}[\ell(\mac(\xb), y)]
    \bigg] \right]\\
    \text{subject to} &  \quad 0 \le  \pi(\xb) \le 1 \quad \forall \ \xb\in \Xcal. % \label{eq:const-1}
\end{align*} 
Note that the above problem is a linear program and it decouples with respect to $\xb$. Therefore, for each $\xb$,  the optimal solution is clearly given by:
\begin{equation*} % \label{eq:optimal-decision-unconstrained-x}
\pi_{\mac}^{*}(d = 1 \,|\, \xb) =
\begin{cases}
1 & \text{if} \,\, \EE_{y | \xb}[\ell(\mac(\xb), y) -    \EE_{\hsample | \xb}[\ell(\hsample, y)]]  > 0 \\
0 & \text{otherwise}
\end{cases}
\end{equation*}
Next, we provide the proof of the constrained case.
Here, we need to solve the following optimization problem:
\begin{align*}
 \underset{\pi}{\text{minimize}}& \quad \EE_{\xb} \left[\pi(\xb) \bigg[    \EE_{y, \hsample  | \xb}[\ell(\hsample, y) ] 
    - \EE_{y | \xb}[\ell(\mac(\xb), y)]
    \bigg] \right] \\
\text{subject to} & \quad \EE_{\xb} [\pi(\xb)] \leq b, \\
% \label{eq:budget-const}\\
& \quad 0 \le  \pi(\xb) \le 1 \quad \forall \ \xb\in \Xcal. 
\end{align*}
To this aim, we consider the dual formulation of the optimization problem, where we only introduce a Lagrangian multiplier $\tau_{P,b}$ for the first constraint, \ie,
\begin{align}
  \underset{\tau_{P,b} \ge 0}{\text{maximize}}\, \underset{\pi}{\text{minimize}}&\ \     \EE_{\xb} \left[\pi(\xb) \bigg[     \EE_{y,  \hsample | \xb}[ \ell(\hsample, y) ] 
    - \EE_{y | \xb}[\ell(\mac(\xb), y)]
    \bigg] \right] \nonumber \\
     &\qquad\qquad \qquad 
    + \EE_{\xb} \left[\tau_{P,b}\left(\pi(\xb)- b \right)\right] \\
\text{subject to}
& \quad 0 \le  \pi(\xb) \le 1 \quad \forall \ \xb\in \Xcal.
\end{align}
The inner minimization problem can be solved using the similar argument for the unconstrained case. Therefore, we have:
\begin{equation*} % \label{eq:optimal-decision-constrained-x}
\pi_{{\mac}^{*},b}(d = 1 \,|\, \xb) =
\begin{cases}
1 & \text{if} \,\, \EE_{y | \xb}[\ell(\mac(\xb), y) -   \EE_{\hsample | \xb}[ \ell(\hsample, y)]]  > t_{P, b, m} \\
0 & \text{otherwise}
\end{cases}
\end{equation*}
where
\begin{align*}
      t_{P,b}= \underset{\tau_{P,b} \ge 0}{\text{argmax}}\,\, \EE_{\xb} \left[ \min \left( \EE_{y | \xb}[\EE_{\hsample | \xb} [\ell(\hsample, y)] - \ell(\mac(\xb), y)]] + \tau_{P,b}, \, 0\right)  - \tau_{P,b}\,b \right]
\end{align*}
% \begin{align}

 \xhdr{Proof of Proposition~\ref{prop:suboptimality}}
The optimal predictive model $m_{\theta^{*}_0}$ under full automation within a parameterized hypothesis class of predictive models $\Mcal(\Theta)$ satisfies that
% %
\begin{equation} \label{eq:zero-gradient}
\nabla_{\theta} \left. L(\pi_0, \mac_{\theta})\right|_{\theta = \theta^{*}_0} = \EE_{\xb, y} \left[ \nabla_{\theta} \left. \ell(m_{\theta}(\xb), y)\right|_{\theta = \theta^{*}_0} \right]  = \bm{0}
\end{equation}
and the optimal predictive model $m_{\theta^{*}}$ under $\pi^{*}_{m_{\theta^{*}},b}$ satisfies that
\begin{equation}
    \nabla_{\theta} \left. L(\pi^{*}_{\mac_{\theta},b}, \mac_{\theta})\right|_{\theta = \theta^{*}} = \bm{0}.
\end{equation}
%
% Moreover, due to Eqs.~\ref{eq:gradient-condition} and~\ref{eq:zero-gradient}, it holds that
% %
% \begin{equation*}
%     \int_{\xb \in \Xcal \backslash \Vcal} \EE_{y}\left[ \left. \nabla_{\theta} \ell(\mac_{\theta}(\xb), y) \right|_{\theta = \theta^{*}_0} \right] \, dP = -\int_{\xb \in \Vcal} \EE_{y}\left[ \left. \nabla_{\theta} \ell(\mac_{\theta}(\xb), y) \right|_{\theta = \theta^{*}_0} \right] \, dP \neq  \bm{0}
% \end{equation*}
%
% and thus we have that
Now we have that
\begin{align}
\nabla_{\theta} \left. L(\pi^{*}_{\mac_{\theta},b}, \mac_{\theta})\right|_{\theta = \theta^{*}_0} &= \nabla_{\theta} \left. L(\pi_0, \mac_{\theta})\right|_{\theta = \theta^{*}_0} \nn\\
&\qquad - \nabla_{\theta} \left. \EE_{\xb} \left[ \textsc{Thres}_{t_{P, b, m}}\left( \EE_{y | \xb} \left[ \ell(\mac(\xb), y) - \EE_{\hsample | \xb}[\ell(\hsample, y)] \right], 0 \right) \right]\right|_{\theta = \theta^{*}_0}\nonumber \\
%\max\left( \EE_{y | \xb}\left[\ell(\mac_{\theta}(\xb), y) - \EE_{\hsample | \xb} [ \ell(\hsample, y)] \right], 0\right)\right]\right|_{\theta = \theta^{*}_0}\nonumber \\
&= 0 - \int_{\xb \in  \Vcal} \EE_{y | \xb}\left[ \left. \nabla_{\theta} \ell(\mac_{\theta}(\xb), y) \right|_{\theta = \theta^{*}_0} \right] \, dP - \int_{\xb \in \Xcal\backslash\Vcal} 0 \, dP
  \neq \bm{0}.\label{eq:inter-grad}
\end{align}
where we have used that
\begin{equation*}
    \nabla_x \textsc{Thres}_{t_{P, b, m}}(f(x), 0) = 
    \begin{cases}
    \nabla_{x} f(x) & \text{if} \,\, f(x) > t_{P, b, m} \\
    0 & \text{if} \,\, f(x) < t_{P, b, m}.
    \end{cases}
\end{equation*}
Hence, we can immediately conclude that $L(\pi^{*}_{m_{\theta^{*}_0},b}, m_{\theta^{*}_0}) > \min_{\theta \in \Theta} L(\pi^{*}_{\mac_{\theta},b}, \mac_{\theta})$.

\xhdr{Proof of Proposition~\ref{prop:suboptimality-2}}
Under triage policy $\pi^{*}_{m_{\theta'},b}$, we have that:
%% %
\begin{align*}
\nabla_{\theta} \left. L(\pi^{*}_{m_{\theta'},b}, \mac_{\theta}) \right|_{\theta = \theta'} &= \nabla_{\theta} \left. \EE_{\xb} \big[ (1-\pi^{*}_{m_{\theta'}}(\xb)) \, \EE_{y | \xb}[\ell(\mac_{\theta}(\xb), y) ] + \pi^{*}_{m_{\theta'},b}(\xb) \, \EE_{y, \hsample | \xb} \left[ \ell(h, y) \right] \big] \right|_{\theta = \theta'} \\
&= \EE_{\xb} \big[ (1-\pi^{*}_{m_{\theta'},b}(\xb)) \, \EE_{y | \xb}[ \nabla_{\theta} \left. \ell(\mac_{\theta}(\xb), y) \right|_{\theta = \theta'} ] \big] \\
&=  \int_{\xb \in \Vcal} \EE_{y | \xb}\left[ \left. \nabla_{\theta} \ell(\mac_{\theta}(\xb), y) \right|_{\theta = \theta'} \right] \neq \bm{0},
\end{align*}
where $\Vcal = \{ \xb \given \pi_{m_{\theta'},b}(\xb) = 0 \}$. Hence, we can immediately conclude that $L(\pi^{*}_{m_{\theta}b}, m_{\theta'}) > \min_{\theta \in \Theta} L(\pi^{*}_{m_{\theta},b}, \mac_{\theta})$.

% \ad{
% \begin{proposition}
% Assume that in the gradient descent (GD) procedure described in Eq.~\ref{eq:sgd-step-1},
% $\theta^{(0)} _t =\theta_{t-1}$,
% $\theta_t=\theta ^{(M)} _t$, where the GD procedure runs for $M$ iteration during step $t$. Then, given a finite $\Lambda >0 $, so that $\nabla_{\theta} ^2 \ell(m_{\theta}(\xb),y)$$\preccurlyeq$ $\Lambda \II$, we have $L(\pi^{*}_{m_{\theta_{t-1}}}, m_{\theta_{t-1}}) \geq L(\pi^{*}_{m_{\theta_{t}}}, m_{\theta_{t}} )$, if step-size $\alpha^{(i)}\le 1/\Lambda$ for all $i>0$.
% \end{proposition}
% \begin{proof}
\xhdr{Proof of Proposition~\ref{prop:GD-error-change}}
Since $\pi^{*}_{m_{\theta_{t}},b} =\argmin_{\pi} L(\pi, m_{\theta_{t}})$, we have that:
 \begin{align}
  L(\pi^{*}_{m_{\theta_{t}},b}, m_{\theta_{t}} ) \le L(\pi^{*}_{m_{\theta_{t-1}},b}, m_{\theta_{t}} ) \label{eq:intermed1}
 \end{align}
% Next, 
% we follow  replicate the derivation of the result~\cite[Eq. 9.17]{boyd2004convex} to show that
% \begin{align}
%  L(\pi^{*}_{m_{\theta_{t-1}}}, m_{\theta_{t}} )\le L(\pi^{*}_{m_{\theta_{t-1}}}, m_{\theta_{t-1}} )
% \end{align}
Then, if $\theta_t ^{(i)}$ is computed from $\theta_t ^{(i-1)}$ using Eq.~\ref{eq:sgd-step-1}, then we have that~\citep[Eq. 9.17]{boyd2004convex}:
\begin{align}
 L(\pi^{*}_{m_{\theta _{t-1}},b}, m_{\theta^{(i)} _{t}} )& \le L(\pi^{*}_{m_{\theta _{t-1}},b}, m_{\theta^{(i-1)} _{t}} )\nn\\
 &\quad + \nabla_{\theta} L(\pi^{*}_{m_{\theta _{t-1}},b}, m_{\theta^{(i-1)} _{t}} )^\top  (\theta^{(i)} _{t} - \theta^{(i-1)} _{t}) +  \frac{\Lambda}{2} \bnm{\theta^{(i-1)} _{t} - \theta^{(i)} _{t}}^2\nn\\
 & \overset{(a)}{=} L(\pi^{*}_{m_{\theta _{t-1}},b}, m_{\theta^{(i-1)} _{t}} )  -\alpha^{(i-1)}\nabla_{\theta} L(\pi^{*}_{m_{\theta _{t-1}},b}, m_{\theta^{(i-1)} _{t}} )^\top \nabla_{\theta} L(\pi^{*}_{m_{\theta _{t-1}},b}, m_{\theta^{(i-1)} _{t}} )\nn\\
 &\qquad\qquad+(\alpha^{(i-1)} )^2 \frac{\Lambda}{2} \bnm{\nabla_{\theta} L(\pi^{*}_{m_{\theta _{t-1}}}, m_{\theta^{(i-1)} _{t}} )}^2 \nn\\
 & {=} L(\pi^{*}_{m_{\theta _{t-1}},b}, m_{\theta^{(i-1)} _{t}} )-\left(\alpha^{(i-1)} -(\alpha^{(i-1)} )^2 \frac{\Lambda}{2}\right) \bnm{\nabla_{\theta} L(\pi^{*}_{m_{\theta _{t-1}},b}, m_{\theta^{(i-1)} _{t}} )}^2 \nn\\
&  \overset{(b)}{<} L(\pi^{*}_{m_{\theta _{t-1}},b}, m_{\theta^{(i-1)} _{t}} )-  \frac{\alpha^{(i-1)}}{2}  \bnm{\nabla_{\theta} L(\pi^{*}_{m_{\theta _{t-1}},b}, m_{\theta^{(i-1)} _{t}} )}^2 \nn \\
&  {<}\,   L(\pi^{*}_{m_{\theta _{t-1}},b}, m_{\theta^{(i-1)} _{t}} ), \label{eq:theta-dec}
\end{align}
where equality $(a)$ follows from the fact that
\begin{align}
 \theta_{t}^{(i)} - \theta_{t}^{(i-1)}= - \alpha^{(i-1)} \nabla_{\theta} \left. L(\pi^{*}_{m_{\theta_{t-1}},b}, m_{\theta}) \right|_{\theta=\theta_{t}^{(i-1)}}
\end{align}
and inequality $(b)$ follows by assumption, \ie, $\alpha^{(i-1)}\Lambda < 1$.
% \end{proof}

Eq.~\ref{eq:theta-dec} directly implies that 
\begin{equation*}
L(\pi^{*}_{m_{\theta _{t-1}},b}, m_{\theta_{t}} ) <  L(\pi^{*}_{m_{\theta _{t-1}},b}, m_{\theta^{(0)} _{t}} ) = L(\pi^{*}_{m_{\theta _{t-1}},b}, m_{\theta _{t-1}} ),
\end{equation*}
where the last equality follows by assumption, \ie, $\theta^{(0)}_t = \theta_{t-1}$.
This result, together with Eq.~\ref{eq:intermed1}, proves the proposition.

\xhdr{Proof of Theorem~\ref{thm:global}}
% \begin{proof}
%
% To prove the theorem, we use the proof technique introduced 
% by Jain and Kar~\cite{jain}.
%
Let $\Pi_b :=  \set{\pi \in \Pi \given \EE_{\xb}\left[\pi(\xb)\right] \le b}$ and $\Phi_t =L(\pi^*_{m_{\theta_{t}}, b}, m_{\theta_{t}})-L(\pi^*_{m_{\theta^*}, b}, m_{\theta^*})$. Then, note that, for all $\pi \in \Pi_b$, we have that
\begin{equation}
\nabla_{\theta} ^2\, L(\pi,m_{\theta}) = \EE_{\xb, y}\left[ (1-\pi(\xb)) \, \hes \ell(\mac_{\theta}(\xb), y)  \right]
\implies \emin(1-b) \II \sdleq \nabla_{\theta} ^2\, L(\pi,m_{\theta}) \sdleq \emax \II \label{eq:cx}
\end{equation}
Moreover, we also have that
\begin{align}
 \Phi_{t+1} = L( \pi^* _{m_{\theta_{t+1}},b},m_{\theta_{t+1}})-L( \pi^* _{m_{\theta^*},b},m_{\theta^*})  
& \overset{(i)}{\le}  L( \pi^* _{m_{\theta^*},b} ,m_{\theta_{t+1}})-L( \pi^* _{m_{\theta^*},b},m_{\theta^*})\nn\\
& \overset{(ii)}{\le} \frac{\emax}{2} \bnm{\theta_{t+1}-\theta^*}^2, \label{eq:phi1}
\end{align}
where (i) follows from the fact that $\argmin_{\pi\in\Pi_b} L(\pi,m_{\theta_{t+1}}) = \pi^* _{m_{\theta_{t+1}},b}$ 
and (ii) follows from the Taylor series expansion of $L(\pi^* _{m_{\theta^*},b} ,m_{\theta})$ around $\theta = \theta^*$ and Eq.~\ref{eq:cx}, \ie,
\begin{equation*}
    L( \pi^* _{m_{\theta^*},b} ,m_{\theta_{t+1}}) \le L( \pi^* _{m_{\theta^*},b} ,m_{\theta^*}) +
    \underbrace{\nabla_{\theta} L( \pi^*_{m_{\theta^*},b} ,m_{\theta}) ^\top \big|_{\theta=\theta^{*}} }_{=0} (\theta_{t+1}-\theta^*)   +  \frac{\Lambda_{\max}}{2} \bnm{\theta_{t+1}-\theta^*}^2.
\end{equation*}
Next, we have that
\begin{align}
 L( \pi^* _{m_{\theta_{t}},b},m_{\theta_{t}}) 
 &\overset{(i)}{\ge} L( \pi^* _{m_{\theta_{t}},b},m_{\theta_{t+1}}) + \frac{(1-b)\emin}{2} \bnm{\theta_{t+1}-\theta_t}^2 \nn\\
 & \overset{(ii)}{\ge} L( \pi^* _{m_{\theta_{t+1}},b},m_{\theta_{t+1}}) + \frac{(1-b)\emin}{2} \bnm{\theta_{t+1}-\theta_t}^2, \label{eq:delphi}
\end{align}
where (i) follows from the fact that $\theta_{t+1} = \argmin_{\theta} L(\pi^{*}_{m_{\theta_t}}, m_{\theta})$, the Taylor series expansion of $L( \pi^* _{m_{\theta_t},b} ,m_{\theta})$ around $\theta = \theta_{t+1}$, and Eq.~\ref{eq:cx}, \ie,
\begin{align}
    L( \pi^* _{m_{\theta_t},b} ,m_{\theta_{t}}) \ge L( \pi^* _{m_{\theta_t},b} ,m_{\theta_{t+1}}) & + \underbrace{\nabla_{\theta} L( \pi^* _{m_{\theta_t},b} ,m_{\theta})^\top \big|_{\theta=\theta_{t+1}}}_{=0} (\theta_{t}-\theta_{t+1})\nn\\ 
    & +  \frac{\emin (1-b)}{2} \bnm{\theta_t-\theta_{t+1}}^2, \nn
\end{align}
and (ii) follows from the fact that $\argmin_{\pi\in\Pi_b} L( \pi ,m_{\theta_{t+1}}) = \pi^* _{m_{\theta_{t+1}},b}$.
% % and,
% % \begin{align}
% %  L( \pi^* _{m_{\theta_{t+1}},b},m_{\theta_{t+1}})-L( \pi^* _{m_{\theta^*},b},m_{\theta^*})  
% %  \le L( \pi^* _{m_{\theta^*},b} ,m_{\theta_{t+1}})-L( \pi^* _{m_{\theta^*},b},m_{\theta^*})  
% %  \le  \frac{\emax}{2} \bnm{\theta_{t+1}-\theta^*}^2 .
% % \end{align}
%
Then, from Eq.~\ref{eq:delphi}, it readily follows that
\begin{align}
 \Phi_t-\Phi_{t+1} \ge \frac{(1-b)\emin}{2} \bnm{\theta_{t+1}-\theta_t}^2. \label{eq:phidiff}
\end{align}
Now, combining Eq.~\ref{eq:phi1} and Eq.~\ref{eq:phidiff}, we have that
\begin{align}
 \Phi_{t+1} & \leq \frac{\emax}{2} \bnm{\theta_{t+1}-\theta^*}^2 \le \emax \left[ \bnm{\theta_{t+1}-\theta_t}^2 +\bnm{\theta_{t}-\theta^*}^2 \right]\nn\\
 & \overset{(i)}{\le}  \frac{2\emax}{(1-b)\emin} (\Phi_t-\Phi_{t+1}) + \frac{4H^2 \emax}{\emin^{2} (1-b)^{2}}. \label{eq:phifinal}
\end{align}
where we have used Proposition~\ref{prop:theta-bound} 
in (i).
Finally, from Eq.~\ref{eq:phifinal}, it readily follows 
that
\begin{equation*}
    \lim_{t \rightarrow \infty} \Phi_{t+1} \leq \frac{4H^2 \emax}{\emin^{2} (1-b)^{2}}.
\end{equation*}
% \end{proof}
%
This concludes the proof.

\begin{proposition}\label{prop:theta-bound}
Let $\ell(\cdot)$ be convex with respect to $\theta$ and thus the output of the SGD algorithm $\theta_t = \argmin_{\theta} L(\pi^{*}_{m_{\theta_{t-1}}}, m_{\theta})$. Moreover, assume that  $\hes \ell(m_{\theta}(\xb),y) \sdgeq \emin$, with $\emin >0$, and $\ell(\dot)$ be $H$-Lipschitz, \ie, $\ell(m_{\theta}(\xb),y)-\ell(m_{\theta'}(\xb),y) \le H\cdot\bnm{\theta-\theta'}$. Then, we have 
$\bnm{\theta_{t}-\theta^*} \le \frac{2H}{\emin(1-b)}$.
 \end{proposition}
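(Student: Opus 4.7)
The plan is to exploit two facts about $L(\pi, m_{\theta})$ as a function of $\theta$ for any fixed policy $\pi \in \Pi_b$: first, it is $\emin(1-b)$-strongly convex (which is exactly the lower bound in Eq.~\ref{eq:cx}); and second, by $H$-Lipschitzness of $\ell$, the gap $L(\pi, m_{\theta'}) - L(\pi, m_{\theta})$ is at most $H \|\theta' - \theta\|$. Combining these two estimates, once applied at the minimizer $\theta_t$, yields the stated bound directly.

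More concretely, I would first observe that, since $\nabla^2_\theta L(\pi, m_\theta) = \EE_{\xb}[(1-\pi(\xb))\, \nabla^2_\theta \ell(m_\theta(\xb), y)]$ and $\EE_{\xb}[1-\pi(\xb)] \geq 1-b$ for $\pi \in \Pi_b$, we have $\nabla^2_\theta L(\pi, m_\theta) \succcurlyeq \emin (1-b) \II$. Applying this to $\pi = \pi^*_{m_{\theta_{t-1}}, b} \in \Pi_b$ and using that $\theta_t = \argmin_\theta L(\pi^*_{m_{\theta_{t-1}}, b}, m_\theta)$ (so that the gradient at $\theta_t$ vanishes), the quadratic lower bound from strong convexity gives
\begin{equation*}
L(\pi^*_{m_{\theta_{t-1}}, b}, m_{\theta^*}) - L(\pi^*_{m_{\theta_{t-1}}, b}, m_{\theta_t}) \,\geq\, \tfrac{\emin(1-b)}{2}\,\|\theta_t - \theta^*\|^2.
\end{equation*}

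Second, unrolling the definition of $L$ and using the $H$-Lipschitz hypothesis on $\ell$ pointwise in $(\xb, y)$, I would upper bound the same difference as
\begin{equation*}
L(\pi^*_{m_{\theta_{t-1}}, b}, m_{\theta^*}) - L(\pi^*_{m_{\theta_{t-1}}, b}, m_{\theta_t}) \,\leq\, \EE_{\xb}\!\left[(1-\pi^*_{m_{\theta_{t-1}}, b}(\xb))\right] \cdot H \|\theta^* - \theta_t\| \,\leq\, H\,\|\theta^* - \theta_t\|,
\end{equation*}
where the last inequality uses only $0 \leq \pi \leq 1$. Chaining the two bounds and dividing by $\|\theta_t - \theta^*\|$ (which may be assumed nonzero, else the claim is trivial) yields $\|\theta_t - \theta^*\| \leq \tfrac{2H}{\emin (1-b)}$, as required.

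There is no real obstacle here; the only thing to be careful about is that the strong convexity constant must be the \emph{restricted} constant $\emin(1-b)$ rather than $\emin$, since the Hessian of $L$ inherits only the fraction $\EE_{\xb}[1-\pi(\xb)] \geq 1-b$ of the per-sample Hessian lower bound. Once that observation is in place, both the strong-convexity lower bound and the Lipschitz upper bound are standard and the bound follows in one line.
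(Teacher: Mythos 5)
Your proof is correct and follows essentially the same route as the paper's: both sandwich the loss gap for a fixed feasible policy between the strong-convexity lower bound $\tfrac{\emin(1-b)}{2}\bnm{\theta_t-\theta^*}^2$ (with the restricted constant $\emin(1-b)$ from Eq.~\ref{eq:cx}) and the Lipschitz upper bound $H\bnm{\theta_t-\theta^*}$, then divide through. The only cosmetic difference is that you fix the policy $\pi^*_{m_{\theta_{t-1}},b}$ and expand around its minimizer $\theta_t$, whereas the paper fixes $\pi^*_{m_{\theta^*},b}$ and expands around $\theta^*$; either choice makes the first-order term vanish and yields the identical bound.
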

\begin{proof}
We have that
\begin{equation}
 \frac{\emin(1-b)}{2} \bnm{\theta_{t}-\theta^*}^2
% & \overset{(i)}{\le} L(\pi^* _{m_{\theta_{t+1}},b},m_{\theta_{t+1}})-L( \pi^* _{m_{\theta_{t+1}},b},m_{\theta^*}) \\
% & \overset{(ii)}{\le} L( \pi^* _{m_{\theta_{t+1}},b},m_{\theta_{t+1}})-L( \pi^* _{m_{\theta^*},b},m_{\theta^*}) \\
\overset{(i)}{\le} L( \pi^* _{m_{\theta^*},b} ,m_{\theta_{t}})-L( \pi^* _{m_{\theta^*},b},m_{\theta^*})
\le H \bnm{\theta_{t}-\theta^*} \label{eq:eeq},
\end{equation}
where 
% (i) follows from  Eq.~\ref{eq:cx}; inequality (ii) is due to $L( \pi^* _{m_{\theta^*},b},m_{\theta^*}) \le L( \pi^* _{m_{\theta_{t+1}},b},m_{\theta^*})$; and,
(i) follows from the Taylor series expansion of $L(\pi^* _{m_{\theta^*,b}},m_{\theta})$ around $\theta=\theta^*$.
% the fact that $\argmin_{\pi\in\Pi_b} L(\pi,m_{\theta_{t+1}}) = \pi^* _{m_{\theta_{t+1}},b}$.
Then, from Eq.~\ref{eq:eeq}, it readily follows
\begin{align}
\bnm{\theta_{t}-\theta^*} \le \frac{2H}{\emin(1-b)} 
\end{align}
\end{proof}
% }

\vspace{-7mm}
\section{Scalability Analysis of Algorithm~\ref{alg:sgd}} \label{app:sgd}
In comparison with vanilla SGD, our algorithm just needs to additionally call the function $\textsc{Triage}$ before each iteration. This function first sorts the samples in the corresponding
minibatch in decreasing order of the model loss minus the human loss and then returns the first $\max(\lceil (1-b)|\Dcal|\rceil, p)$ samples. 
% for human predictions, where $p$ is the number of samples $(\xb,y)$ with $\ell(m_{\theta}(\xb), y) < \ell(h, y)$.
%%
Overall, this adds $O(T |\Dcal| \log B)$ to the overall complexity of the training procedure with respect to vanilla SGD, where $B$ is the size of the minibatch used during training, $\Dcal$ is 
the training dataset, and $T$ is the number of steps. 
%
% elements which takes $\Ocal(BlogB)$. We know that SGD update inside each batch has the time complexity of $\Ocal(BK$) with $K$ being the number of the learnable parameters of $\mac$. Hence, as long as $log(B)<K$ (which is almost always the case since we use batch sizes of at most few hundreds but the CNNs we use can have milions of learnable parameters), our approach has the same time complexity as the full automation method.
%
Furthermore, note that the function $\textsc{Approxi\-ma\-teTria\-ge\-Po\-li\-cy}$ is called only once and use SGD to train the approximate triage policy of the last predictive
model. Therefore, it does not increase the computational complexity of the overall algorithm. 

\section{Additional Details About the Experiments on Real Data}
\label{app:exp-details}
% \xhdr{Implementation Details}
%
In what follows, we provide additional details regarding the implementation of our method as well as the baselines for the experiments on
real data: %, which are described as follows:

\begin{itemize}[leftmargin=0.8cm]

\item[---] {Our method: During training, it runs Algorithm~\ref{alg:sgd}.
During test, it lets the humans predict any sample for which $\hat{\pi}_{\gamma}(\xb) \geq \hat{p}_b$, where the threshold $\hat{p}_b$ is found using cross validation.
% 
% In order to find the triage level $\hat{b}\le b$  
% In order to choose the samples which should be assigned to machine at test time, we sort the predictions of the fitted model $\hat{\pi}_\gamma$ in increasing order and pick the first $(1-b)|\Dcal|$ samples for machine and outsource the remaining samples to human.
}

% \item[---] Confidence-based triage~\citep{bansal2020optimizing}: it simultaneously trains a predictive model and a triage policy using stochastic gradient descend. 
% However, at each step $t$, the triage policy is set heuristically to 
% $\pi_{m_{\theta_t},b}(\xb)=1$ if $\max_{y\in\Ycal} P(m_{\theta_t}(\xb)=y) < \max_{y\in\Ycal} P(h=y)$ and $\pi_{m_{\theta_t},b}(\xb)=0$, otherwise ( \nastaran{a point is outsourced to human if the confidence of human is more than that of machine). In order to satisfy the constraint $b$, we sort the samples based on the difference of human and machine confidence and pick the first $(1 - b)|\Dcal|$ samples on which this difference is negative. We follow the exact procedure to select points for the machine at test time. Note that since in~\citep{bansal2020optimizing} they consider a constant error for human, we use the average confidence of human as the human confidence for all samples.}
\item[---] Confidence-based triage~\citep{bansal2020optimizing}: 
During training, it first estimates the probability $P(h = y)$ that humans predict the true label. Then, it proceeds sequentially and, at each step $t$, it uses SGD to train a 
predictive model $m_{\theta_t}$. 
However, in each iteration of SGD, it only uses the $\min(\lfloor b|\Dcal| \rfloor,n_c)$ training samples with the lowest value of $P(h=y) - \max_{y' \in\Ycal} P(m_{\theta}(\xb)=y')$
in the corresponding mini batch, where $n_c$ is the number of training samples in the mini batch for which $P(h=y)>\max_{y' \in\Ycal} P(m_{\theta}(\xb)=y')$.
During test, it first sorts all the samples in increasing order of $\max_{y' \in\Ycal} P(m_{\theta}(\xb)=y')$ and then lets the humans
predict the first $\min(\lfloor b|\Dcal| \rfloor,n_c)$ samples\footnote{\scriptsize Here, note that the method assumes that the humans are uniformly accurate 
across samples, \ie, $P(h = y \given \xb) = P(h = y)$, both during training and test.}, where $n_c$ is the number of test samples for which $P(h=y)>\max_{y' \in\Ycal} P(m_{\theta}(\xb)=y')$.

% We compare our method with several c

% \emph{--- Score-based triage}~\citep{raghu2019algorithmic}: It trains a predictive model trained under full automation $m_{\theta_0}$ with optimal algorithmic triage $\pi^*_{m_{\theta_0}}$. 
% % % \item[---] Score-based triage~\citep{raghu2019algorithmic}: it trains a predictive model $m_{\theta_0}$ trained under full automation \nastaran{and at test time, outsources the samples on which machine is least certain to human. In order to satisfy the constraint $b$, we sort the samples based on machine confidence ($\max_{y\in\Ycal} P(m_{\theta_t}(\xb)=y)$) and outsource the first $b$ elements to human and let the machine predict the remaining samples.}
\item[---] Score-based triage~\citep{raghu2019algorithmic}: 
During training, it uses SGD to train a predictive model $m_{\theta}$ using all the training samples.
During test, it first sorts all the samples in increasing order of $\max_{y' \in\Ycal} P(m_{\theta}(\xb)=y')$ and then lets the humans
predict the first $\lfloor b|\Dcal| \rfloor$ samples.
Here, note that the method always lets the humans predict $\lfloor b|\Dcal| \rfloor$ samples because its triage policy does not depend
on the human loss.
%
% More specifically, it predicts that top $b$
% fraction of samples sorted in the decreasing order of differences between the 
% Following the previous work~\citep{de2020aaai}, we first produce a  100 dimensional feature vector using fasttext for each sample in the Hatespeech
% dataset

\item[---] Surrogate-based triage~\citep{sontag2020}: 
During training, it trains a predictive model $m_{\theta}$, where $\pi(\xb) = 1$ is just an extra label value $y_{\text{defer}}$, by minimizing a surrogate of the true loss function 
defined in Eq.~\ref{eq:joint-loss}. 
During test, it first sorts all the samples in increasing order of $\max_{y' \in \Ycal} P(m_{\theta}(\xb) = y') - P(m_{\theta}(\xb) = y_{\text{defer}})$ and then lets the human
predict the first $min(\lfloor b|\Dcal| \rfloor,n_c)$ samples where $n_c$ is the number of test samples for which 
$P(m_{\theta}(\xb) = y_{\text{defer}}) >\max_{y' \in\Ycal} P(m_{\theta}(\xb)=y')$.

%
% Here, note that the original implementation by the authors lets the human predicts always $\lfloor b|\Dcal| \rfloor$ samples, however, for a fair comparison with our method,
% we allow the baseline not to use the entire budget\footnote{\scriptsize If we allow the baseline to use always the entire budget, the performance becomes worse.}.
%
% then select the then take the topmost b. (P_perp(x) is P(m(x) = the added class) which corresponds to human predictions.
%
% \nastaran{We follow the approach described in Section 6.6.1 of~\citep{sontag2020} in order to select samples for human predictions.}

\item[---] Full automation triage: During training, it uses SGD to both train a predictive model $m_{\theta}$ using all training samples and an approximate triage 
policy $\hat{\pi}_{\gamma}$ that approximates the optimal triage policy $\pi^{*}_{m_{\theta}, b}$.
During test, it lets the humans predict any sample for which $\hat{\pi}_{\gamma}(\xb) \geq \hat{p}_b$, where the threshold $\hat{p}_b$ is found using cross validation.
\end{itemize}

% \nastaran{All the baselines use the same function family as the machine model as our method. For both the models $\mac$ and $\hat{\pi}$, we use Adam optimizer~\cite{kingma2017adam} with initial learning rate set by cross validation for each baseline and each level of triage $b$. In order to decide when to stop training, we use early stopping when no improvement is observed on the validation set for 10 consecutive epochs. The model having the best performance on the validation set is then picked as the final model.}
%
In our experiments, our method and all the baselines use the hypothesis class of parameterized predictive models $\Mcal(\Theta)$ parameterized by softmax distributions, \ie,
\begin{align}
 m_{\theta}(\xb) \sim  p_{\theta; \xb} =\text{Multinomial}\left(\left[\exp\left(\phi_{y, \theta}(\xb)\right)\right]_{y\in\Ycal}\right),\nn\\[-4ex]\nn
\end{align}
where, for the nonlinearities $\phi_{\bullet, \theta}$, we use the following network architectures:
\begin{itemize}[leftmargin=0.8cm]
\item[---] Hatespeech dataset: we use the convolutional neural network (CNN) developed by~\cite{kim2014convolutional} for text classification, which consists 
of $3$ convolutional layers with filter sizes $\{3,4,5\}$, respectively and with $300$ neurons per layer. Moreover, each layer is followed by a ReLU non-linearity and 
a max pooling layer. 

\item[---] Galaxy Zoo: we use the deep residual network developed by~\cite{he2015deep}. To this end, we first downsample each RGB channel of each of the
images to size $224\times224$ and standardize its values\footnote{\scriptsize \url{https://pytorch.org/hub/pytorch_vision_resnet/}}. The wide residual network consists of $50$ convolutional 
layers. The first layer is a $7\times7$ convolutional layer followed by a $3\times3$ max pooling layer. The next $48$ convolutional layers have filter sizes of $1\times1$ or 
$3\times3$ which are followed by an average pooling layer. The last layer is a fully connected layer. Each convolution layer is followed by ReLU nonlinearity.
\end{itemize}
%
% For both the models $\mac$ and $\hat{\pi}$, 
%
In our method and all the baselines except surrogate-based triage, we use the cross-entropy loss and implement SGD using Adam optimizer~\citep{kingma2017adam} 
with initial learning rate set by cross validation independently for each method and level of triage $b$. 
In surrogate-based triage, we use the loss and optimization method used by the authors in their public implementation.
%
% In our method and all the baselines except surrogate-based triage, we implement SGD using the Adam optimizer~\cite{kingma2017adam} with initial learning 
% rate set by cross validation independently for each method and level of triage $b$. 
%
Moreover, we use early stopping with the patience parameter $e_p=10$, \ie, we stop the training process if no reduction of cross entropy loss
is observed on the validation set. 
%
%
% Finally, we choose the model which shows best performance on the validation set.
% \xhdr{Dataset Details} 
%
%
Finally, to avoid that the cross entropy loss $\ell(\hat{y}, y)$ becomes unbounded whenever an instance is assigned to a human expert and all 
human experts predicted the same label for that instance in our dataset, we do add/substract an $\epsilon$ value to the estimated values of the conditional
probabilities $P(h \given \xb)$.

% \xhdr{Details about $g_{\gamma}(\xb)$}
% For the function $g_{\gamma}(\xb)$, we use the class of logistic functions, \ie, $g_{\gamma}(\xb)=\frac{1}{1+\exp(-\phi_{\gamma}(\xb))}$. For the Hatespeech dataset,  
% $\phi_{\gamma}$ has the same structure as $\phi_{\bullet, \theta}$ which is a convolutional neural network (CNN) developed by~\cite{kim2014convolutional}. For the galaxy-zoo dataset, 
% $g_{\gamma}(\xb)=\frac{1}{1+\exp(-\phi_{\gamma}(\max_{y\in\Ycal} \log(P(m_{\theta}(\xb) = y))}$ where $\phi$ is a neural network with 1 hidden layer and 100 hidden neurons.
%$\phi_{\gamma}(\xb)=\beta_{\gamma}(\max_{y\in\Ycal} \log(P(m_{\theta}(\xb) = y)))$ and $\beta_{\gamma}$ is a  two layer neural network with 100 hidden units and linear activation functions.
\newpage
\section{Additional Evaluation of the Approximate Triage Policy} \label{app:triage}
Figure~\ref{fig:approximate-triage} shows the ratio of model and human losses for those test samples predicted by the model and test samples predicted by 
the humans, as dictated by the approximate triage policy $\hat{\pi}_{\gamma}$, for different values of the maximum level of triage $b$.
We find several interesting insights.
We observe that the approximate triage policy $\hat{\pi}_{\gamma}$ lets the humans predict those samples whose ratio of model and human losses 
is higher, as one could have expected. 
Moreover, in the Hatespeech dataset, we find that the triage policy lets humans predict (almost) all the samples whenever $b = 1$ ($b = 0.8$), \ie, the budget
constraint in the optimization problem defined by Eq.~\ref{eq:optimization-problem} is active.
This suggests that the humans are more accurate than the predictive model throughout the entire feature space. 
In contrast, in the Galaxy zoo dataset, the triage policy does not rely on the human predictions for all samples for $b = 1$. This suggests that the humans 
are less accurate than the predictive model in some regions of the feature space.
\begin{figure*}[!t]
% \captionsetup[subfigure]{labelformat=empty}
\centering
\subfloat[Hatespeech]{
\scriptsize
\stackunder[2pt]{\includegraphics[width=0.23\textwidth]{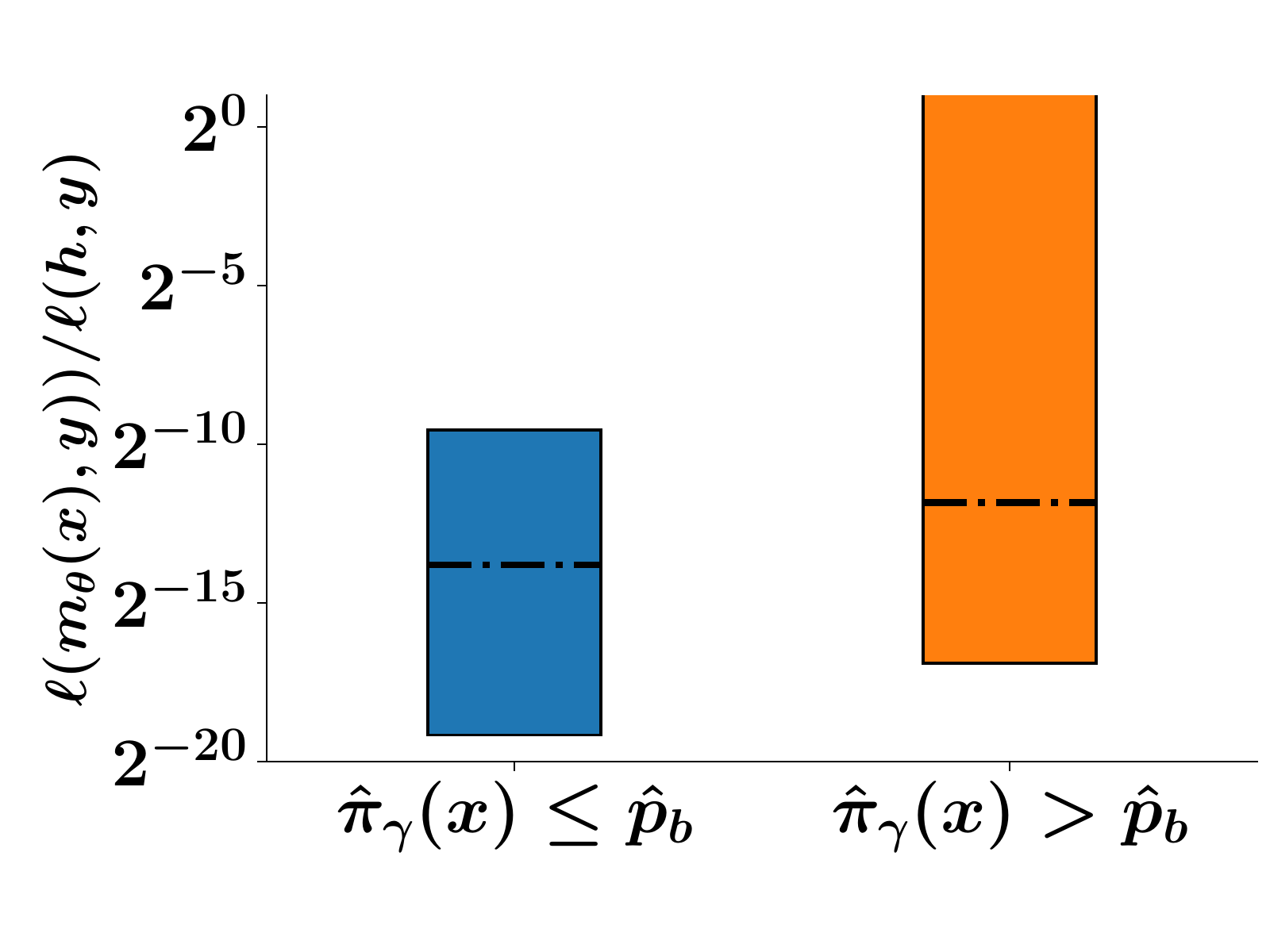}}{$b=0.4, \hat{p}_b=0$} \hspace*{0.2cm}
\stackunder[2pt]{\includegraphics[width=0.23\textwidth]{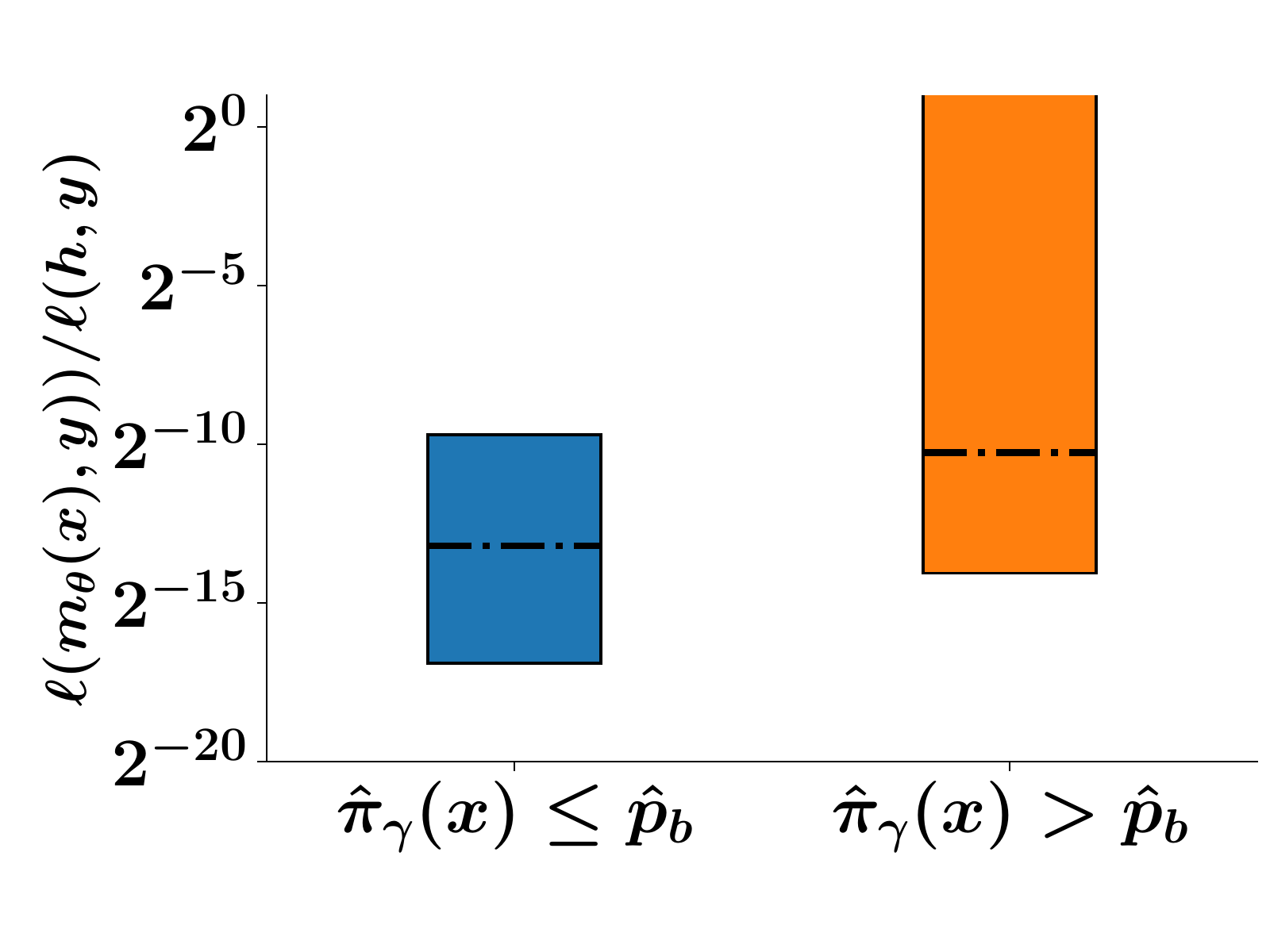}}{$b=0.6, \hat{p}_b=0$} \hspace*{0.2cm}
\stackunder[2pt]{\includegraphics[width=0.23\textwidth]{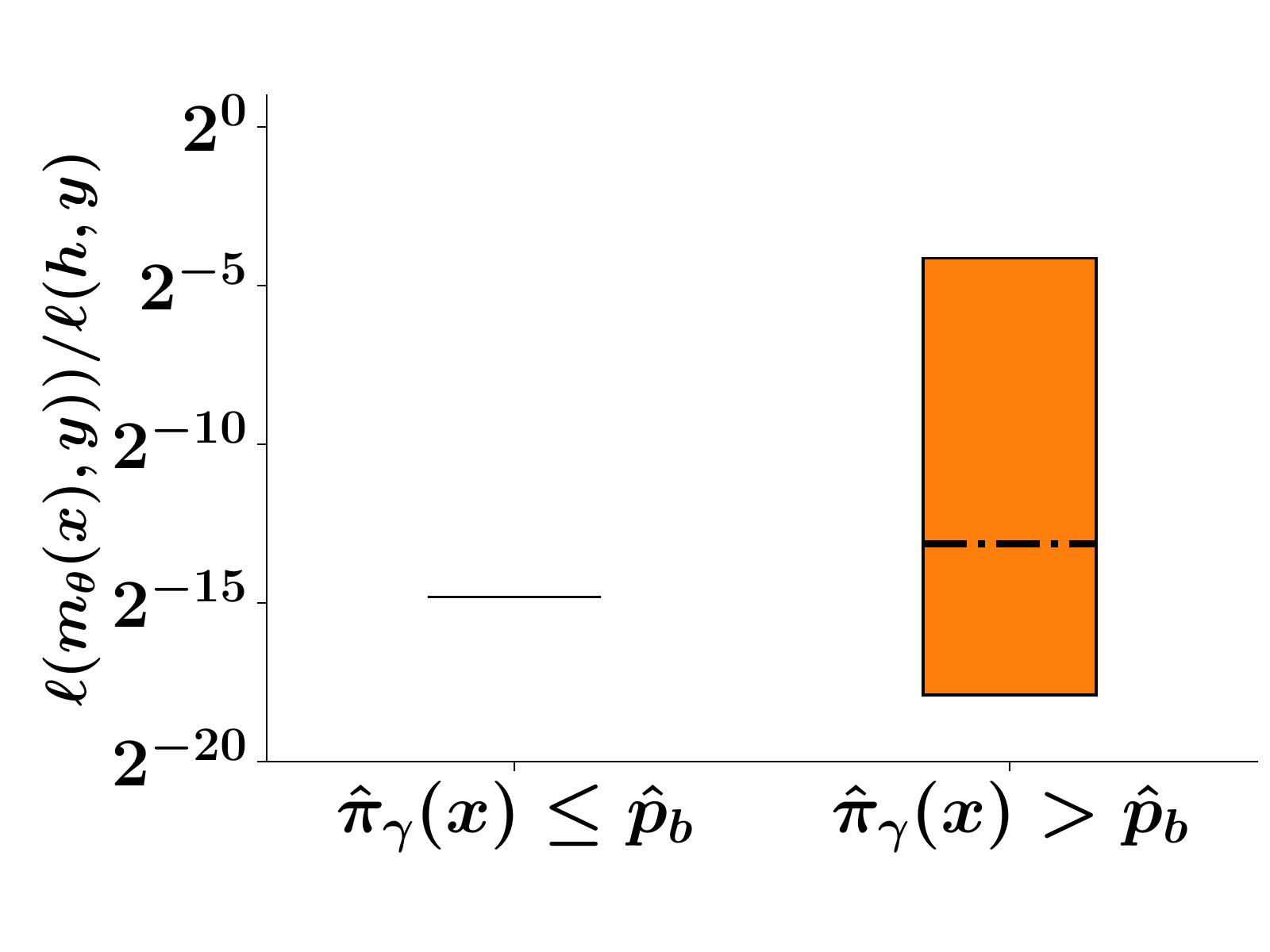}}{$b=0.8, \hat{p}_b=0$} \hspace*{0.2cm}
\stackunder[2pt]{\includegraphics[width=0.23\textwidth]{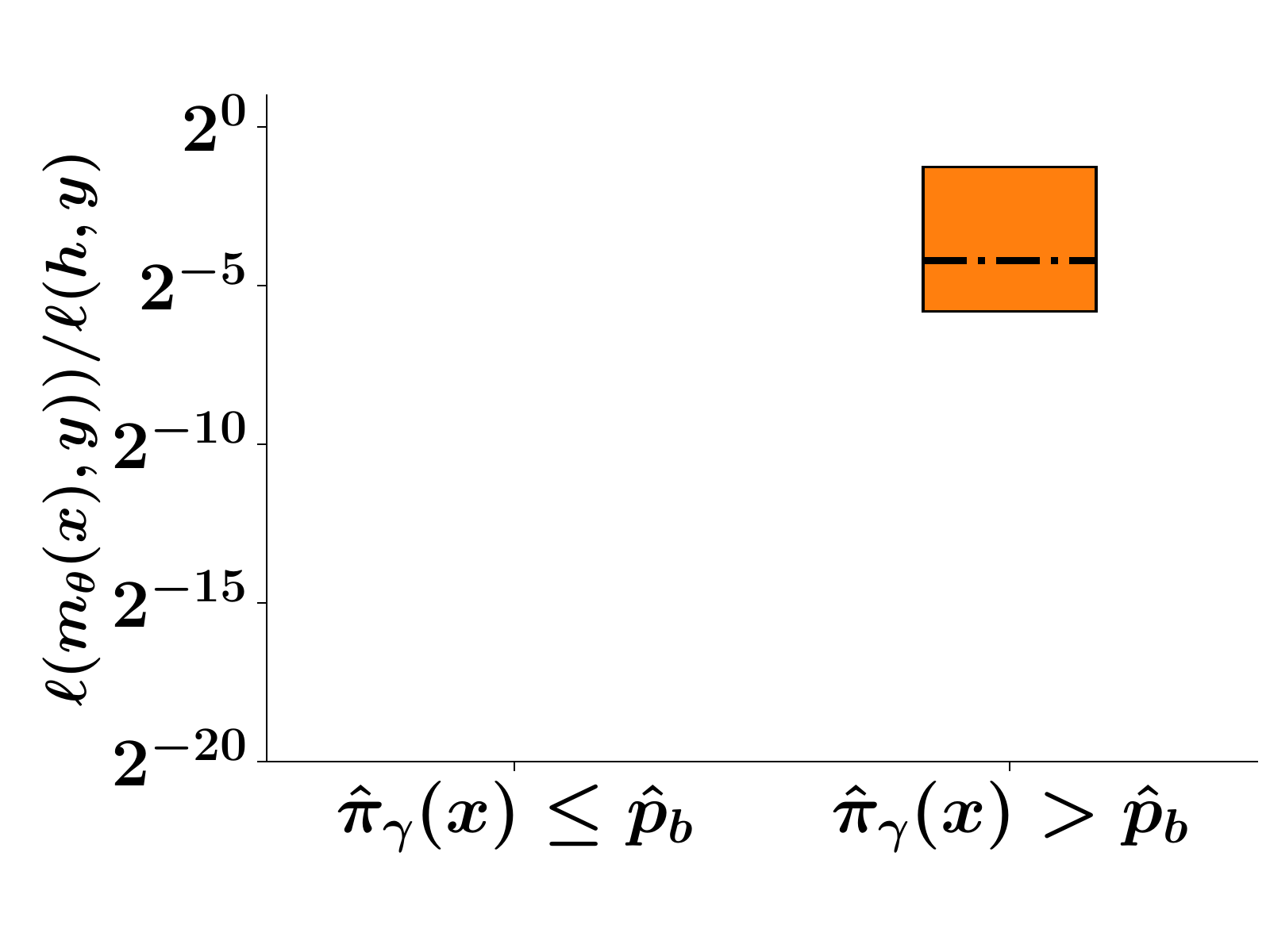}}{$b=1.0, \hat{p}_b=0$}
} \\
\subfloat[Galaxy zoo]{
\scriptsize
\stackunder[2pt]{\includegraphics[width=0.23\textwidth]{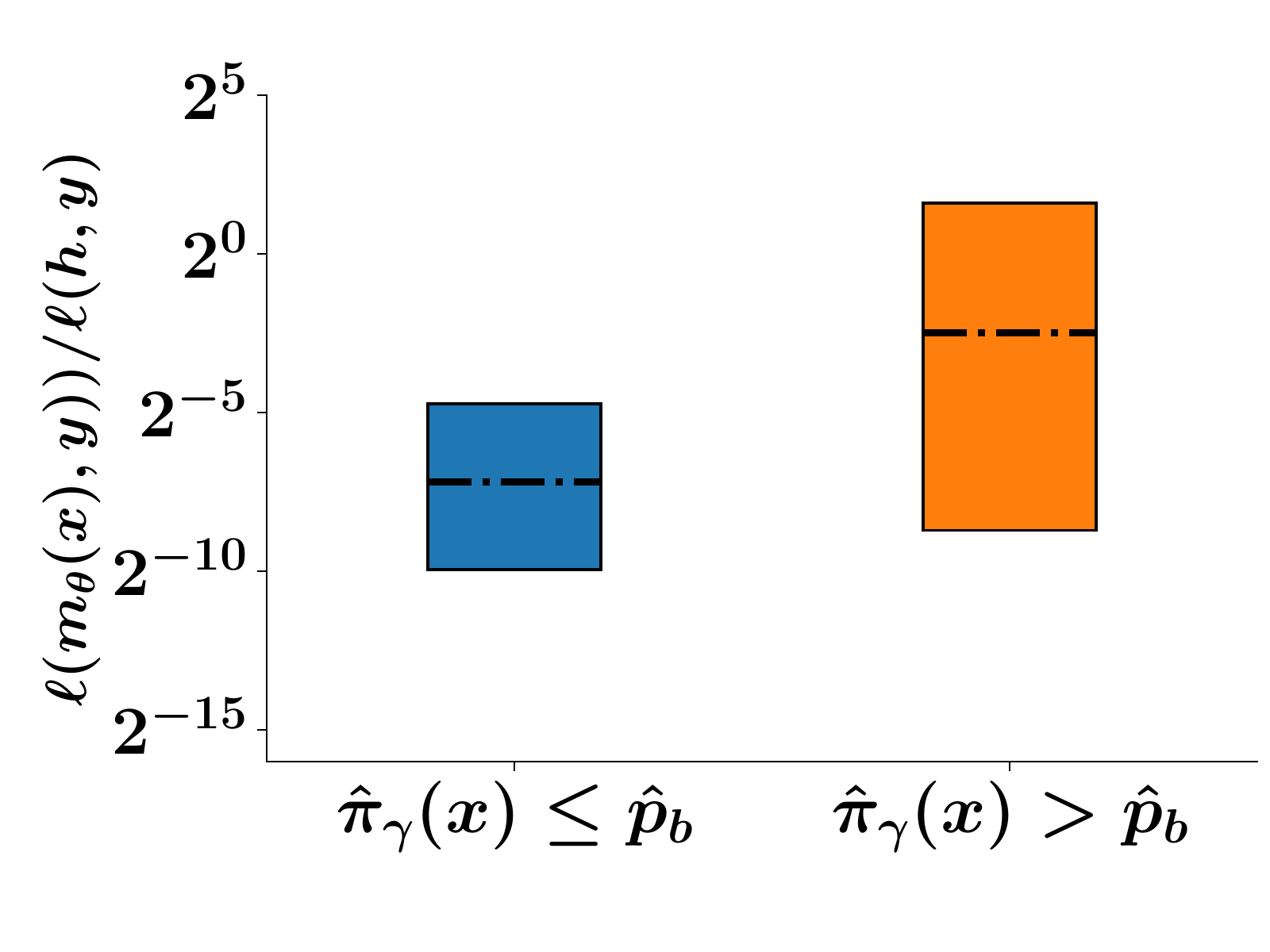}}{$b=0.4, \hat{p}_b=0.25$} \hspace*{0.2cm}
\stackunder[2pt]{\includegraphics[width=0.23\textwidth]{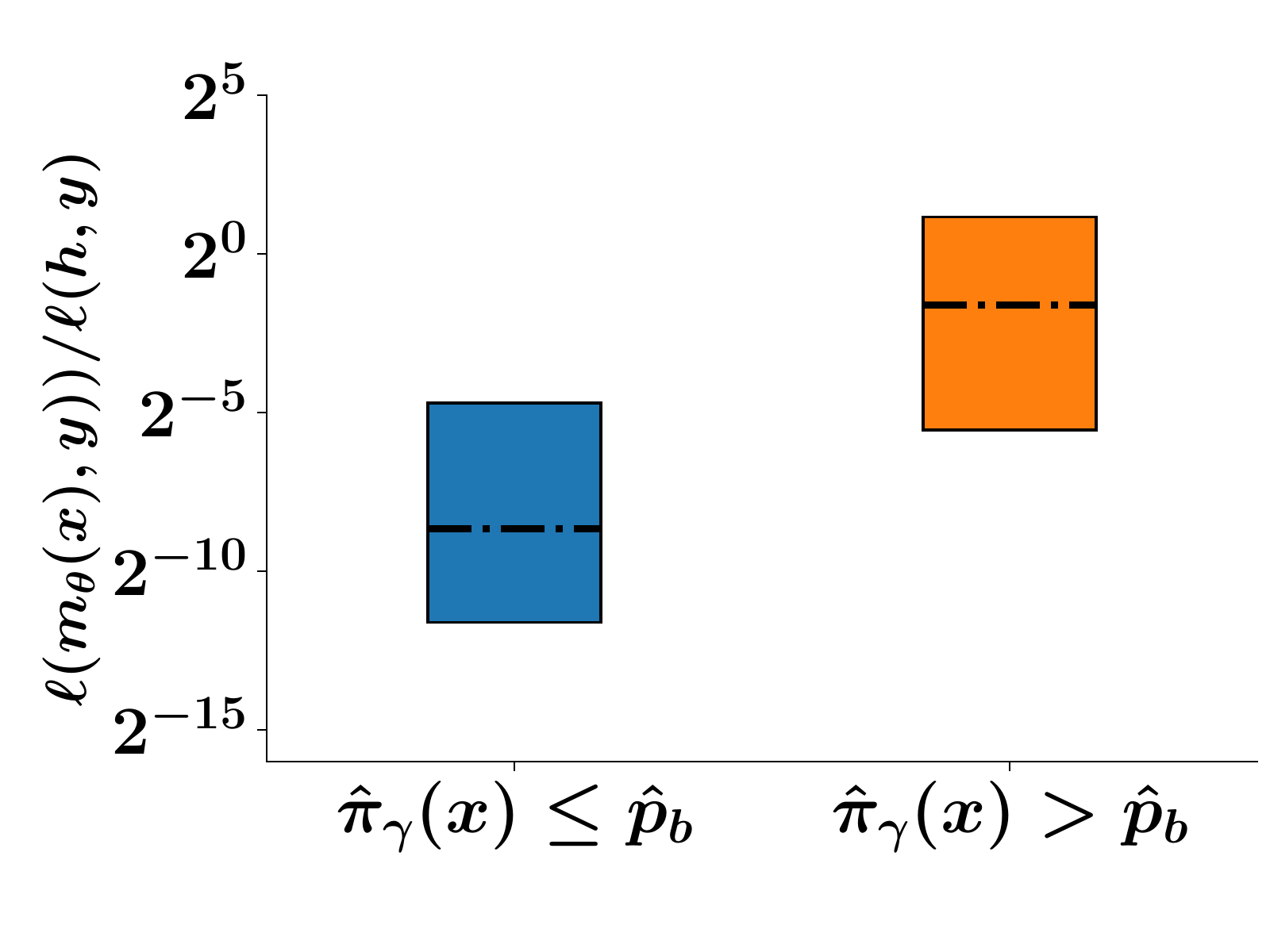}}{$b=0.6, \hat{p}_b=0.21$} \hspace*{0.2cm}
\stackunder[2pt]{\includegraphics[width=0.23\textwidth]{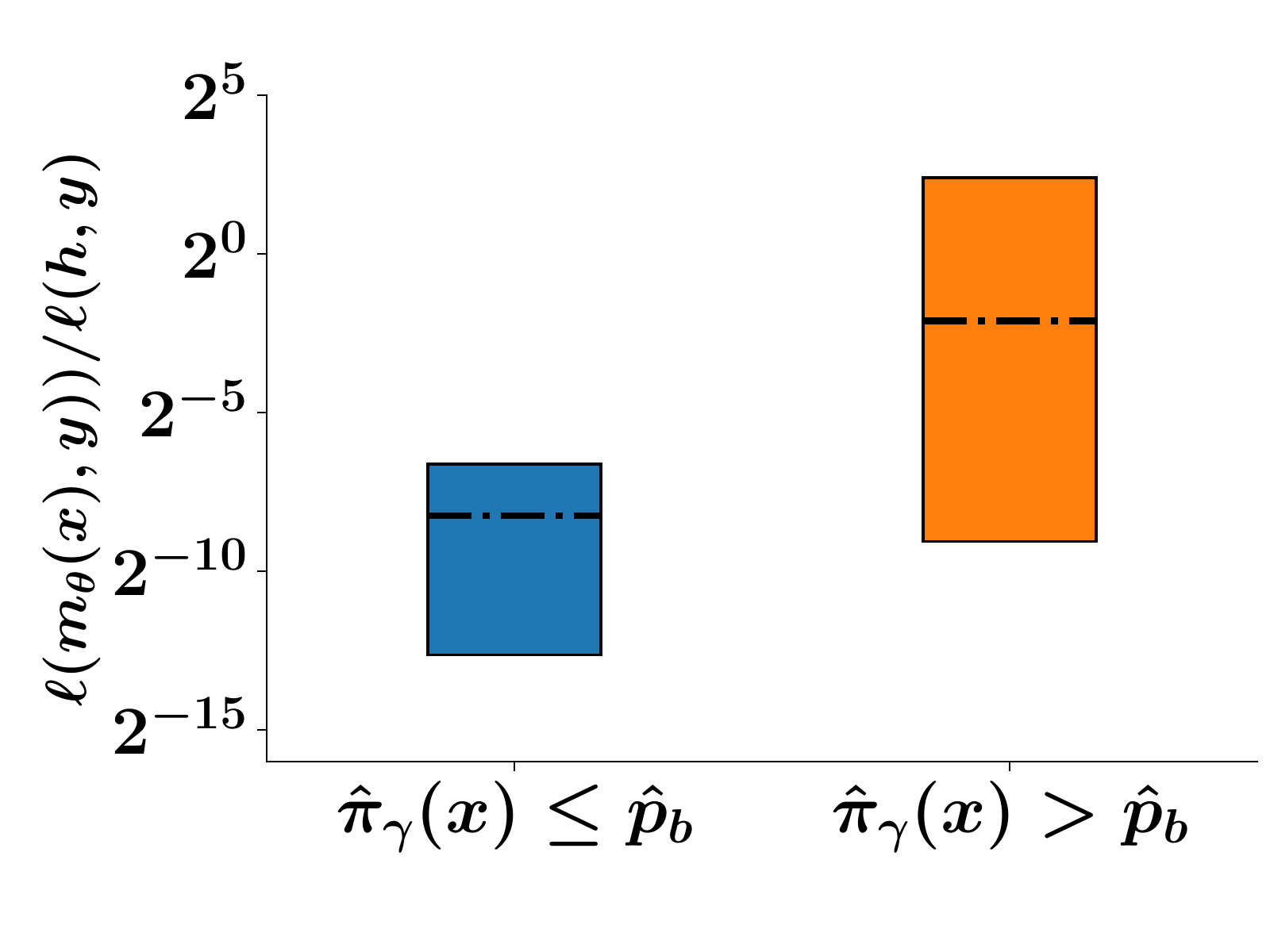}}{$b=0.8, \hat{p}_b=0.21$} \hspace*{0.2cm}
\stackunder[2pt]{\includegraphics[width=0.23\textwidth]{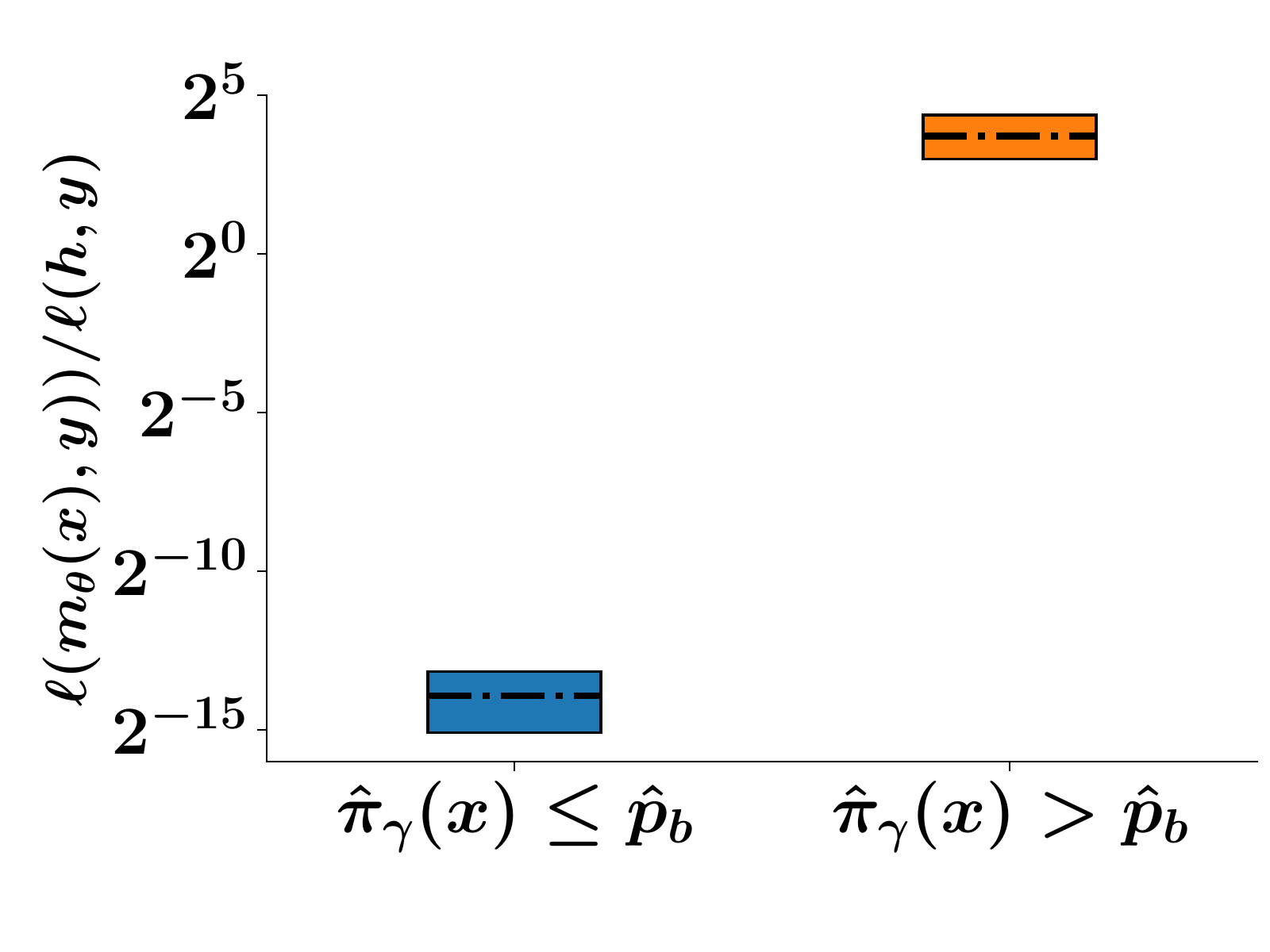}}{$b=1.0, \hat{p}_b=0.37$}
}
\caption{Ratio of model and human losses for test samples predicted by the model and test samples predicted by the humans, as dictated by the 
approximate triage policy $\hat{\pi}_{\gamma}$, for different values of the maximum level of triage $b$. In each panel, the threshold $\hat{p}_b$ is found using 
cross validation. Boxes indicate 25\% and 75\% quantiles and the horizontal lines indicate median values.}
\label{fig:approximate-triage}
\end{figure*}

\end{document}